\def\usearxivstyle{1}  
\ifnum\usearxivstyle=1
\newcommand{\arxivstyle}[2]{#1}
\else
\newcommand{\arxivstyle}[2]{#2}
\fi

\documentclass[10.5pt]{article}
\usepackage{fullpage}
\makeatletter
\long\def\@makecaption#1#2{
  \vskip 0.8ex
  \setbox\@tempboxa\hbox{\small {\bf #1:} #2}
  \parindent 1.5em  
  \dimen0=\hsize
  \advance\dimen0 by -3em
  \ifdim \wd\@tempboxa >\dimen0
  \hbox to \hsize{
    \parindent 0em
    \hfil 
    \parbox{\dimen0}{\def\baselinestretch{0.96}\small
      {\bf #1.} #2
    } 
    \hfil}
  \else \hbox to \hsize{\hfil \box\@tempboxa \hfil}
  \fi
}
\makeatother

\usepackage{times,enumitem,booktabs,multicol,blindtext,url,parskip}
\usepackage{amsfonts}
\usepackage{amsmath}
\usepackage{amsthm}
\usepackage{amssymb}
\usepackage{caption}
\usepackage{microtype}
\usepackage{graphbox}
\usepackage{subfigure}
\usepackage{booktabs} 
\usepackage{graphicx}
\usepackage{natbib}
\usepackage{fancyhdr}
\usepackage{color}
\usepackage{algorithm}
\usepackage{algorithmic}
\usepackage{forloop}
\usepackage{xcolor}
\usepackage{ulem}

\usepackage{placeins}

\usepackage{amsmath}
\usepackage{amssymb}
\usepackage{mathtools}
\usepackage{amsthm}
\usepackage{thmtools}
\usepackage{thm-restate}

\newtheorem{theorem}{Theorem}

\newtheorem{prop}[theorem]{Proposition}

\newcommand{\set}[1]{\left\{#1\right\}}

\newcommand{\strongaugs}{strong-augs}

\newcommand{\sourcedist}{P_S}
\newcommand{\targetdist}{P_T}
\newcommand{\unlabeldist}{P_U}


\newcommand{\inputx}{x}
\newcommand{\labelvar}{y}
\newcommand{\labelx}{y_x}
\newcommand{\inputxp}{{x'}}
\newcommand{\labelxp}{y_{x'}}
\newcommand{\domainvar}{d}
\newcommand{\domainx}{d_x}
\newcommand{\domainxp}{d_{x'}}
\newcommand{\numcls}{r}

\newcommand{\embeddim}{k}

\newcommand{\cdsize}{n}
\newcommand{\posx}{\inputx^+}
\newcommand{\pospairdist}{S_+}

\newcommand{\inputspace}{\sX}
\newcommand{\labelspace}{\sY}

\newcommand{\aug}{\mathcal{A}}

\newcommand{\augproba}{{\alpha'}}
\newcommand{\augprobb}{{\beta'}}
\newcommand{\augprobg}{{\gamma'}}
\newcommand{\augprobr}{{\rho'}}

\newcommand{\pairproba}{{\alpha}}
\newcommand{\pairprobb}{{\beta}}
\newcommand{\pairprobg}{{\gamma}}
\newcommand{\pairprobr}{{\rho}}
\newcommand{\pairnorm}{{C}}

\newcommand{\posclass}{1}
\newcommand{\negclass}{2}

\newcommand{\sbmg}{G}

\newcommand{\sbme}{E}

\newcommand{\encoder}{\phi}
\newcommand{\empencoder}{\widehat{\encoder}}
\newcommand{\empencoderMatrix}{\widehat{\Phi}}
\newcommand{\empencoderdann}{\widehat{\encoder}_\text{dann}}
\newcommand{\head}{h}
\newcommand{\dhead}{{\head}_{\text{dom}}}
\newcommand{\emphead}{\widehat{\head}}
\newcommand{\empheaddann}{\widehat{\head}_\text{dann}}
\newcommand{\clf}{f}
\newcommand{\empclf}{\widehat{\clf}}
\newcommand{\linmat}{B}
\newcommand{\emplinmat}{\widehat{\linmat}}
\newcommand{\empclferm}{\widehat{\clf}_\text{erm}}
\newcommand{\empclfdann}{\widehat{\clf}_\text{dann}}
\newcommand{\domainclf}{\dhead}

\newcommand{\lpretrain}{\sL_\text{pretrain}}

\newcommand{\lfinetune}{\sL}

\newcommand{\Lzeroone}{\sL_{0-1}}
\newcommand{\Lerm}{\sL_\text{ERM}}
\newcommand{\Ldann}{\sL_\text{DANN}}
\newcommand{\LDzeroone}{\sL^{D}_{0-1}}


\newcommand{\hdh}{\mathcal{H}\Delta \mathcal{H}}
\DeclareMathSymbol{\shortminus}{\mathbin}{AMSa}{"39}

\newcommand{\embedding}{\widehat{F}}

\newcommand{\sourcedata}{S}

\newcommand{\targetdata}{T}

\newcommand{\acrossdomain}{\alpha}
\newcommand{\acrossclass}{\beta}
\newcommand{\acrossboth}{\gamma}
\newcommand{\withinboth}{\rho}

\newcommand{\indicator}{\mathbf{1}}

\DeclareMathOperator*{\argmin}{arg\,min}
\DeclareMathOperator*{\argmax}{arg\,max}

\newlength{\widebarargwidth}
\newlength{\widebarargheight}
\newlength{\widebarargdepth}


\newcommand\sL{\ensuremath{\mathcal{L}}}

\newcommand\sR{\ensuremath{\mathcal{R}}}
\newcommand\sS{\ensuremath{\mathcal{S}}}
\newcommand\sT{\ensuremath{\mathcal{T}}}

\newcommand\sX{\ensuremath{\mathcal{X}}}
\newcommand\sY{\ensuremath{\mathcal{Y}}}





\newcommand\R{\ensuremath{\mathbb{R}}} 
\newcommand\Z{\ensuremath{\mathbb{Z}}} 

\ifthenelse{\isundefined{\definition}}{}{}
\ifthenelse{\isundefined{\assumption}}{}{}
\ifthenelse{\isundefined{\hypothesis}}{}{}
\ifthenelse{\isundefined{\proposition}}{}{}
\ifthenelse{\isundefined{\theorem}}{\newtheorem{theorem}{Theorem}}{}
\ifthenelse{\isundefined{\lemma}}{\newtheorem{lemma}{Lemma}}{}
\ifthenelse{\isundefined{\corollary}}{}{}
\ifthenelse{\isundefined{\alg}}{}{}
\ifthenelse{\isundefined{\example}}{}{}
\newcommand{\E}{\ensuremath{\mathbb{E}}} 

\setlength{\parindent}{0em}

\def\shownotes{0}  
\ifnum\shownotes=1
\newcommand{\authnote}[2]{[#1: #2]}
\else
\newcommand{\authnote}[2]{}
\fi
\newcommand{\pl}[1]{{\color{red}\authnote{PL}{#1}}}

\newcommand{\ak}[1]{{\color{blue}\authnote{AK}{#1}}}

\DeclareMathOperator{\Exp}{\mathrm{\mathbb{E}}}

\DeclareMathOperator{\Real}{\mathbb{R}}

\usepackage{hyperref}
\hypersetup{
    colorlinks=true,
    allcolors=blue
}


\begin{document}


\renewcommand*{\thefootnote}{\fnsymbol{footnote}}
\begin{center}
  {\LARGE Connect, Not Collapse: Explaining Contrastive Learning for \\
  \vspace{0.1cm}
   Unsupervised Domain Adaptation} \\
  \vspace{.4cm}
  {\large Kendrick Shen\footnotemark[\value{footnote}]\footnote{Equal contribution.} ~~~~ Robbie Jones\footnotemark[\value{footnote}] ~~~~ Ananya Kumar\footnotemark[\value{footnote}] ~~~~ Sang Michael Xie\footnotemark[\value{footnote}] \\
  \vspace{0.1cm}
  Jeff Z. HaoChen ~~~~ Tengyu Ma ~~~~ Percy Liang} \\
  \vspace{.4cm}
  {\large Stanford University} \\
  \vspace{.05cm}
  Department of Computer Science \\
  \vspace{.4cm}
  \texttt{\{kshen6,\,rmjones,\,ananya,\,xie,\,jhaochen,\,tengyuma,\,pliang\}@cs.stanford.edu}
  \vspace{.2cm}
\end{center}

\begin{abstract}
We consider unsupervised domain adaptation (UDA), where labeled data from a source domain (e.g., photos) and unlabeled data from a target domain (e.g., sketches) are used to learn a classifier for the target domain.
Conventional UDA methods (e.g., domain adversarial training) learn domain-invariant features to generalize from the source domain to the target domain.
In this paper, we show that contrastive pre-training, which learns features on unlabeled source and target data and then fine-tunes on labeled source data, is competitive with strong UDA methods.
However, we find that contrastive pre-training does not learn domain-invariant features, diverging from conventional UDA intuitions.
We show theoretically that contrastive pre-training can learn features that vary subtantially across domains but still generalize to the target domain, by disentangling domain and class information.
We empirically validate our theory on benchmark vision datasets.
\end{abstract}

\section{Introduction}
Machine learning models can perform poorly when the train and test data are drawn from different distributions, which is especially troublesome for performance-critical applications such as image recognition for self-driving cars~\citep{yu2020bdd100k,sun2020scalability} or medical image diagnosis~\citep{albadawy2018tumor,dai2018dark}.
In this work, we study the unsupervised domain adaptation (UDA) setting where we have access to labeled data from a source domain and unlabeled data from a target domain, and the goal is to get high accuracy on the target domain.

Conventional algorithms for UDA aim to learn domain-invariant features~\citep{tzeng2014confusion,ganin2016domain,tzeng2017domain,shu2018dirtt,sun2019unsupervised}---intuitively, if the distributions over features for the source and target domains are indistinguishable and the accuracy is high on the source, then the accuracy should be high on the target as well%
.
This is typically intuitively justified by theoretical notions such as $\hdh$-divergence, which measures the distinguishability of source and target feature spaces~\citep{ben2010theory}.
However,~\citet{zhao2019zhao} show that domain invariance is not sufficient for target generalization, and thus some recent works have begun to develop principled algorithms for domain adaptation~\citep{kumar2020gradual,wei2021expansion,cai2021subpopulation}.


In this paper, we find that a surprisingly simple and effective method for UDA is out-of-the-box contrastive pre-training on source and target unlabeled data, followed by fine-tuning on source labeled data.
In our experiments, contrastive pre-training obtains comparable or better results to strong UDA methods based on domain adversarial neural networks~\citep{ganin2016domain,shu2018dirtt} and self-training~\citep{prabhu2021sentry} on visual adaptation benchmarks including DomainNet, BREEDS Living-17, BREEDS Entity-30, and STL-10$\to$CIFAR-10 (results in Table~\ref{table:main-empirical}).

However, we show that contrastive pre-training diverges from conventional UDA intuitions and learns features that are \textit{easily separable} between domains; for example in the learned feature space in DomainNet, we can predict the domain of an image with only 8\% error, which is much lower than in the DANN feature space (14\%)---see Table~\ref{table:connectivity} in Section~\ref{sec:connectivity}.
In fact, in the contrastive pre-trained feature space for DomainNet, it is as easy to distinguish betweeen two domains as it is to distinguish between two classes.

How does contrastive pre-training learn features that generalize to the target domain without domain invariance?
To theoretically analyze contrastive pre-training, we consider a graph (see Figure~\ref{fig:1}) where input examples are nodes and the edge weight between two inputs is the probability that they are generated by sampling two augmentations of the same original input.
By considering the total edge weight between two class-domain 
pairs (e.g., all edges connecting a clock photo to a butterfly sketch), we can define \textit{class-domain connectivities}. These connectivities describe how much overlap to expect amongst the randomly augmented inputs from two class-domain pairs.
Our key assumption intuitively states that augmentations of clock photos should be more similar to clock sketches and butterfly photos than to butterfly sketches. 
Concretely, we assume that that both (1) the probability that augmentation changes the domain only, keeping the class constant, and (2) the probability that augmentation changes the class only, keeping the domain constant, should be higher than (3) the probability that augmentation changes both domain and class.
Under a stochastic block model (a standard randomized model for the edges of the graph), we prove that contrastive pre-training achieves good target accuracy without domain invariance---by learning a ``disentangled'' feature space that is simultaneously predictive of domain and class (Theorem~\ref{thm:sbm}).
Importantly, in our analysis the source and target domains can be very far apart, and the features can be easily distinguished (Proposition~\ref{proposition:domain_separation}).

We empirically validate our theory on benchmark UDA datasets.
We first heuristically estimate the connectivities between different class-domain pairs (e.g., clock photos and butterfly sketches) on Living-17~\citep{santurkar2020breeds} and DomainNet (12 domain pairs)~\citep{peng2019moment}, showing that they satisfy our theoretical condition for contrastive pre-training to learn transferable features.
Next, we find that the target accuracy of contrastive pre-training on DomainNet can be well-explained ($R^2 = 0.78$) using the terms that appear in our theory: the ratios of the estimated connectivities.
Furthermore, pre-training on a selected subset of the data to adversely change the connectivities consistently underperforms pre-training on randomly selected subsets of the same size (resulting in 4\%-16\% drops in target accuracy).
Finally, we give evidence that contrastive pre-training learns features that contain class and domain information along approximately different dimensions, allowing the features to be transferable across domains without requiring domain invariance.
Our results provide a first step for theoretically understanding contrastive pre-training for UDA, which can be used to improve data augmentation and selection strategies.

\begin{figure}[t]
	\centering
    \includegraphics[width=\arxivstyle{0.6\textwidth}{\linewidth}]{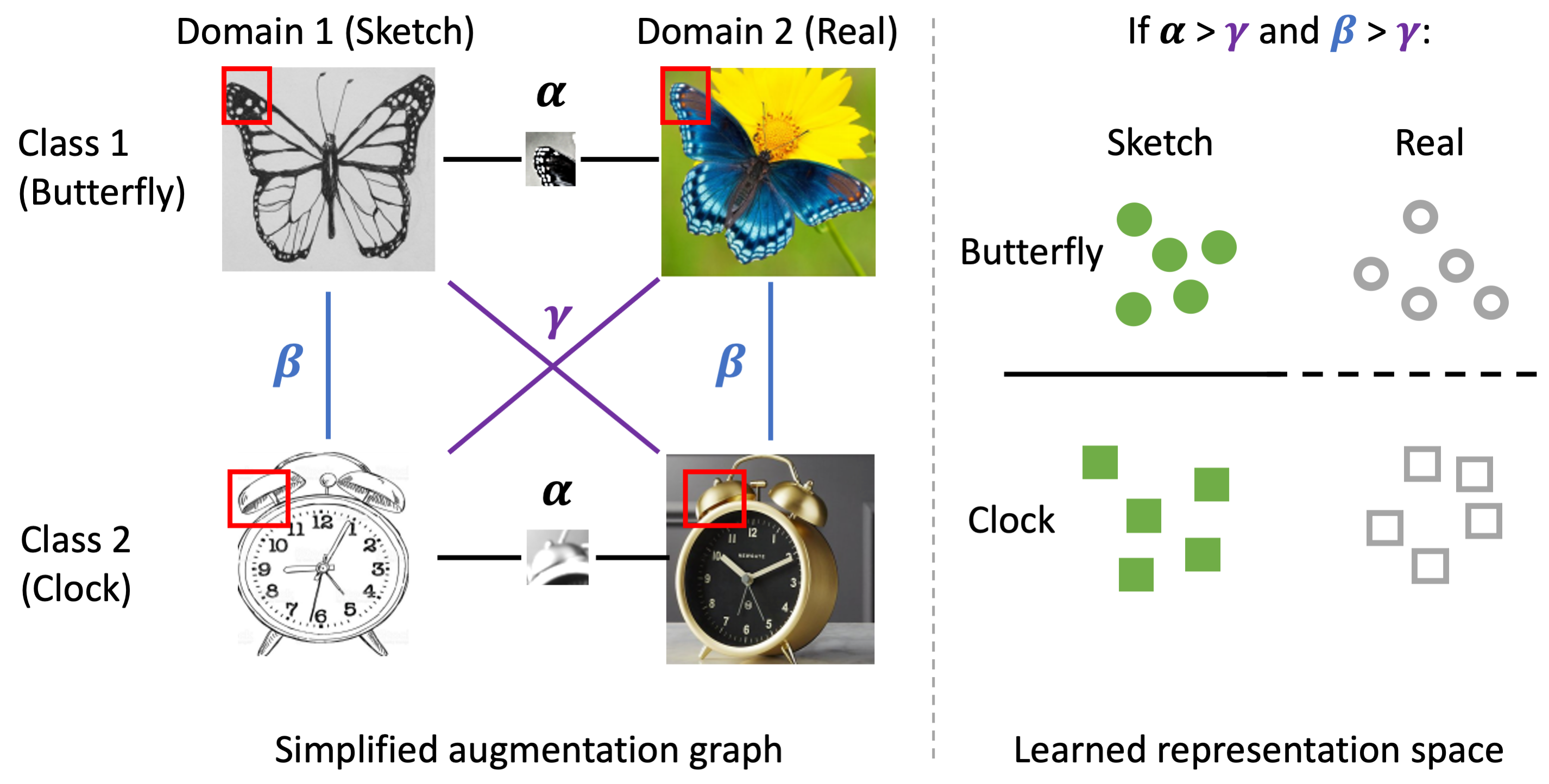}
        \captionof{figure}{\textbf{(Left)} Edges between class-domain pairs indicate how ``connected'' they are by augmentations (e.g., cropping, colorization). For example, $\acrossdomain$ denotes the probability that augmentations connect examples of the same class across different domains. Small crop sizes (e.g., the red rectangles) can increase the connectivity between disparate domains. 
        \textbf{(Right)} When $\acrossdomain$ and $\acrossclass$ (which measure the connectivity between images of the same class or same domain respectively) are larger than $\acrossboth$ (different domain and different class), fine-tuning with labeled source data (sketch: green, filled) in the feature space achieves high accuracy on the target (real: gray, hollow).
        }
	\label{fig:1}
\end{figure}

%

\section{Related work}

\paragraph{Conventional domain adaptation methods bring source and target domains together.}
\citet{ben2010theory} prove generalization bounds for domain adaptation that rely on small $\hdh$-divergence between the source and target (i.e., distributions of $\encoder(\inputx)$ for both domains are similar and therefore difficult to distinguish).
Many methods such as domain adversarial training~\citep{tzeng2014confusion,ganin2016domain,tzeng2017domain} and moment matching~\citep{peng2019moment} have objectives that aim to minimize $\hdh$-divergence between the source and target features.
More recent works propose alternate measures of domain divergence to minimize~\citep{li2021mdd}, propose dynamically weighting domain divergence and classification losses~\citep{xiao2021dynamic}, and improve on prior work that uses optimal transport to bring the domains together~\citep{li2020transport}.

\paragraph{Contrastive learning for self-supervised learning.}
Contrastive learning methods~\citep{chen2020simclr,he2020moco,caron2020swav} perform self-supervised learning of a feature encoder on unlabeled data.
Broadly, the contrastive learning objective aims to learn an encoder \pl{feature map? [use consistent terminology]}\ak{I think we've now changed to encoder}that maps augmentations (e.g., crops) of the same input to similar features and augmentations of random inputs to different features.
 \pl{'random inputs' not precise}\ak{it's randomly sampled inputs from the entire dataset, but seems a bit too technical to say this in related works}
The pre-trained encoder is then fine-tuned on a downstream dataset like ImageNet~\citep{russakovsky2015imagenet} where the downstream train and test examples come from the \textit{same} distribution.
Several theoretical works provide guarantees on the performance of contrastive learning in this setting of identical train and test distributions~\citep{arora2019contrastive, tosh2021topic, haochen2021spectral}.
In contrast, we consider contrastive pre-training for domain adaptation, where the downstream training data are from the source domain and the test data are from the target domain.


\paragraph{Domain adaptation with self-supervision.}
Prior works have explored the application of self-supervision to UDA.
However, to the best of our knowledge they have all attempted to minimize domain distance either as part of their objective~\citep{kang2019can,wang2021cdcl,thota2021cda} or for model selection~\citep{sun2019unsupervised}. Conversely, contrastive pre-training has no explicit objective involving domain distance and is still competitive with conventional UDA methods, a phenomenon that is explained by our connectivity theory.
Recent work~\citep{mishra2021surprisingly} has shown that in the semi-supervised domain adaptation setting (where a small numbers of target labels are given), a method based on self-supervised rotation prediction and input consistency regularization simultaneously achieves higher accuracy and learns features with larger domain distance than prior methods.


\paragraph{Assumptions about distribution shift.}
A classical assumption is covariate shift~\citep{shimodaira2000improving}, a setting in which the marginal distribution over $\inputspace$ can change between domains but the label conditioned on $\inputx$ is fixed.
However, if the source and target have disjoint support then covariate shift is trivially satisfied, and therefore recent works have proposed alternative definitions.
\citet{kumar2020gradual} assumes the shift is gradual in a Wasserstein metric and does not explicitly assume covariate shift.
\citet{wei2021expansion} analyzes UDA with two assumptions: 1) the source classifier has reasonable accuracy on the target, and the target satisfies an ``expansion'' condition.
\citet{cai2021subpopulation} assume that there is significant overlap (expansion) between the source and target after augmentations and analyzes self-training for UDA.
Our theoretical connectivity model is an alternative method for describing distribution shift so that contrastive pre-training adapts even under large $\hdh$-divergence. 

\section{Setup}
\subsection{Unsupervised Domain Adaptation (UDA)}

We consider a classification problem from an input space $\inputspace \subseteq \R^d$ to a label space $\labelspace = \{1,\dots, \numcls\}$.

\paragraph{Data, model, and metrics.}
Each input $\inputx \in \inputspace$ is associated with a deterministic label $\labelx \in \labelspace$ and a domain $\domainx \in \{1,2\}$ (where domain 1 is the source and domain 2 is the target)\footnote{The deterministic assumption is not strictly necessary, but greatly simplifies the setup. In general, we have a label distribution $p(\labelvar \mid \inputx)$ and domain distribution $p(\domainvar \mid \inputx)$ that are shared across all inputs, thus making the covariate shift assumption.}.
Let $\sourcedist$ and $\targetdist$ denote the source and target input distributions respectively (both over $\inputspace$).
Let $\unlabeldist = \theta \sourcedist + (1 - \theta) \targetdist$ for some $\theta \in [0, 1]$ be a mixture of the source and target domains.
This mixture distribution serves as the \textit{unlabeled} data distribution over both source and target domains.
For simplicity, we consider the population data setting in the theory.

We consider the unsupervised domain adaptation (UDA) setting: algorithms have access to labels $\labelx$ for source examples $\inputx \sim \sourcedist$ but not for target examples $\inputx \sim \targetdist$.
Algorithms are allowed to use \emph{unlabeled} data from all domains.
The goal is to learn a classifier $\clf: \inputspace \rightarrow \R^\numcls$ with low error on the target distribution $\Lzeroone(\clf) = \E_{\inputx \sim \targetdist}[\indicator[\argmax_i \clf(\inputx)_i \neq \labelx]]$.

\paragraph{Augmentations.}
Modern ML models are trained with augmentations such as crops and color distortions~\citep{krizhevsky2012imagenet, chen2020simclr}. For any input $\inputx \in \inputspace$, let $\aug(\cdot \mid \inputx)$ be the distribution of its augmentations (also over $\inputspace$).

\subsection{Methods}
\newcommand{\modelfamily}{\mathcal{F}}
\newcommand{\dattract}{d_+}
\newcommand{\drepel}{d_-}

We consider methods using a classification loss $\ell:\R^{\numcls} \times \labelspace \rightarrow \R$ (e.g., cross entropy or squared loss). We also overload the loss $\ell:\R^{2} \times \{1,2\} \rightarrow \R$ for domain classification depending on the arguments.

\textbf{Empirical risk minimization} (ERM) minimizes the loss $\ell$ over augmentations $x' \sim \aug(\cdot \mid x)$ of source examples $\inputx \sim \sourcedist$ and the original labels $\labelx$:
\begin{align}
    \label{eq:erm-object-setup}
    \Lerm(\clf) = \E_{\inputx \sim \sourcedist, \inputx' \sim \aug(\cdot \mid \inputx)}[ \ell(\clf(\inputx'), y_x) ],
\end{align}
ERM learns a classifier $\empclferm \in \argmin_{\clf} \Lerm(\clf)$.
Augmentations improve the target accuracy of ERM and our results also hold without augmentations.

\textbf{Domain adversarial neural networks} (DANN)~\citep{ganin2016domain}
optimizes the sum of two terms: a source classification loss, and a ``domain confusion'' loss that makes it difficult to predict the domain $\domainx$ from the feature $\encoder(\inputx)$.
Here, $\encoder: \inputspace \rightarrow \R^\embeddim$ is a feature encoder, $\head : \R^{\embeddim} \rightarrow \R^{\numcls}$ is a head that predicts the class, and $\dhead: \R^{\embeddim} \rightarrow \R^2$ is a head that predicts whether the input comes from the source or target.
\arxivstyle{%
\begin{align}
    \label{eq:dann-object-setup}
    \Ldann(\encoder, \head, \domainclf) = \E_{\inputx \sim \sourcedist, \inputx' \sim \aug(\cdot \mid \inputx)}[\ell(\head(\encoder(\inputx’)), \labelx)] - \lambda \; \E_{\inputx \sim \unlabeldist, \inputx' \sim \aug(\cdot \mid \inputx)}[\ell(\domainclf(\encoder(\inputx')), \domainx)],
\end{align}
}{%
\begin{align}
    \label{eq:dann-object-setup}
    \Ldann(\encoder, \head, \domainclf) = \; &\E_{\inputx \sim \sourcedist, \inputx' \sim \aug(\cdot \mid \inputx)}[\ell(\head(\encoder(\inputx')), \labelx)] \\
    - \lambda \; &\E_{\inputx \sim \unlabeldist, \inputx' \sim \aug(\cdot \mid \inputx)}[\ell(\domainclf(\encoder(\inputx')), \domainx)], \nonumber
\end{align}}
where $\lambda > 0$ is a tradeoff parameter between the two losses and $\domainclf$ is a domain classifier.
The classifier $\empclfdann = \empheaddann \circ \empencoderdann$ is learned by minimizing the loss: $\empencoderdann, \empheaddann \in \argmin_{\encoder, \head} \max_{\domainclf} \Ldann(\encoder, \head, \domainclf)$.

\textbf{Contrastive pre-training} aims to learn useful features on the unlabeled data from both source and target by training an encoder which maps data-augmented views of the same input $\inputx$ to similar features (the ``positive pairs'') and data-augmented views of random pairs of inputs to dissimilar features (the ``negative pairs'').
Intuitively, contrastive pre-training will embed augmentations of clock photos relatively closer to clock sketches than to butterfly sketches, but the embeddings may all be far apart in absolute distance.
Formally, let $\pospairdist$ be the distribution (on $\inputspace \times \inputspace$) of ``positive pairs'' (augmentations of a single input $\bar{\inputx}$), given by:
\begin{align}
    \label{eqn:pospair}
    \pospairdist(\inputx, \posx) = \E_{\bar{\inputx} \sim \unlabeldist}[ \aug(\inputx \mid \bar{\inputx}) \aug(\posx \mid \bar{\inputx}) ].
\end{align}
We apply contrastive pre-training to UDA---we pretrain an encoder $\encoder$ on source and target \textit{unlabeled} data from $\unlabeldist$, and then fine-tune a classification head $\head$ on only source labeled data from $\sourcedist$.
\begin{enumerate}
\item (Pre-train on unlabeled source and target data) We first \textit{pre-train} an encoder $\encoder: \inputspace \rightarrow \R^\embeddim$ to minimize the distance $\dattract$ between positive pairs, and maximize the distance $\drepel$ between random pairs of inputs:
\begin{align}
    \label{eq:scl-setup}
    \lpretrain(\encoder) \triangleq \,\, &\E_{(\inputx, \posx) \sim \pospairdist} \left[\dattract(\encoder(\inputx), \encoder(\posx))\right] \\
                                  - \,\, &\E_{\inputx \sim \unlabeldist, \inputx' \sim \unlabeldist}\left[\drepel(\encoder(\inputx), \encoder(\inputx'))\right], \nonumber
\end{align}
where the learned encoder is $\empencoder = \argmin_{\encoder} \lpretrain(\encoder)$.

\item (Fine-tune on labeled source data) We learn a classification head $\emphead = \argmin_{\head} \lfinetune(\head)$ on pretrained features $\empencoder(\inputx)$ and their labels $\labelx$ where the loss is
    \begin{align}
        \lfinetune(\head) \triangleq \E_{\inputx \sim \sourcedist}[\ell(\head(\empencoder(\inputx)), \labelx)].
    \end{align}
The final classifier is $\emphead \circ \empencoder$.
In our experiments, we fine-tune both the head $\head$ and encoder $\encoder$.

\end{enumerate}

\section{Intuitions and Analysis}
\label{sec:connectivity}

\begin{table}[t]
\centering
\begin{minipage}{\linewidth}
\centering
\resizebox{\arxivstyle{0.55\linewidth}{\linewidth}}{!}{
\begin{tabular}{l r r r r r}
	\toprule
    & Feature space & Across & Across & Across \\
    Dataset & learned by & class ($\acrossclass$) & domain ($\acrossdomain$) & both ($\acrossboth$) \\
	\midrule
    Living-17 & Input space & 21.43 & 36.58 & 19.55 \\
    & DANN+\strongaugs & 3.44 & 18.00 & 3.38 \\
              & SwAV & 2.06 & 12.67 & 1.26 \\
	\midrule
    DomainNet & Input space & 32.88 & 27.36 & 25.12 \\
    & CLIP & 1.44 & 4.58 & 0.54 \\
    & DANN+\strongaugs & 5.65 & 13.64 & 3.89 \\
    & SwAV & 7.03 & 7.54 & 2.46 \\
	\bottomrule
\end{tabular}
}
\end{minipage}
\caption{%
    Average error of classifying augmented images of two class-domain pairs (as a proxy for connectivity) in the input space and feature spaces learned by CLIP, DANN+\strongaugs{}, and SwAV (contrastive pre-training).
    Classes are very distinguishable in both the DANN and SwAV feature spaces, but the domains are much more difficult to distinguish in the DANN feature space than in the SwAV feature space.
}
\label{table:connectivity}
\end{table}

In this section, we explain when contrastive pre-training can learn transferable features---i.e., simply training a classifier on labeled source data leads to good predictions on the \textit{unlabeled} target domain---despite keeping the source and target features very different.
We extend the spectral contrastive learning framework~\citep{haochen2021spectral} to incorporate distribution shift---we give intuitions in Section~\ref{sec:intuitions}, a proof for the stochastic block model setting in Section~\ref{subsec:sbm}, and a setting where contrastive pre-training outperforms domain adversarial neural networks~\citep{ganin2016domain} (even when both use the same data augmentations), in Section~\ref{subsec:separation-example}.

\subsection{Theoretical setup}
\label{sec:theory-setup}

\paragraph{Spectral contrastive learning.}
\citet{haochen2021spectral} theoretically analyzed contrastive learning from the perspective of an augmentation graph in which the inputs $\inputx$ from $\inputspace$ constitute the nodes, and the edge weight $\pospairdist(\inputx, \inputxp)$ represents the probability of the inputs being selected as a positive pair (augmentations of the same image $\inputx$).
As in~\citet{haochen2021spectral}, we analyze the spectral contrastive learning objective, which achieves similar empirical results to other contrastive learning methods but is more amenable to theoretical analysis:
\begin{align}
    \label{eq:scl}
    \arxivstyle
    {
        \lpretrain(\encoder) = -2 \cdot \E_{(\inputx, \posx) \sim \pospairdist}\left[\encoder(\inputx)^\top \encoder(\posx)\right] + \E_{\inputx, \inputx' \sim \unlabeldist}\left[\left(\encoder(\inputx)^\top \encoder(\inputx')\right)^2\right].
    }
    {
        \lpretrain(\encoder) = -2 \cdot &\E_{(\inputx, \posx) \sim \pospairdist}\left[\encoder(\inputx)^\top \encoder(\posx)\right] \nonumber \\
        +&\E_{\inputx, \inputx' \sim \unlabeldist}\left[\left(\encoder(\inputx)^\top \encoder(\inputx')\right)^2\right].
    }
\end{align}
\pl{can we relate this to the $d_+$ and $d_-$ notation from earlier, or better yet, just streamline it all?}
The pre-training step learns the encoder $\empencoder: \inputspace \to \R^\embeddim$ by minimizing~\eqref{eq:scl}.

\paragraph{Linear classification.}
After pre-training, we train a linear probe on source domain features
with parameters $\linmat \in \R^{\numcls \times \embeddim}$ (where $\embeddim$ is the feature dimension and $\numcls$ is the number of classes).
We learn the parameters $\emplinmat$ by minimizing
\pl{have more English description of the two terms - encourage blah to be close and blah to be far (orthogonal)}
\begin{align}
    \label{eq:sq-loss}
    \lfinetune(\linmat) = \E_{\inputx \sim \sourcedist}\left[ \ell(\linmat \empencoder(\inputx), \labelx) \right] + \eta \| \linmat \|_F^2,
\end{align}
\pl{earlier we used $x'$ as an augmentation of $x$; go back and change to $x^+$ for augmentations and $x'$ for independent input}
where $\ell$ is the squared loss.
The resulting classifier is $\empclf(\inputx) = \argmax_{i \in [\numcls]} (\emplinmat \empencoder(\inputx))_i$.
\pl{define what the loss function is precisely, because you have a vector of scores and a discrete label, saying squared loss doesn't type check}
\ak{It's on the `mean-subtracted' one hot encoding---defined in the Appendix. TODO: add it here in a clean way.}

\subsection{Simple example}
\label{sec:intuitions}

We first consider a toy example that captures the core intuitions of our theory.
For this example, we compute an exact (closed-form) expression for the representations learned by contrastive pre-training on unlabeled data, which are visualized in Figure~\ref{fig:2}.
We then show that a linear classifier trained on source representations also gets high accuracy on examples from the target domain, \emph{without using any target labels}.

\paragraph{Setup.} The goal is to classify between images of clocks and butterflies---see Figure~\ref{fig:2}.
The input space $\inputspace$ consists of 4 points: [clock sketch, butterfly sketch, clock photo, butterfly photo], where the $i$-th input refers to the $i$-th element in the list (e.g., $i=2$ corresponds to butterfly sketch).
\pl{use this example throughout rather than dog photos}
The label space $\labelspace$ contains two classes, clock ($\labelx=1$) and butterfly ($\labelx=2$).
We only have labeled data for the source inputs (sketches, $\domainx=1$), and the goal is to achieve high accuracy on the target inputs (photos, $\domainx=2$).
The source distribution $\sourcedist$ places equal probability on sketches ($\sourcedist(x) = 0.5$ if $\domainx=1$ and $\sourcedist(x) = 0$ otherwise), and the target distribution $\targetdist$ places equal probability on photos ($\targetdist(x) = 0.5$ if $\domainx=2$ and $\targetdist(x) = 0$ otherwise).
The unlabeled distribution $\unlabeldist$ places equal probability on all images (sketches and photos).

\paragraph{Augmentations.} The key ingredient in contrastive pretraining is the augmentation strategy---contrastive pretraining aims to map augmentations of the same input $\inputx$ to similar representations.
These augmentations include aggressive crops which keep only a small part ($8\%$) of an image (e.g., the tail or fur of an animal), and can blur the line between classes.
Augmentations such as color jitter, which also randomly `drops' colors, can also transform the style of an image, blurring the line between domains such as sketches and photos.
As such, we consider augmentations that can change the class or domain of an image.
We define the probability that an image $x$ can augment to an image $x'$ as follows:
\begin{align*}
    A(x' \mid x) = \begin{cases}
        \withinboth' &\text{if}\quad \labelx = \labelxp, \domainx = \domainxp \\
        \acrossdomain' &\text{if}\quad \labelx = \labelxp, \domainx \neq \domainxp \\
        \acrossclass' &\text{if}\quad \labelx \neq \labelxp, \domainx = \domainxp \\
        \acrossboth' &\text{if}\quad \labelx \neq \labelxp, \domainx \neq \domainxp
    \end{cases},
\end{align*}

\paragraph{Key condition for contrastive pre-training in the UDA setting.}
We explain in our simple example how contrastive pre-training achieves zero target error when $\withinboth' > \max\{\pairproba', \pairprobb'\}$ and $\min\{\pairproba', \pairprobb'\} > \pairprobg'$, and we prove a more general version in Section~\ref{subsec:sbm}.
The first condition is satisfied when an augmented image is more likely to stay within the same domain and class, than change the domain or class.
The second condition is satisfied when data augmentation is less likely to change both the domain \textit{and} class of an image than just one of domain or class.
For example, an augmentation is less likely to augment a clock photo into a butterfly sketch, since it involves changing both the style (sketch vs. real) and the semantic content (butterfly vs. clock).

\paragraph{Deriving a closed-form expression for pre-trained representations.} 
Consider a 4-node graph $G$ where the nodes are input examples in $\inputspace$ and where the edge weight between $\inputx, \posx \in \inputspace$ is the probability of sampling augmentations $\inputx$ and $\posx$ from the same original image $\bar{\inputx}$ (Equation~\ref{eqn:pospair}): $\pospairdist(\inputx, \posx)$.
We illustrate this graph in Figure~\ref{fig:2} (left).
Let $A$ denote the weighted adjacency matrix for $G$, which depends on the augmentation parameters $\withinboth', \acrossdomain', \acrossclass', \acrossboth'$:
\begin{align}
    A &= \begin{bmatrix*}[r]
        \pairprobr & \pairprobb & \pairproba & \pairprobg \\
        \pairprobb & \pairprobr &  \pairprobg & \pairproba \\
        \pairproba & \pairprobg & \pairprobr & \pairprobb \\
        \pairprobg & \pairproba & \pairprobb & \pairprobr \\
    \end{bmatrix*}, \text{ where }
    \begin{cases}
        \withinboth = \frac{1}{4}\hspace{-3mm}&({\withinboth'}^2 + {\acrossdomain'}^2 \\
        &+{\acrossclass'}^2 + {\acrossboth'}^2) \\
        \acrossdomain = \frac{1}{2}\hspace{-3mm}&(\withinboth' \acrossdomain' + \acrossclass'\acrossboth')\\
        \acrossclass = \frac{1}{2}\hspace{-3mm}&(\withinboth' \acrossclass' + \acrossdomain'\acrossboth')\\
        \acrossboth = \frac{1}{2}\hspace{-3mm}&(\withinboth' \acrossboth' + \acrossdomain'\acrossclass')
    \end{cases}\nonumber
\end{align}
Recall that we assumed $\withinboth' > \max\{\pairproba', \pairprobb'\}$ and $\min\{\pairproba', \pairprobb'\} > \pairprobg'$.
With some algebra, this implies that the same condition holds for the positive pair probabilities as well: $\withinboth > \max\{\pairproba, \pairprobb\}$ and $\min\{\pairproba, \pairprobb\} > \pairprobg$.

Contrastive pre-training learns an encoder $\empencoder : \inputspace \to \R^\embeddim$ given by Equation~\ref{eq:scl}---we use $\embeddim=3$ here.
Let $\empencoderMatrix \in \R^{4 \times \embeddim}$ be the learned feature matrix, where the $i$-th row of $\empencoderMatrix$ contains representations for example $i$, so $\empencoderMatrix_i = \empencoder(i)$.

From a few lines of algebra (Lemma 3.2 in~\citet{haochen2021spectral}), $\lpretrain(\empencoder) = \frac{1}{16} \| 16 A - \empencoderMatrix \empencoderMatrix^\top \|_2^2 + c$, where $c$ is a constant that does not depend on $\empencoder$.
Here, $A$ has rank 4, while $\empencoderMatrix \empencoderMatrix^\top$ has rank 3---the minimizer $\empencoderMatrix$ is given by the eigenvectors corresponding to the 3 largest eigenvalues of $A$.
We can compute the (unordered) eigenvalues $\lambda_a, \lambda_b, \lambda_c, \lambda_d$ and corresponding eigenvectors $u_a, u_b, u_c, u_d$ of $A$ explicitly:
\begin{align}
&\lambda_a = \acrossdomain + \acrossclass + \acrossboth + \withinboth, \quad
&\lambda_b = -\acrossdomain + \acrossclass - \acrossboth + \withinboth, \nonumber \\
&\lambda_c = \acrossdomain - \acrossclass - \acrossboth + \withinboth, \quad
&\lambda_d = -\acrossdomain - \acrossclass + \acrossboth + \withinboth, \nonumber\\
&u_a = (1, 1, 1, 1)^\top, \quad
&u_b = (1, 1, -1, -1)^\top, \nonumber \\
&u_c = (1, -1, 1, -1)^\top, \quad
&u_d = (1, -1, -1, 1)^\top.
\nonumber
\end{align}
Since $\acrossclass > \acrossboth$ and $\acrossdomain > \acrossboth$, we have:
\begin{enumerate}
    \item $\lambda_a > \lambda_d$ since all $\withinboth, \acrossclass, \acrossdomain, \acrossboth$ are nonnegative,
    \item $\lambda_b > \lambda_d$ when $\acrossclass - \acrossboth > - \acrossclass + \acrossboth$, which is implied by $\acrossclass > \acrossboth$,
    \item $\lambda_c > \lambda_d$ when $\acrossdomain - \acrossboth > -\acrossdomain + \acrossboth$, which is implied by $\acrossdomain > \acrossboth$.
\end{enumerate}
So $\lambda_d$ is the smallest eigenvalue, and the learned features are given by (up to rotational symmetries):
\begin{align}
\empencoderMatrix &= \begin{bmatrix*}[c]
    \empencoder(\text{clock sketch})^\top  \\
    \empencoder(\text{butterfly sketch})^\top  \\
    \empencoder(\text{clock photo})^\top \\
    \empencoder(\text{butterfly photo})^\top
\end{bmatrix*} =
4\begin{bmatrix*}[r]
        \sqrt{\lambda_a} & \sqrt{\lambda_b} & \sqrt{\lambda_c}  \\
        \sqrt{\lambda_a} & \sqrt{\lambda_b} & -\sqrt{\lambda_c} \\
        \sqrt{\lambda_a} & -\sqrt{\lambda_b} & \sqrt{\lambda_c} \\
        \sqrt{\lambda_a} & -\sqrt{\lambda_b} & -\sqrt{\lambda_c}
    \end{bmatrix*}
\nonumber
\end{align}
\begin{figure}[tbp]
\centering
\begin{minipage}{\linewidth}
    \centering
    \includegraphics[width=\arxivstyle{0.70\linewidth}{\linewidth}]{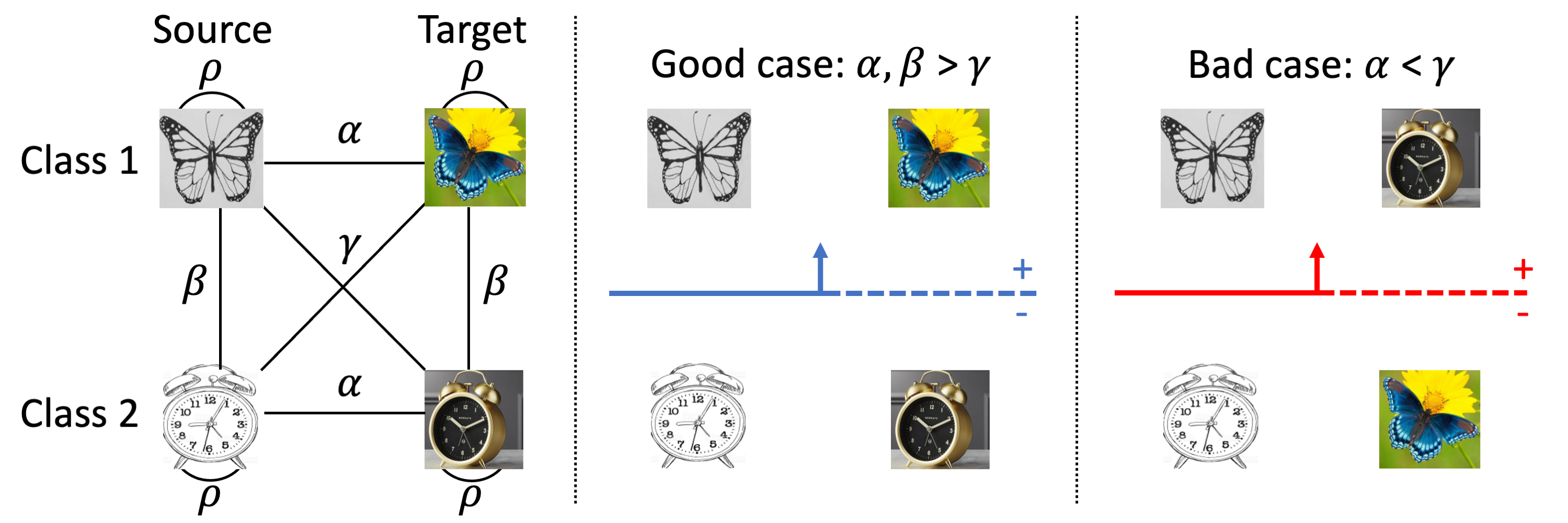}
    \captionof{figure}{%
    \textbf{(Left)}
    Illustrative example in binary classification, where each class-domain pair is a single node in the graph.
    Edge weights denote connectivity (probability of sampling the endpoints as a positive pair). The self-loop weight $\pairprobr$ denotes the probability of sampling a pair of the same domain and class.
    \textbf{(Middle)}
    When $\acrossdomain$ (same class, different domain) and $\acrossclass$ (different class, same domain) are greater than $\acrossboth$ (different domain, different class), the features are oriented so that a source-trained linear classifier generalizes to the target.
    The class and domain information are disentangled along the vertical and horizontal axes, respectively.
    \textbf{(Right)}
    When $\acrossdomain < \acrossboth$, the target features are flipped and the source-trained classifier does not generalize.}
    \label{fig:2}
\end{minipage}
\end{figure}

\paragraph{Fine-tuning on source examples gets zero error on target.}
So far, we have calculated the features learned by contrastive pretraining.
We note that the first feature dimension is the same for all examples and therefore not useful---we visualize the remaining 2-D features in the middle panel of Figure~\ref{fig:2}.
We see that the features disentangle the domain and class information---the second feature dimension can be used to distinguish domains while the third feature can distinguish classes.
Therefore, training a max-margin classifier $\empclf$ on only labeled sketch images (first two rows of the feature matrix $\embedding$) produces a classifier that uses only the third feature, which extrapolates to photos correctly and gets \emph{zero OOD error} ($\Lzeroone(\empclf) = 0$).
The key difference from conventional intuitions in domain adaptation is that contrastive pre-training does not learn domain-invariant features---the source and target are perfectly distinguishable but the domain and class information are disentangled, enabling generalization.

Note that if our key condition on the augmentations does not hold e.g., $\acrossdomain < \acrossboth$ but $\acrossboth < \acrossclass$, then the learned features $\embedding$ exclude the eigenvector corresponding to $\lambda_c$ as the smallest.
Again, the first feature is constant and not useful.
In the visualization (Figure~\ref{fig:2} right), we show the features learned by contrastive pre-training.
In this case, a max-margin classifier $\empclf$ trained on sketch images (red line) mislabels the photos ($\Lzeroone(\empclf)=1$) because the orientation of the classes is flipped between domains.

\pl{This is a really nice example and basically the heart of this paper, so we should really spend sometime to make this as polished and understandable as we can;
I think the main thing is to connect up the notation more explicitly and say what the different feature vectors of the points are (e.g., $\phi(x_1) = [...]$, etc.);
the indirection also makes it hard; maybe in Fig 2, we can label the points 1, 2, 3, 4
since we need to linearize all the points in the matrix;
do we need to number the classes and domains, or can we use {sketch,photo}, {clock, butterfly} directly, which would reduce the indirection
}

\pl{in figure 2, presumably the geometry of the embedded feature space depends on the exact values of $\alpha, \beta, \gamma$;
part of me is thinking that it would be nice to define the actual augmentation functions (write $\aug(...) = $) and also plug
in actual values for $\alpha,\beta,\gamma$ at least for the figure to make things more concrete
}
\ak{I agree with being more explicit about what the features are, so I added that. In the toy example, the geometry of the embedded features don't depend much on the values (as long as our key condition holds). It could become rectangular instead of square. If you have some asymmetries then it will look different (happy to explain more)}

\pl{in figure 2, could we add another bad case where $\beta < \gamma$?}

\pl{also interesting that $\rho$ doesn't seem to matter at all, maybe we want to say/discuss that?}
\ak{Yeah that's a good point, I changed this section to start from the augmentation probabilities (not positive pair probabilities), in which case $\rho$ does matter.}

\subsection{Theory for multi-class stochastic block model}
\label{subsec:sbm}

We extend the simple example in Section~\ref{sec:intuitions} to a stochastic block model setting with $r$ classes, where we have $\cdsize$ points for each class and domain.
The unlabeled data distribution $\unlabeldist$ is the uniform distribution over $\inputspace$.

The stochastic block model (SBM)~\citep{holland1983sbm} is a random graph model widely used in theoretical computer science to model communities, networks, and topic models in NLP~\citep{bouveyron2016sbmtext,abbe2018sbm,mehta2019sbmgnn}.
The graph $\sbmg = (\inputspace, \sbme)$ is defined as follows:
for each pair of nodes $\inputx, \inputxp \in \inputspace$, the corresponding edge (undirected, unweighted) is in the edge set $\sbme$ with probability:
\begin{align*}
    \begin{cases}
        \withinboth &\text{if}\quad \labelx = \labelxp, \domainx = \domainxp \\
        \acrossdomain &\text{if}\quad \labelx = \labelxp, \domainx \neq \domainxp \\
        \acrossclass &\text{if}\quad \labelx \neq \labelxp, \domainx = \domainxp \\
        \acrossboth &\text{if}\quad \labelx \neq \labelxp, \domainx \neq \domainxp
    \end{cases},
\end{align*}
and all edges are sampled independently.
The distribution of positive pairs $\pospairdist$ is the uniform distribution over $\sbme$.
\pl{does this actually correspond to some augmentation function? it's important that this actual exist, and we can know what kind of augmentations can give rise to this}
\pl{in particular, given this, we don't have independent control over the marginal distribution over $x$}

\pl{it would be nice to have an example of a random graph model where you draw the edges and use color to denote the class and domain}

Our key condition is that 1) augmentations are less likely to change both the domain and class ($\min\{\acrossdomain, \acrossclass\} > \acrossboth$) and 2) augmentations are more likely to keep both class and domain unchanged than to change either one of the two ($\withinboth > \max\{ \acrossdomain, \acrossclass \}$) \pl{how come this didn't come up before?}.
Assuming this \pl{/Given these two assumptions} and using a feature dimension $\numcls + 1$, the following result shows that contrastive pre-training on unlabeled source and target data learns transferable features so that a linear head trained on the source features achieves high accuracy on the target.
\pl{need to talk more explicitly about learning the feature map since that that's the star of the show - there should be notation for it}
\begin{theorem}
    \label{thm:sbm}
    In the SBM setting above, let $\withinboth > \max\{ \acrossdomain, \acrossclass \}$ and $\min\{ \acrossdomain, \acrossclass \} > \acrossboth$ and let the pre-training feature dimension be $r+1$.
    Then, for any $n \geq \Omega\left(\frac{r}{\min\{\acrossdomain - \acrossboth, \acrossclass - \acrossboth\}^2}\right)$ and regularization strength \pl{where does regularization enter the algorithm? pre-training or fine-tuning?}\ak{fine-tuning, defined in the previous section, but maybe worth adding here too} $\eta \in \left(0, \frac{\acrossdomain - \acrossboth}{8 r \withinboth}\right)$, with probability at least $0.999$, we have
    \begin{align*}
        \Lzeroone(\empclf) \leq
        O\left(\frac{1}{\eta^4 \cdot (\min\{\acrossdomain,\acrossclass\} - \acrossboth)^2 \cdot \cdsize} \right) \cdot \text{poly}\left(\numcls\right).
    \end{align*}
    where $\empclf(\inputx)$ is the linear classifier trained on source domain pre-trained features (Section~\ref{sec:theory-setup}).
\end{theorem}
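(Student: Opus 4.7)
My plan is to reduce the statement to a careful spectral analysis of the expected adjacency matrix of the SBM graph, followed by matrix concentration and a ridge-regression argument. First, I would compute the eigendecomposition of $\bar{A} := \E[A]$ in closed form. Ordering the $2rn$ nodes by (class, domain, within-pair index), $\bar{A}$ has the block form $\bar{A} = M \otimes J_n$ (off diagonal) where $M \in \R^{2r \times 2r}$ decomposes via Kronecker products as
\begin{align*}
M = \gamma\, J_r \otimes J_2 + (\acrossdomain-\gamma)(I_r \otimes J_2) + (\acrossclass-\gamma)(J_r \otimes I_2) + (\withinboth - \acrossdomain - \acrossclass + \gamma) I_{2r}.
\end{align*}
Its eigenvectors are tensor products of $\{\bone_r,\ u \perp \bone_r\}$ with $\{\bone_2,\ \be_1 - \be_2\}$, giving four eigenvalue groups: $\lambda_1 = \withinboth + \acrossdomain + (r-1)(\acrossclass + \gamma)$ (mult.\ 1, all-ones), $\lambda_2 = \withinboth - \acrossdomain + (r-1)(\acrossclass - \gamma)$ (mult.\ 1, domain-varying), $\lambda_3 = \withinboth + \acrossdomain - \acrossclass - \gamma$ (mult.\ $r-1$, class-varying, domain-constant), and $\lambda_4 = \withinboth - \acrossdomain - \acrossclass + \gamma$ (mult.\ $r-1$). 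The two hypotheses $\withinboth > \max\{\acrossdomain,\acrossclass\}$ and $\min\{\acrossdomain,\acrossclass\} > \gamma$ force $\lambda_4$ to be strictly smallest, with spectral gap $\lambda_3 - \lambda_4 = 2(\acrossdomain - \gamma)$ and $\lambda_2 - \lambda_4 = r(\acrossclass - \gamma)$.

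Second, I would read off the key structural consequence: the top $r+1$ eigenvectors of $\bar{A}$ are an all-ones vector, one domain-indicator vector, and $r-1$ class-indicator vectors of the form $u_i \otimes \bone_2 \otimes \bone_n$ that are \emph{identical across the two domains for every class}. Thus the ``ideal'' encoder $\encoder^\star$ (the minimizer of $\lpretrain$ applied to $\bar{A}$) produces features whose class-carrying coordinates are domain invariant, even though the domain coordinate cleanly separates source from target. A linear head trained on source class labels would therefore put zero weight on the domain coordinate (after ridge regularization) and recover the class information exactly on the target. This is the noise-free version of the theorem.

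Third, I would upgrade to the empirical graph. Standard SBM matrix concentration (Bernstein / Bandeira--van Handel) gives $\|A - \bar{A}\|_{\mathrm{op}} \leq O(\sqrt{n\withinboth})$ w.p.\ $\geq 0.999$. Invoking Davis--Kahan with the eigengap $n \cdot \min\{\acrossdomain - \gamma,\ \acrossclass - \gamma\}$ yields that the top-$(r+1)$ eigenspace of $A$ is within principal angle $O\!\big(\sqrt{\withinboth/n}/\min\{\acrossdomain-\gamma,\acrossclass-\gamma\}\big)$ of that of $\bar{A}$. Plugging this into Lemma 3.2 of \citet{haochen2021spectral}, the minimizer $\empencoder$ of the spectral loss factors as an orthogonal transform of the ideal features plus a perturbation of the same magnitude, so each example's feature vector $\empencoder(x)$ is close (up to a rotation that the linear head can absorb) to the domain-invariant class indicator plus a small domain coordinate.

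Fourth, I would plug the perturbed features into the ridge-regularized squared-loss fine-tuning step. For the ideal features, the closed-form ridge solution is a projector onto the class-indicator subspace (scaled by $1/(1+\eta)$), which trivially transfers. For the perturbed features, one expands $\emplinmat = (\Phi_S^\top \Phi_S + \eta I)^{-1} \Phi_S^\top Y_S$, uses $\eta \in (0, (\acrossdomain-\gamma)/(8r\withinboth))$ to keep the inverse well-conditioned, and tracks how the $O(\sqrt{\withinboth/(n(\acrossdomain-\gamma)^2)})$ feature perturbation propagates. The resulting squared prediction error on a target point is $O(\text{poly}(r) / (\eta^4 (\min\{\acrossdomain,\acrossclass\} - \gamma)^2 n))$. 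Converting squared error to $0$--$1$ error by Markov's inequality (a correct classification requires the predicted score for the true class to exceed every wrong class, which fails only when the squared error exceeds a constant margin from the mean-subtracted one-hot targets) gives the stated bound. The main technical obstacle is the bookkeeping in this last step: carefully tracking how the Davis--Kahan perturbation, the rotation alignment between $\empencoder$ and $\encoder^\star$, and the ridge inversion interact to produce the sharp $\eta^{-4}$ and $(\min\{\acrossdomain,\acrossclass\} - \gamma)^{-2}$ dependences without losing factors in $r$.
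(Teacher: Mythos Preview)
Your plan is essentially the paper's argument: closed-form eigendecomposition of $\E[A]$ via Kronecker structure (your $\lambda_1,\lambda_2,\lambda_3,\lambda_4$ are exactly the paper's $\lambda_a,\lambda_b,\lambda_c,\lambda_d$ specialized to $m=2$), SBM concentration, Davis--Kahan against the eigengap $n\min\{r(\beta-\gamma),2(\alpha-\gamma)\}$, and Markov to pass from squared to $0$--$1$ loss.

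The one real organizational difference is how the perturbation is threaded through fine-tuning. You propose to perturb the eigenvectors $\Phi$ via Davis--Kahan, then carry the error through the ridge formula $(\Phi_S^\top\Phi_S+\eta I)^{-1}\Phi_S^\top Y_S$, absorbing the rotation ambiguity into the head. The paper instead observes that the target prediction equals $A_{k,(\sT,\sS)}\big(A_{k,(\sS,\sS)}+\xi I\big)^\dagger Y_\sS$ (their Lemma~\ref{lemma:closed_form_expression}), an expression that depends only on $A_k=\Phi\Phi^\top$ and is therefore rotation-free by construction. They then perturb $A_k$ to $\tilde A_k$ directly (their Lemma~\ref{lemma:low_rank_perturbation}, which internally uses Davis--Kahan) and apply Stewart's pseudoinverse perturbation bound; the second term of Stewart is where the $\eta^{-2}$ in the prediction error, hence $\eta^{-4}$ in the $0$--$1$ bound, originates. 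Your route works but forces you to align rotations explicitly; the paper's route sidesteps that and makes the $\eta^{-4}$ dependence mechanical.

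One small slip: you say the upper bound on $\eta$ ``keeps the inverse well-conditioned.'' It does the opposite---small $\eta$ worsens the conditioning and is the source of the $\eta^{-4}$. The upper bound $\eta\le(\alpha-\gamma)/(8r\rho)$ is needed for a different reason: in the paper's calculation the \emph{ideal} prediction on the target equals $\frac{\lambda_c}{\lambda_c+m\xi}\,Y_\tdata$ with $\xi\propto\eta$, and the upper bound on $\eta$ guarantees this scalar is at least $3/4$, so the correct class retains a constant margin before any perturbation is added.
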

The rate at which the error decreases with the number of samples $n$ (per class-domain pair) is controlled by the gap between the across-class/across-domain connectivities $\acrossclass,\acrossdomain$ and the across-both connectivity $\acrossboth$.
Thus, augmentations that tend to change only one of class or domain will improve the downstream transferability of the learned features.

\pl{should interpret things a bit: the ideal augmentation would want $\alpha$ to be large, that's the main thing that would increase $\eta$ and thus lower the error}

\pl{also the warmup case of $\beta = \gamma$ makes this bound blow up ...is there a way around this?}

\paragraph{Proof sketch.}
The features learned by contrastive pre-training are given by the top $\embeddim$ eigenvectors of the adjacency matrix $A$, which is a random matrix defined by the SBM.
For the expected adjacency matrix $\E[A]$, similarly to the simple example, we can compute the eigenvectors in closed form and show that the linear head learned on the source data achieves low error on the target.
The main challenge is to show an analogous result for the \textit{sampled} graph, where each of the data points per class-domain pair can have a different set of edges.
We use matrix concentration bounds to concentrate the top eigenvectors of $A$ to those of $\E[A]$ and use matrix perturbation analysis to show that the predictor learned using $A$ is close to the ``ideal'' one learned using $\E[A]$.
This shows that contrastive pre-training on the random graph defined by $A$ also learns transferable features with high probability, which gives the result.\qed

The full proof is in Appendix~\ref{app:sbm}, where we show that the result holds even when we have more than 2 domains.
\pl{should this be surprising? italics/phrasing suggests it is, but it is natural but important}
This suggests that we can pre-train one model on the unlabeled data of many domains and the features can transfer across all of them.
Our bound in the appendix also holds with probability arbitrarily close to 1.

As discussed earlier, contrastive pre-training does not merge source and target features.
Our next result mirrors this observation theoretically in the SBM setting, showing that a linear classifier on top of the pre-trained features can classify the domain with low error.
Letting $\dhead: \Real^\embeddim \rightarrow \{1, 2\}$ be a domain classifier \pl{use the same notation as before (which was $\xi$)}, we define the 0-1 loss for domain classification as $\LDzeroone(\dhead\circ\empencoder) = \E_{\inputx \sim \unlabeldist}[\indicator[\dhead(\empencoder(\inputx)) \neq \domainx]]$.
\begin{prop}
	In the setting of Theorem~\ref{thm:sbm}, with probability at least $0.999$, there exists a linear classifier $\dhead: \Real^\embeddim \rightarrow \{1, 2\}$, such that $\dhead$ composed with the encoder $\empencoder$ can distinguish the domains:
	\label{proposition:domain_separation}
	\begin{align*}
		\LDzeroone(\dhead \circ \empencoder)  \le O\left(\frac{1}{ \min\{\acrossdomain - \acrossboth, \acrossclass - \acrossboth\}^6 \cdot \cdsize} \right) \cdot \text{poly}\left(\numcls\right).
	\end{align*}
\end{prop}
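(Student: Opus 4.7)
The plan is to exhibit an explicit direction in the pre-trained feature space along which source and target are linearly separated, by reading off the spectral structure of the expected adjacency matrix $\E[A]$ and then transporting the argument to the sampled graph using the same perturbation machinery that drives Theorem~\ref{thm:sbm}. Writing the $2\numcls \times 2\numcls$ block-probability matrix as a sum of Kronecker products,
\begin{align*}
B = \acrossboth(J_\numcls \otimes J_2) + (\acrossdomain - \acrossboth)(I_\numcls \otimes J_2) + (\acrossclass - \acrossboth)(J_\numcls \otimes I_2) + (\withinboth - \acrossdomain - \acrossclass + \acrossboth)(I_\numcls \otimes I_2),
\end{align*}
where $J_k$ and $I_k$ denote the $k\times k$ all-ones and identity matrices, its eigenvectors split into four Kronecker families, indexed by $u \in \R^\numcls$ being either $\mathbf{1}_\numcls/\sqrt{\numcls}$ or orthogonal to $\mathbf{1}_\numcls$, and $v \in \R^2$ being either $\mathbf{1}_2/\sqrt{2}$ or $(1,-1)^\top/\sqrt{2}$. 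A direct calculation of the four associated eigenvalues, combined with the hypotheses $\withinboth > \max\{\acrossdomain, \acrossclass\}$ and $\min\{\acrossdomain, \acrossclass\} > \acrossboth$, shows that the ``domain eigenvector'' ($u = \mathbf{1}_\numcls/\sqrt{\numcls}$, $v = (1,-1)^\top/\sqrt{2}$) lies strictly among the top $\numcls+1$. Lifted to $\E[A] = B \otimes \mathbf{1}_\cdsize \mathbf{1}_\cdsize^\top$, this eigenvector is constant on each class--domain block with opposite signs on source and target, so a linear classifier that thresholds its coordinate attains zero domain error on the ideal (population) features with margin of constant order.

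The second step transports this to the sampled graph. The proof of Theorem~\ref{thm:sbm} already provides matrix-concentration bounds of the form $\|A - \E[A]\|_{op} = \tilde O(\sqrt{\cdsize})$ with high probability, and the eigenvalue computation above shows that the gap between the $(\numcls+1)$-th and $(\numcls+2)$-th eigenvalues of $\E[A]$ is $\Omega\bigl(\cdsize \cdot \min\{\acrossdomain - \acrossboth, \acrossclass - \acrossboth\}\bigr)$. Davis--Kahan therefore controls the sine of the principal angle between the top-$(\numcls+1)$ eigenspaces of $A$ and $\E[A]$ by $O\bigl(1/(\sqrt{\cdsize}\cdot\min\{\acrossdomain - \acrossboth, \acrossclass - \acrossboth\})\bigr)$, and consequently there exists a unit vector in the column span of $\empencoderMatrix$ whose inner product with each sample matches the ideal domain coordinate up to this perturbation, modulo an orthogonal rotation within the top eigenspace that the linear head is free to absorb. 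I take $\dhead$ to be the thresholded projection along this vector and bound the 0--1 domain loss by the fraction of samples whose perturbed coordinate crosses the separating hyperplane, via Markov's inequality on the summed squared perturbation across the $2\numcls\cdsize$ points.

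The main technical obstacle is tracking how the spectral gap $\min\{\acrossdomain - \acrossboth, \acrossclass - \acrossboth\}$ propagates through the chain ``matrix concentration $\to$ Davis--Kahan subspace angle $\to$ coordinate margin $\to$ 0--1 loss'' to yield the stated sixth power. A single application of Davis--Kahan contributes one inverse-gap factor; squaring to invoke Markov on the perturbation contributes another; and the normalization needed to put the ideal domain coordinate on a constant scale, together with the substitution of the regularization-style parameter from Theorem~\ref{thm:sbm} at its upper bound $(\acrossdomain - \acrossboth)/(8\numcls\withinboth)$, collectively yield the remaining gap factors, and all of these must be bookkept carefully. A subsidiary subtlety is that Davis--Kahan only constrains the top-$(\numcls+1)$ eigenspace as a whole rather than individual eigenvectors, but this is harmless because the linear-classifier family contains every linear functional on the $(\numcls+1)$-dimensional embedding, so any within-subspace rotation is absorbed by the choice of $\dhead$.
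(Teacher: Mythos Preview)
Your approach is correct but takes a different route from the paper. The paper's proof is a one-line symmetry argument: in the SBM the roles of ``class'' and ``domain'' are interchangeable, so one simply swaps them and reapplies Theorem~\ref{thm:sbm} (more precisely, its multi-domain version, Theorem~\ref{theorem:sbm}), training a ridge-regression \emph{domain} head on the data from class $1$ and letting it generalize to the remaining classes. The sixth power then falls out mechanically: the error bound is $O\bigl(\eta^{-4}\min\{\alpha-\gamma,\beta-\gamma\}^{-2}\,n^{-1}\bigr)$, and after the swap the admissible range for $\eta$ becomes $\eta \lesssim (\beta-\gamma)/\rho$, so substituting $\eta$ at its upper endpoint converts $\eta^{-4}$ into four more inverse-gap factors.

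Your direct spectral construction---locate the domain eigenvector of $\E[A]$, note it lies in the top-$(\numcls+1)$ eigenspace, transport via Davis--Kahan, threshold, and count sign flips by Markov---is a perfectly valid alternative and in fact yields the \emph{tighter} bound $O\bigl(\text{poly}(\numcls)\cdot\min\{\alpha-\gamma,\beta-\gamma\}^{-2}\,n^{-1}\bigr)$. Your accounting of the sixth power is the only confused part: Davis--Kahan plus the Markov/margin step already give exactly two inverse-gap factors, and there is no further ``normalization'' or ``regularization-style parameter'' in your construction---you are exhibiting an explicit $\dhead$, not learning one by ridge regression, so $\eta$ never enters. Since the gap is at most $1$, your $\text{gap}^{-2}$ bound trivially implies the stated $\text{gap}^{-6}$ bound, so there is nothing to reconcile; just drop that paragraph. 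The trade-off is that the paper's symmetry argument is essentially free given Theorem~\ref{thm:sbm}, while yours is self-contained and sharper but requires redoing the perturbation bookkeeping.
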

The proof is in Appendix~\ref{app:sbm} and is similar to Theorem~\ref{thm:sbm}.
\pl{rhetorically a bit strange since we say how great it is that we can do more than 2 domains, but the next result is only for two domains}

\pl{what about disentangling - this is never defined generally beyond just having different coordinates be class versus domain?}

\subsection{Setting where ERM/DANN underperform contrastive pre-training}
\label{subsec:separation-example}

We theoretically show a simple setting in which contrastive learning on unlabeled data can have higher target extrapolation accuracy than both ERM and DANN (Equations~\ref{eq:erm-object-setup} and~\ref{eq:dann-object-setup}, respectively).
The main intuition of our construction is that ERM and DANN may underperform when there are subsets of the target data distribution that are unreachable via data augmentation on any labeled source input---in other words, inputs $\inputx$ with $\domainx=2$ such that $\aug(\inputx \mid \inputx') = 0$ for all $\inputx'$ with $\domainxp=1$.
This makes the problem underconstrained for both ERM and DANN, while contrastive pre-training better leverages the connectivity information from augmentations on unlabeled source and target inputs to learn transferable features.
\begin{restatable}{prop}{separationprop}
    \label{prop:separation}
    There exists a set $\inputspace$, a distribution $\unlabeldist$ and data augmentation $\aug$, such that for some feature dimension $k\in\Z^+$, a linear classifier trained on contrastive pre-trained features achieves 0 target error: $\Lzeroone(\empclf) = 0$.
    However, for all $\embeddim\in\Z^+$, there exist minimizers $\empclf_\text{erm}$ and $\empclf_\text{dann}$ of the ERM and DANN objectives, respectively, that have non-zero error: $\Lzeroone(\empclf_\text{erm}) = \Lzeroone(\empclf_\text{dann}) = 1/3$.
\end{restatable}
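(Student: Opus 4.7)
The plan is to construct a small finite instance in which every source input's augmentation distribution misses a particular target point $t_3$, so the ERM and DANN training signals leave the classifier's behavior on $t_3$ undetermined, while unlabeled target data connects $t_3$ to the source through an intermediate target point in the positive-pair graph, so contrastive pre-training can still transfer the label. Specifically, take $\inputspace=\{s_1,s_2,t_1,t_2,t_3\}$ with $y_{s_1}=y_{t_1}=y_{t_3}=1$, $y_{s_2}=y_{t_2}=2$, $d_{s_i}=1$, $d_{t_j}=2$; let $\sourcedist$ and $\targetdist$ be uniform on the source and target points respectively, and $\unlabeldist$ uniform on $\inputspace$; and define $\aug(\cdot\mid s_i)=\delta_{s_i}$, $\aug(\cdot\mid t_i)=\tfrac12\delta_{t_i}+\tfrac12\delta_{s_i}$ for $i\in\{1,2\}$, and $\aug(\cdot\mid t_3)=\tfrac12\delta_{t_3}+\tfrac12\delta_{t_1}$. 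The critical property is that $\aug(t_3\mid x)=0$ for every source $x$, yet $t_3$ is still linked to $s_1$ via the chain $t_3\to t_1\to s_1$ in the augmentation graph.

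\textbf{Contrastive pre-training succeeds.} Evaluating (\ref{eqn:pospair}) shows that the positive-pair graph has exactly two connected components aligned with the classes, namely $\{s_1,t_1,t_3\}$ and $\{s_2,t_2\}$, with no cross-class edges. By the spectral analysis underlying (\ref{eq:scl}), with feature dimension $k=2$ the minimizer $\empencoder$ places all points in one connected component onto the same one-dimensional subspace (up to scaling by node degree). A linear head fit on $(s_1,1)$ and $(s_2,2)$ then labels each component with the correct class, yielding $\Lzeroone(\empclf)=0$.

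\textbf{ERM fails trivially.} Since $\aug(\cdot\mid s_i)=\delta_{s_i}$, the loss $\Lerm(\clf)$ depends only on $\clf(s_1)$ and $\clf(s_2)$; any classifier with $\empclferm(s_1)=1$ and $\empclferm(s_2)=2$ is a minimizer, with its values on $\{t_1,t_2,t_3\}$ unconstrained. Choosing $\empclferm(t_1)=1$, $\empclferm(t_2)=2$, and $\empclferm(t_3)=2$ gives $\Lzeroone(\empclferm)=1/3$.

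\textbf{DANN fails (the main obstacle).} The DANN case is the hard part because the adversarial domain term could, a priori, restrict the family of minimizers. Zero source loss forces $\encoder(s_1)\neq\encoder(s_2)$; write $v_i:=\encoder(s_i)$. The plan is to enumerate how $\encoder$ routes $t_1,t_2,t_3$ (to $v_1$, to $v_2$, or to fresh features), compute in each configuration the joint distribution of $(\encoder(x'),d_x)$ induced by $\unlabeldist\circ\aug$, and evaluate the adversarial Bayes domain error $1-\sum_v\max_d P(v,d)$. A short finite case analysis shows the maximum equals $2/5$ and is attained exactly by those two-feature encoders that route $t_1,t_2,t_3$ into $\{v_1,v_2\}$ with a two-one split; introducing any fresh feature only creates a region on which the domain is perfectly recoverable and so strictly lowers the maximum domain error. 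Among the optimal encoders, selecting $\encoder(t_1)=v_1$, $\encoder(t_2)=\encoder(t_3)=v_2$ with $\head(v_1)=1$ and $\head(v_2)=2$ produces $\empclfdann(t_3)=2\neq y_{t_3}$ while correctly classifying $t_1$ and $t_2$, giving $\Lzeroone(\empclfdann)=1/3$. Because this two-feature embedding is realizable in $\R^k$ for every $k\in\Z^+$, the bad DANN minimizer exists for all feature dimensions, as claimed.
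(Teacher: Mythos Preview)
Your construction is different from the paper's (the paper uses a symmetric $8$-point cycle with two ``unreachable'' target points, whereas you use an asymmetric $5$-point chain with one unreachable point) but the underlying idea is the same and the instance does witness the proposition. Two places in your argument need to be tightened, however.

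\textbf{Contrastive step.} The claim that ``with $k=2$ the minimizer places each connected component onto a one-dimensional subspace'' is not a general structural fact about the spectral loss; it requires that the top two eigenvectors of $S_+$ are the Perron eigenvectors of the two blocks, i.e., that the second eigenvalue of the $3\times 3$ block is smaller than the top eigenvalue of the $2\times 2$ block. For your specific numbers this holds (the $3\times 3$ block $\tfrac{1}{20}\bigl(\begin{smallmatrix}5&1&0\\1&2&1\\0&1&1\end{smallmatrix}\bigr)$ has eigenvalues $\approx 0.266,\,0.118,\,0.016$, and the $2\times 2$ block has $\approx 0.262,\,0.038$), but you must verify it; with a slightly different augmentation the ordering could flip and both top-$2$ eigenvectors would come from block~1, zeroing out the features for $\{s_2,t_2\}$. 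The paper avoids this issue by computing all eigenpairs of its adjacency matrix explicitly.

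\textbf{DANN step.} You over-analyze and slightly mis-analyze. First, you assume the DANN minimizer has zero source loss and then enumerate routings; but that restriction is not justified a priori (for large $\lambda$ one might prefer to merge $s_1,s_2$). Second, the ``fresh feature strictly lowers domain error'' claim is false for the $0$--$1$ Bayes error you invoke (e.g., routing $t_3$ to a fresh point still gives Bayes error $2/5$), though it is true for squared loss. Fortunately none of this is needed: since the domain-loss term is concave in the posterior $p_v=\Pr(d{=}1\mid\phi(x')=v)$ (it equals $2p_v(1-p_v)$ for squared loss, $\min(p_v,1-p_v)$ for $0$--$1$), Jensen gives the universal upper bound $2\bar p(1-\bar p)=12/25$ (resp.\ $\min(\bar p,1-\bar p)=2/5$) with $\bar p=\Pr(d{=}1)=2/5$, achieved exactly when $p_v$ is constant. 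Your routing $t_1\to v_1,\;t_2,t_3\to v_2$ makes $p_{v_1}=p_{v_2}=2/5$ and simultaneously attains zero source loss, so the DANN objective hits its global lower bound $-\lambda\cdot 12/25$ for every $\lambda>0$ and every $k\ge 1$. That single computation replaces your case analysis. The paper's proof proceeds the same way: it exhibits one encoder and checks it attains the upper bound on domain confusion while keeping source loss zero.
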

The proof is in Appendix~\ref{app:separation-proof}.
For ERM, an optimal predictor that minimizes the ERM objective~\eqref{eq:erm-object-setup} can make arbitrary predictions on unseen target examples.
For DANN, simply matching the marginal input distributions (even with augmentations) across domains can potentially swap the classes of the unreachable target data---this is consistent with the well-known drawback that DANN with an expressive feature extractor can learn domain-invariant features by arbitrarily pairing the source and target examples~\citep{shu2018dirtt,zhao2019zhao}.
Conversely, contrastive pre-training learns features that enable perfect transferability.

\section{Contrastive pre-training is a strong domain adaptation method}
\label{sec:main-results}


Our theory gives conditions for which contrastive learning produces effective features for domain adaptation.
We now empirically validate contrastive pre-training as a competitive domain adaptation method.
We observe that contrastive pre-training achieves comparable performance to strong UDA methods on four standard domain adaptation datasets.

\paragraph{Datasets.}
We conduct experiments on DomainNet~\citep{peng2019moment,prabhu2021sentry}, which contains 40 classes and 4 domains, BREEDS Living-17 and Entity-30~\citep{santurkar2020breeds}, which are adaptation benchmarks derived from ImageNet, and STL-10$\to$CIFAR-10~\citep{coates2011stl10,krizhevsky2009learningmultiple,french2018selfensembling}, which are two classical image recognition datasets often paired together for domain adaptation.

\paragraph{Contrastive pre-training algorithm.}
We use SwAV~\citep{caron2020swav}, a contrastive pre-training algorithm with high accuracy on ImageNet, for our ImageNet-like datasets (BREEDS and DomainNet).
SwAV uses a multi-crop data augmentation strategy with several crops of different sizes, followed by horizontal flipping, color distortion, and Gaussian blurring.
We pre-train on the combined source and target unlabeled data and fine-tune on the source using the same augmentation pipeline used during pre-training.
For STL$\to$CIFAR, we use a publicly available SimCLR~\citep{chen2020simclr} model that is pre-trained on CIFAR.
SimCLR applies random cropping, color distortions, and Gaussian blur for data augmentation.
In our theory we linear probe with the squared loss, but in our experiments we follow a more standard approach and fine-tune the model (pre-trained encoder and randomly initialized linear head) with the cross-entropy loss.

\paragraph{Baselines.}
We compare with standard ERM on the labeled source data and two strong domain adaptation methods: DANN~\citep{ganin2016domain} and SENTRY~\citep{prabhu2021sentry}.
SENTRY achieves SoTA results on DomainNet when initialized with ImageNet-pre-trained models.
However, to ensure the methods all use the same data, we initialize the domain adaptation methods with the ERM baseline (no ImageNet pre-training).
For STL$\to$CIFAR, we compare to Dirt-T~\citep{shu2018dirtt}, a SoTA domain adaptation algorithm for this task.
For a more complete comparison, we consider not only using each method's default augmentations but also using the contrastive pre-training augmentations to all baselines (denoted +\strongaugs).

\subsection{Results}
\paragraph{Main comparison (Table~\ref{table:main-empirical}).}
\begin{table*}[t]
\begin{minipage}{\linewidth}
\centering
\resizebox{0.9\linewidth}{!}{
\begin{tabular}{l r r r r r r r r r}
	\toprule
    Method & \multicolumn{2}{c}{ERM} & \multicolumn{2}{c}{SENTRY} & \multicolumn{2}{c}{DANN} & Pre-training & SwAV+extra \\
    \cmidrule(lr){2-3} \cmidrule(lr){4-5} \cmidrule(lr){6-7} \cmidrule(lr){8-8} \cmidrule(lr){9-9}
    Augmentations & Standard & Strong & Standard & Strong & Standard & Strong & Strong & Strong \\
	\midrule
    DomainNet (avg. of 12 pairs) & 26.67 & 37.68 & 31.58 & 25.82 & 38.38 & \textbf{45.81} & \underline{44.91} & 51.73 \\
    Living-17 & 60.17 & 63.29 & \textbf{75.53} & 74.53 & 65.59 & 71.29 & \underline{75.12} & 82.00 \\
    Entity-30 & 52.68 & 52.52 & 56.10 & \underline{58.90} & 57.45 & 57.52 & \textbf{62.03} & 65.90 \\
    STL-10$\to$CIFAR-10 & 44.83 & \underline{57.40} & 53.84 & 43.71 & 46.77 & 55.18 & \textbf{75.41} & N/A \\
    \bottomrule
\end{tabular}
}
\end{minipage}
\caption{%
    Test accuracy (\%) of baselines and contrastive pre-training on 4 benchmark visual adaptation datasets.
    The first and second highest numbers for each dataset are bolded and underlined, respectively.
    The pre-training algorithm is SwAV for DomainNet, Living-17, and Entity-30, and SimCLR for STL$\to$CIFAR.
    Contrastive pre-training is competitive with the baselines for all datasets.
    On STL$\to$CIFAR, SimCLR is slightly higher than Dirt-T, the previous SoTA (75.3\% as reported in~\citet{shu2018dirtt}).
    SwAV+extra is not bolded as it uses additional data and is therefore not directly comparable.
    }
\label{table:main-empirical}
\end{table*}

On average over all pairs of DomainNet, SwAV is within 1\% of the best baseline, DANN+\strongaugs{} (44.91\% vs. 45.81\%).
Contrastive pre-training is comparable with the best baseline (SENTRY) on Living-17 (75.12\% vs. 75.53\%) and improves over the best baseline (SENTRY+\strongaugs) by 3.1\% on Entity-30.
On STL$\to$CIFAR, the SimCLR target accuracy (75.4\%) is very close to the Dirt-T accuracy reported in the original paper (75.3\%).
We find that replacing the default data augmentation used by each baseline with the augmentations used by contrastive pre-training consistently boosts performance for DANN but only sometimes boosts performance for SENTRY.
Table~\ref{table:app-empirical-domainnet} provides the individual results for each pair of domains on DomainNet, and Tables~\ref{table:app-empirical-domainnet} and~\ref{table:app-breeds} contain results from additional contrastive pre-training methods, MoCo-V2~\citep{chen2020improved} and MoCo-V3~\citep{chen2021empirical} (all trained on the same data).

\paragraph{Extra unlabeled data from related domains.}
We also consider adding extra unlabeled data from other (related) domains in DomainNet, Living-17, and Entity-30.
In particular, we pre-train once on an unlabeled dataset consisting of a superset of the domains we want to adapt to, fine-tune on labeled data from one of the domains, and evaluate on all other domains.
While this is not a fair comparison to other domain adaptation methods, the ability to scale to large unlabeled datasets by pre-training on all domains simultaneously is a natural advantage of pre-training.
This method, which we denote as SwAV+extra, gives further improvements over SwAV (nearly 7\% on DomainNet, 7\% on Living-17, and 4\% on Entity-30).
In Table~\ref{table:app-breeds} we show results from pre-training on different splits of the BREEDS data, as well as using additional pre-training methods: Dino+extra~\citep{caron2021emerging} and Barlow Twins+extra~\citep{zbontar2021barlow}.

\section{Evaluating connectivity on real datasets}
\label{sec:connectivity-governs}



Recall that our theory (Theorem~\ref{thm:sbm}) provides conditions on connectivities between classes and domains for which contrastive pre-training provably obtains good target accuracy.
We heuristically estimate each of the connectivity measures 
on Living-17 and DomainNet to verify the predictions from our theory in Section~\ref{sec:connectivity}.
We show that the empirical connectivities satisfy our theoretical conditions for contrastive pre-training to learn transferable features (e.g., across-domain $>$ across-both connectivity) and that the connectivity ratios (e.g., across-domain / across-both) are predictive of target domain accuracy of contrastive pre-training.

\paragraph{Estimating connectivity on benchmark datasets.}
To verify whether real datasets and augmentations satisfiy the connectivity requirements for Theorem~\ref{thm:sbm}, we compute empirical estimates of the connectivity measures.
Using augmented images from 2 class-domain pairs, we train a classifier to predict the class-domain pair from which the image originated.
For example, to estimate across-domain connectivity ($\acrossdomain$ from Section~\ref{sec:intuitions}), we choose 2 class-domain pairs with the same class and different domain.
We find that connectivity ratios satisfy our theoretical conditions in all cases,
and the estimates are reported in Table~\ref{table:connectivity}.
The input space connectivity is calculated by training classifiers from scratch on individual class-domain pairs, leading to much smaller training dataset sizes compared to SwAV and DANN (which are trained on all classes and both domains).
Therefore, because the input space connectivity numbers may be overestimates of the true connectivity, we also considered fine-tuning a CLIP~\citep{radford2021clip} pre-trained model rather than training from scratch.
See Appendix~\ref{sec:app-connectivity} for more discussion and details.

\begin{figure*}[t]
    \centering
    \includegraphics[align=c,height=3.4cm]{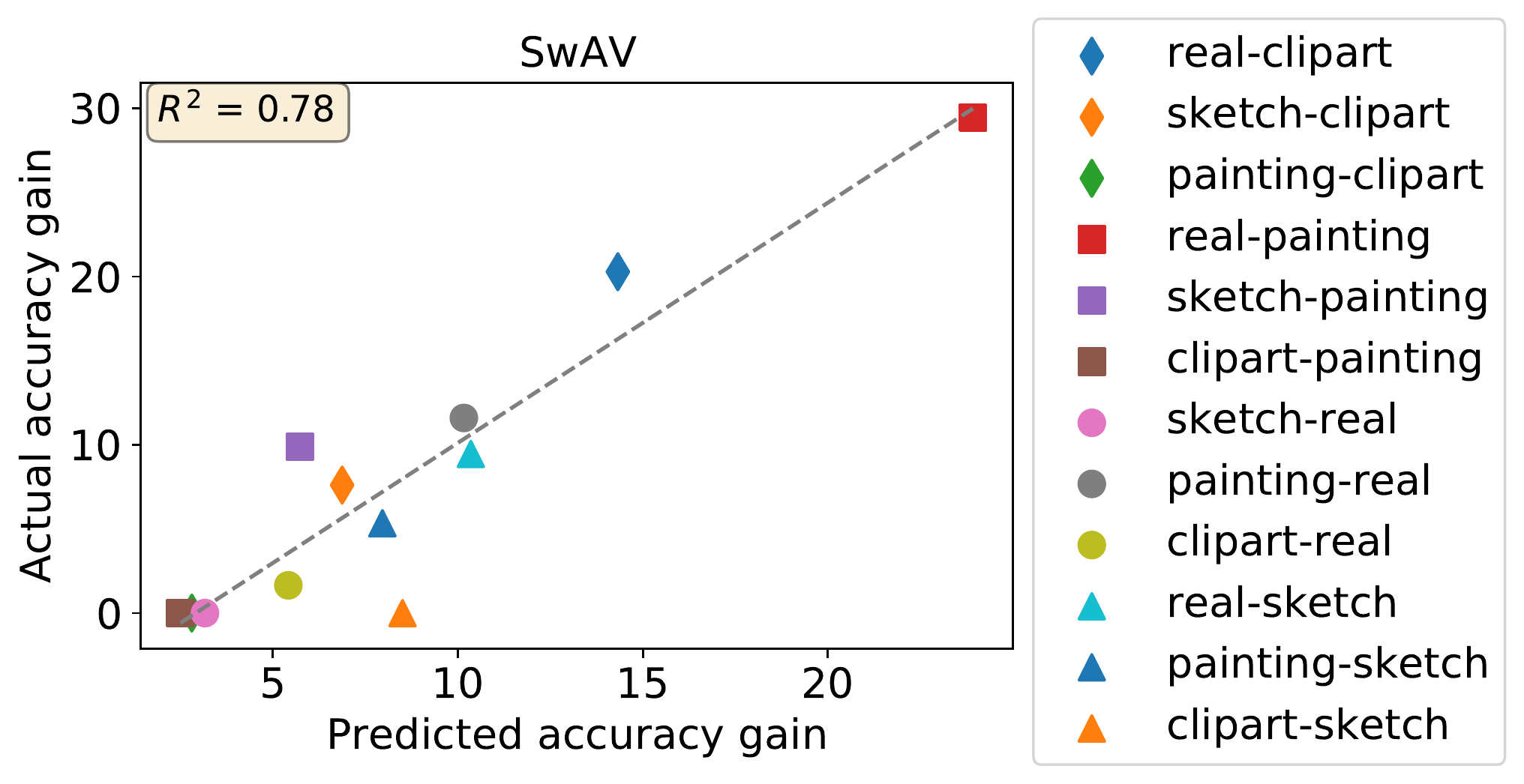}
    \hspace{0.08\linewidth}
	\includegraphics[align=c,height=3.4cm]{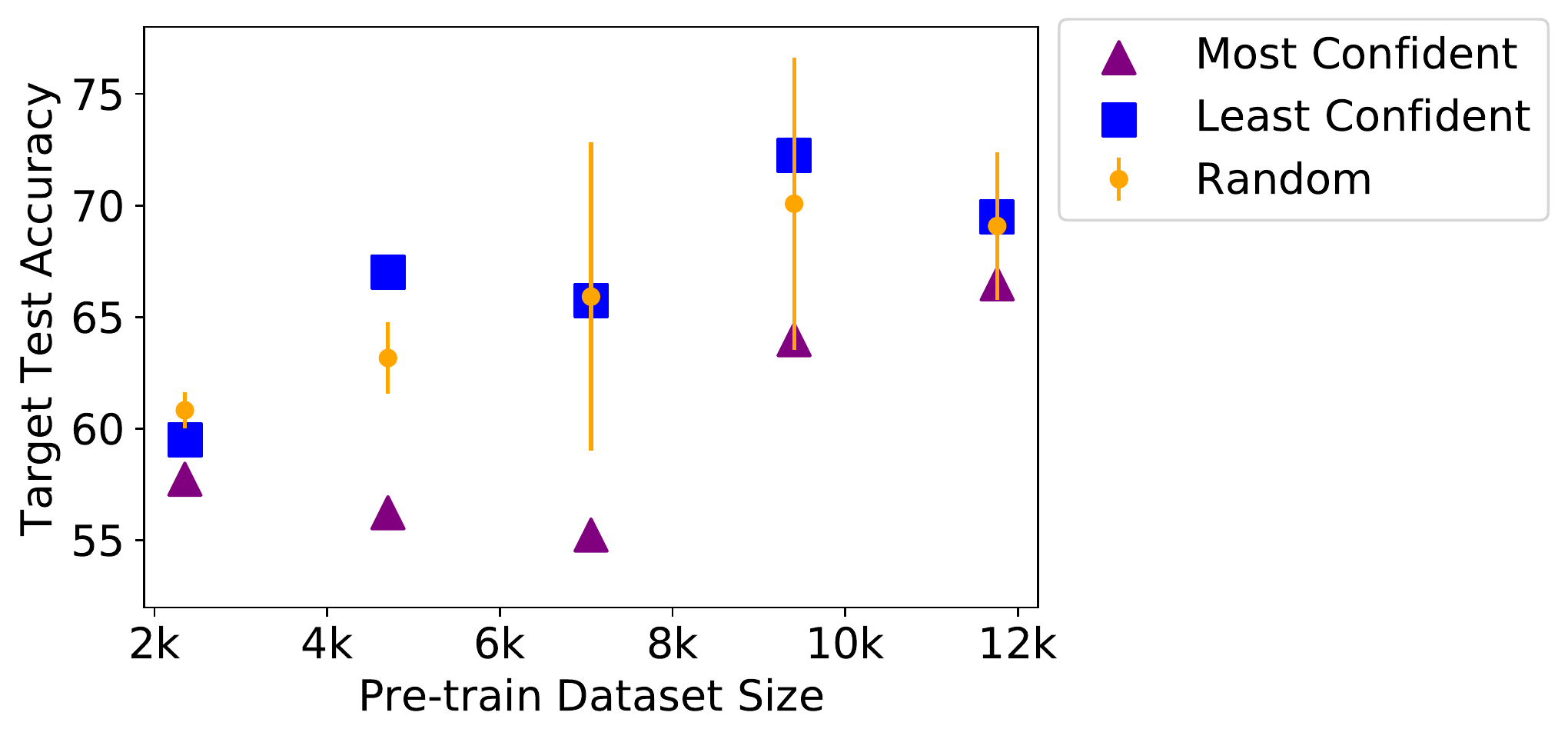}
	\captionof{figure}{%
    \textbf{(Left)}
    True target accuracy of SwAV (gain over the worst source/target pair with the same target) vs. the predicted accuracy using the connectivity ratios on 12 pairs of DomainNet domains.
    The predictions have a high coefficient of determination with the observed ($R^2 = 0.78$).
    \textbf{(Right)}
    Ablation of connectivity: pre-training only on inputs far from the margin of a domain classifier (``most confident'') consistently degrades target accuracy more than both on random subsets (mean and standard deviation over 3 random subsets shown) and on inputs nearest the margin (``least confident'').
  }
	\label{fig:connectivity-governs}
\end{figure*}

\paragraph{Connectivity ratios correlate with target accuracy.}
\newcommand{\wone}{\ensuremath{w_1}}
\newcommand{\wtwo}{\ensuremath{w_2}}
Section~\ref{sec:connectivity} suggests that when the across-domain ($\acrossdomain$) and across-class ($\acrossclass$) connectivities are larger than the across-both ($\acrossboth$) connectivity (i.e., $\acrossdomain/\acrossboth$ and $\acrossclass/\acrossboth$ are large),
contrastive pre-training learns features with good target accuracy.
To investigate the relation between the connectivity ratios and target accuracy, we consider fitting the following function, with parameters $\wone$ and $\wtwo$, to the SwAV target accuracy on the 12 pairs of DomainNet domains:
\begin{align}
    \label{eq:connect-ratio}
    \text{target accuracy} \approx (\acrossdomain / \acrossboth)^{\wone} \cdot (\acrossclass / \acrossboth)^{\wtwo}.
\end{align}
Since our theory predicts that target accuracy is high when both ratios are large, we multiply them to express the logical ``and''.
To normalize by the differing intrinsic difficulties of transferring to each target domain of DomainNet, we fit Eq.~\ref{eq:connect-ratio} to the improvement over the worst pair over all source/target pairs with the same target domain.
Linear regression in log space yields $\wone = 14.9$ and $\wtwo = 2.7$, with a strong coefficient of determination ($R^2 = 0.78$) between the predicted and observed accuracies (left panel of Figure~\ref{fig:connectivity-governs}).
Thus, the target accuracy can be well-explained \textit{using the connectivity ratios alone}.

Figures~\ref{fig:trend-plots-app-one} and~\ref{fig:trend-plots-app-two} show that the predicted target accuracies using this method are also accurate for MoCo-V2~\citep{chen2020improved} ($R^2 = 0.79$) and MoCo-V3~\citep{chen2021empirical} ($R^2 = 0.60$), but much less accurate for the baselines (both with and without augmentations; average $R^2 = 0.21$).

\paragraph{Ablating connectivity degrades target accuracy.}
We find that the across-domain connectivity is very important, as those examples ``bridge'' the domains---for instance, our linear regression fit puts the largest weight on the across-domain connectivity ratio.
We verify this intuition by training a domain classifier on Living-17 and pre-training on the subset of examples on which the classifier is most confident (i.e., the examples farthest from the classifier's decision boundary, which intuitively contribute the least to across-domain connectivity).
As controls, we also pre-train on
1) random subsets of the same size and
2) the examples on which the classifier is least confident.
For 5 different subset sizes, our confidence pruning method consistently reduces target accuracy compared to both controls (Figure~\ref{fig:connectivity-governs} right).

\paragraph{Pre-trained features approximately disentangle class and domain.}
\begin{table}[t]
  \centering
\scalebox{0.85}{
\begin{tabular} {l r r r}
\toprule
    & Class (S) & Class (S) & Class (T) \\
    & vs. Class (T) & vs. Domain & vs. Domain \\
    \midrule
    Living-17 & 0.397 & 0.013 & 0.016 \\
    DomainNet (avg.) & 0.187 & 0.018 & 0.018 \\
    \bottomrule
\end{tabular}
}
\caption{%
    Cosine similarity of class and domain classifiers trained on SwAV representations on Living-17 and DomainNet (average over all classes).
    Linear classifiers trained to predict the class on the source and target representations individually learn similar weights and are nearly orthogonal to the linear weights learned by domain classifiers.}
\label{table:dot-products}
\end{table}

In Section~\ref{sec:connectivity}, we showed theoretically that contrastive pre-training can learn a feature space that is simultaneously predictive of the class and domain by disentangling the information along separate directions.
Here, we verify this empirically using the SwAV feature space.
Given a source and target dataset, we test disentanglement by training the following linear classifiers on the feature space:
1) Source classifier, which predicts class in the source domain;
2) Target classifier, which predicts class in the target domain;\footnote{Although this cannot be done in practice, we only use the target labels here for exploratory analysis.}
and 3) Domain classifier, which predicts domain (source or target).
If the class and domain information are approximately disentangled along different dimensions, then the linear weights of the source/target classifier should be orthogonal to the weights of the domain classifier.

We train the linear classifiers and compute the cosine similarity between their weights on Living-17 and all pairs of domains from DomainNet (Table~\ref{table:dot-products}).
First, we find that the domain classifier is nearly orthogonal to the source and target classifiers, with an average cosine similarity $< 0.02$ between the linear weights of the domain classifier and the weights of the source and target classifiers.
Second, we find that the source and target classifiers are relatively well-aligned: on average over all classes, the cosine similarity of linear weights for the same class from the source and target classifiers is high (0.40 on Living-17 and average 0.19 on DomainNet).
Full results are in Appendix~\ref{app-verification}.

\section{Conclusion}
While off-the-shelf contrastive pre-training is not intentionally a domain adaptation method, we find that not only does it theoretically and empirically transfer well across distribution shifts, but it does so in a way that runs counter to conventional domain adaptation theory and practice by not learning domain-invariant features.
Given some of the practical advantages of pre-training (such as being able to pre-train once and then finetune for many different downstream tasks), we hope that our connectivity theory leads to improvements for contrastive pre-training as a domain adaptation method, such as improving pre-training data selection, developing augmentations to increase connectivity, and improving fine-tuning methods that exploit the geometry of the pre-trained feature space.
Our connectivity theory can also potentially explain when contrastive pre-training for domain adaptation does \emph{not} perform well, such as on the iWildCam2020-WILDS dataset~\citep{sagawa2022uwilds}.
Future work may also lead to new domain adaptation methods that focus on increasing connectivity rather than collapsing domains.

Our theoretical setup is highly stylized and the stochastic block model may not be a realistic assumption; in particular, it requires a uniform marginal input distribution.
Follow-up work~\citep{haochen2022beyond} proves the linear transferability of contrastive pre-training in a more general setting and only requires that the same-class across-domain connectivity is larger than the across-class across-domain connectivity (rather than also requiring across-class same-domain connectivity to be large).
Their analysis leads to an improved linear probing algorithm which outperforms both SwAV and MoCo-V3 on Entity-30.






\section{Ethics}

Our experiments use standard benchmark vision datasets in domain adaptation, which is publicly available data.
In general, however, large-scale unsupervised learning can involve data scraped from the internet which may lead to privacy concerns.
Care should be taken when curating datasets for large scale contrastive pre-training in practice.

\section{Acknowledgements}
This work was in part supported by the Open Philanthropy Project and NSF Award Grant No. 1805310, and NSF IIS 2045685. KS was also supported by the Stanford Computer Science department’s CURIS program. AK was also supported by the Rambus Corporation Stanford Graduate Fellowship. SMX was also supported by an NDSEG Fellowship. We would like to thank Kaylee Burns, Aditi Raghunathan, and the anonymous reviewers, for helpful comments on our draft.

\bibliography{refdb/all, main}
\bibliographystyle{iclr2022}

\newpage
\appendix
\onecolumn

%

\newcommand{\norm}[1]{\left\lVert#1\right\rVert}

\newcommand{\nclass}{r}
\newcommand{\ndomain}{m}
\newcommand{\ndata}{n}
\newcommand{\Ndata}{N}
\newcommand{\dom}[1]{d_{#1}}
\newcommand{\class}[1]{{y_{#1}}}
\newcommand{\id}[1]{{\textup{id}_{#1}}}
\newcommand{\data}{\mathcal{X}}
\newcommand{\pss}{\rho} 
\newcommand{\psd}{\beta} 
\newcommand{\pds}{\alpha} 
\newcommand{\pdd}{\gamma} 
\newcommand{\sdata}{\mathcal{S}} 
\newcommand{\tdata}{\mathcal{T}} 

\newcommand{\shead}{\hat{h}} 
\renewcommand{\head}{h} 

\newcommand{\dfeature}{k} 
\newcommand{\pred}{\textup{pred}} 
\newcommand{\probP}{\text{I\kern-0.15em P}}


\section{Additional details for Section~\ref{sec:connectivity}}

\subsection{Proof of Theorem~\ref{thm:sbm}}
\label{app:sbm}

We first summarize the setting of the multi-domain multi-class classification problem that we will analyze.  

Consider a multi-way classification problem, where $\nclass$ is the number of classes, $\ndomain$ is the number of domains, $\ndata$ is the number of data in each class of each domain. The total number of data is $\Ndata = \nclass\ndomain\ndata$. The set of all nodes is $\data$. 

For data $x\in\data$, we use $\dom{x}\in[\ndomain]$ and $\class{x}\in[\nclass]$ to denote its domain and class, respectively.

We consider a stochastic block model for the graph, where the probability of existence of an edge between $x$ and $x'$ is: (1) $\pss$ if $\dom{x}=\dom{x'}$ and $\class{x}=\class{x'}$, (2) $\pds$ if $\dom{x}\ne\dom{x'}$ and $\class{x}=\class{x'}$, (3) $\psd$ if $\dom{x}=\dom{x'}$ and $\class{x}\ne \class{x'}$, (4) $\pdd$ if $\dom{x}\ne \dom{x'}$ and $\class{x}\ne \class{x'}$. 

Let $A\in\Real^{\Ndata\times\Ndata}$ be the adjacency matrix of the graph. A random positive pair $(x, x^+)$ is a uniform random edge (i.e., $p_+(x, x^+) = A_{xx^+} / \sum_{x', x''} A_{x'x''}$), a negative pair $(x, x^-)$ are two uniform random nodes. Let $\dfeature=\ndomain+\nclass-1$ be the feature dimension and $f: \data \rightarrow \Real^{\dfeature}$ be the feature map learned by minimizing the population spectral contrastive loss:

\begin{align}
	-2\Exp_{x, x^+}	[f(x)^\top f(x^+)] + \Exp_{x, x^-}\left[\left(f(x)^\top f(x^-)\right)^2\right].
\end{align}

Let $\sdata = \{ x\in \data: \dom{x}=1\}$ and $\tdata = \{x\in\data: \dom{x}\ne 1\}$ be source and target domains, respectively. Given the labeled source domain data, we learn the linear head 
\begin{align}
\shead = \argmin_{\head \in \Real^{(\dfeature \times \nclass)}} \sum_{x\in\sdata} \left(\norm{b^\top f(x) - \vec{y}_{x}}_2^2 + \eta \norm{b}_F^2\right),
\end{align}
where $\vec{y}_{{x}} = e_{\class{x}}  - \frac{1}{\nclass} \cdot \mathbf{1}\in \Real^\nclass$ is the mean-zero one-hot embedding of the label, $\eta>0$ is the regularization strength. For data $x\in\tdata$, we use $\pred(x)=\argmax_i (\hat{b}^\top f(x))_i$ as the predictor.

We have the following more generalized version of Theorem~\ref{thm:sbm}.

\begin{theorem}\label{theorem:sbm}
	Let $\zeta>0$ and $\epsilon\in\left(0, \frac{1}{2}\right)$ be arbitrary constants.
	In the above stochastic block model, assume $\rho > \max\{\alpha, \beta\}$ and $\gamma < \min\{\alpha, \beta\}$.
	Then, there exists $\tilde{\xi}\in[1-\epsilon, 1]$, such that for any
	$\ndata\ge\Omega\left(\frac{\nclass\ndomain}{\min\{\alpha-\gamma, \beta-\gamma\}^2}\right)$ and regularization strength $\eta\in\left(0, \frac{(\alpha-\gamma)\epsilon}{2\nclass\rho}\right]$,
	with probability at least $1-\ndata^{-\zeta}$, we have
	
	\begin{align}
		\sum_{x\in\tdata} \norm{\hat{b}^\top f(x) - \tilde{\xi}\vec{y}_{x}}_2^2 \le O\left(\frac{1}{\eta^4 \cdot \min\{\alpha-\gamma, \beta-\gamma\}^2}\right)\cdot \textup{poly}(\nclass, \ndomain).
	\end{align}
	Furthermore, the target error can be bounded by
	\begin{align}
		\probP_{x\sim\tdata}\left(\pred(x)\ne \class{x}\right) \le O\left(\frac{1}{\eta^4 \cdot \min\{\alpha-\gamma, \beta-\gamma\}^2 \cdot \ndata}\right)\cdot \textup{poly}(\nclass, \ndomain).
	\end{align}	

\end{theorem}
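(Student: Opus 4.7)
The approach is to analyze the population adjacency $\bar A = \mathbb{E}[A]$ exactly, concentrate the sampled $A$ to $\bar A$ via matrix Bernstein, and then propagate the resulting feature perturbation through the source-side ridge regression to a target prediction bound, converting squared error to $0$--$1$ error at the end.

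First I would diagonalize $\bar A$ using the block symmetry of the SBM. Indexing points by $(y,d,i)$, the matrix $\bar A$ is piecewise constant on the $rm$ class-domain blocks with values $\rho,\alpha,\beta,\gamma$, so its top-$k$ eigenspace (with $k = r+m-1$) is spanned by block-constant vectors: the all-ones direction with a large "trivial'' eigenvalue, $r-1$ independent "class-direction'' vectors $\mathbf{1}_{y}-\mathbf{1}_{y'}$ with eigenvalue $\lambda_{\text{cls}} \approx n\bigl(\rho-\beta+(m-1)(\alpha-\gamma)\bigr)$, and $m-1$ "domain-direction'' vectors with eigenvalue $\lambda_{\text{dom}} \approx n\bigl(\rho-\alpha+(r-1)(\beta-\gamma)\bigr)$; all remaining eigenvalues vanish. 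The hypotheses $\rho > \max\{\alpha,\beta\}$ and $\min\{\alpha,\beta\} > \gamma$ force $\min\{\lambda_{\text{cls}},\lambda_{\text{dom}}\} \ge \Omega\bigl(n\min\{\alpha-\gamma,\beta-\gamma\}\bigr)$, yielding a clean gap between signal and null subspaces.

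Next, applying matrix Bernstein to the entrywise-independent deviation $A-\bar A$ gives $\|A-\bar A\|_{\text{op}} \le O(\sqrt{N\rho\log N})$ with probability $1-N^{-\zeta}$; the sample-size assumption $n \ge \Omega(rm/\min\{\alpha-\gamma,\beta-\gamma\}^2)$ ensures this is far smaller than the signal gap. A Davis--Kahan / Wedin $\sin\Theta$ bound then gives that the top-$k$ eigenspace of $A$ lies within angle $O(\text{gap}^{-1}\|A-\bar A\|_{\text{op}})$ of that of $\bar A$, i.e.\ $\|F - F^\star Q\|_F$ is small for some orthogonal $Q$, where $F^\star$ is the "ideal'' feature matrix from the exact eigenvectors of $\bar A$. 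By Lemma~3.2 of \citet{haochen2021spectral}, the minimizer of the spectral contrastive loss coincides (up to rotation) with the appropriately rescaled top-$k$ eigenvectors of $A$; I would use that the SBM has near-uniform degrees to pin down the normalization and translate the eigenspace bound to a Frobenius bound on $F-F^\star Q$.

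I would then compute the ideal head $b^\star$ in closed form: since $F^\star$ is block-constant and $\vec y_x$ depends only on class, disentanglement forces $b^\star$ to place weight only on the class-direction coordinates, producing the \emph{domain-invariant} prediction $b^{\star\top}F^\star_x = \tilde\xi\,\vec y_x$ on every $x$, where the ridge shrinkage $\tilde\xi = \lambda^S_{\text{cls}}/(\lambda^S_{\text{cls}}+\eta')$ lies in $[1-\epsilon,1]$ under the stated bound on $\eta$. A stability lemma for ridge regression---Lipschitzness of $X \mapsto (X^\top X+\eta I)^{-1}X^\top Y$ in $X$ at scale $1/\eta$, applied twice---then converts $\|F - F^\star Q\|_F$ into the desired bound on $\sum_{x\in\tdata}\|\hat b^\top f(x) - \tilde\xi\,\vec y_x\|_2^2$, the two $1/\eta$ factors combining with the squaring to give the $\eta^{-4}$ in the statement. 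The $0$--$1$ error bound finally follows by a Markov argument: mean-zero one-hot targets for distinct classes differ by $\sqrt 2$ in $\ell_2$, so the misclassified fraction is at most a constant times the squared-error sum divided by $(1-\epsilon)^2/2$ and by $|\tdata|=\Theta(n)$.

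The hardest step will be (i) controlling the unknown rotation $Q$: because $\lambda_{\text{cls}}$ and $\lambda_{\text{dom}}$ need not be well-separated from \emph{each other}, $Q$ can mix class and domain directions, so one must argue that the source-trained head absorbs precisely the part of $Q$ acting inside each eigenvalue cluster, yet still treats the two clusters separately---this is what makes the ideal prediction extend from source to target; and (ii) the ridge perturbation itself, where a naive application loses extra factors of $\eta^{-1}$, so the Lipschitz estimate must be instantiated with matching upper bounds on $\|F^\star\|_{\text{op}}$ and $\|\hat b\|_F$ to obtain the exponent in the final bound.
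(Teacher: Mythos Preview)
Your plan is sound and would go through, but it diverges from the paper's proof in exactly the place you flag as hardest. The paper never tracks the feature matrix $F$ or the rotation $Q$: it observes that the entire pipeline $F_\tdata (F_\sdata^\top F_\sdata + \eta' I)^{-1} F_\sdata^\top Y_\sdata$ depends on $F$ only through $FF^\top$, hence only through the rank-$k$ approximation $A_k$ (up to the edge-count normalization). Concretely, the target prediction is rewritten as $A_{k,(\tdata,\sdata)}\bigl(A_{k,(\sdata,\sdata)}+\xi I\bigr)^\dagger Y_\sdata$, which is manifestly rotation-invariant. The perturbation analysis then proceeds directly at the level of $A_k$ versus $\tilde A_k$ (a low-rank perturbation lemma combining Weyl and Davis--Kahan) followed by Stewart's perturbation bound for the Moore--Penrose inverse; the two $(\eta|E|\cdot|\sdata|/N^2)^{-1}$ factors from the latter, squared, produce the stated $\eta^{-4}$. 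Your route through $\|F-F^\star Q\|_F$ and a ridge Lipschitz estimate would also work, but you would eventually discover that rotation invariance of the prediction map makes your concern~(i) a non-issue---at which point you have essentially rederived the paper's $A_k$-based formula by a longer path.

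One correction to your eigenstructure claim: beyond the trivial, class, and domain directions there is a fourth block of eigenvectors $u\otimes v\otimes \bar{\mathbf{1}}_n$ (with $u\perp\mathbf{1}_m$, $v\perp\mathbf{1}_r$) carrying the generally nonzero eigenvalue $\lambda_d = n(\rho-\alpha-\beta+\gamma)$; only the within-block directions have eigenvalue zero. This does not break your argument---the relevant spectral gap is still of order $n\min\{r(\beta-\gamma), m(\alpha-\gamma)\}$---but you must account for $\lambda_d$ when locating $\tilde\lambda_{k+1}$.
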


Note that setting $\epsilon=\frac{1}{4}$ and $\zeta=1$ proves Theorem~\ref{thm:sbm}. In the rest of this subsection, we will give a proof of Theorem~\ref{theorem:sbm}.

Let $A_k\in\Real^{\Ndata\times\Ndata}$ be the rank-$k$ approximation of the adjacency matrix $A$, which contains the top $k$ components of $A$'s SVD decomposition. We use $A_{k, (\tdata, \sdata)}$ to denote the matrix by restricting $A_k$ to the rows corresponding to the source and the columns corresponding to the target. We use $A_{k, (\sdata, \sdata)}$ to denote the matrix by restricting $A_k$ to the rows and columns corresponding to the source.

Let $Y\in\Real^{\Ndata\times \nclass}$ be the label matrix where on the $x$-th row it contains the label target $\vec{y}_{{x} = }e_{\class{x}} - \frac{1}{\nclass}\cdot \mathbf{1}$. Let $Y_\sdata\in\Real^{|\sdata|\times \nclass}$ and $Y_\tdata \in\Real^{|\tdata|\times \nclass}$ be the matrices by restricting $Y$ to the source and target domain, respectively. 

Let $\pred\in\Real^{\Ndata\times \nclass}$ be the matrix with $\hat{b}^\top f(x)$ on its $x$-th row. Let $\pred_\tdata$ be the matrix by restricting $\pred$ to the target domain. The following lemma gives a closed-form expression for the prediction on the target domain.

\begin{lemma}\label{lemma:closed_form_expression}
	In the setting of Theorem~\ref{theorem:sbm}, let $|E|\triangleq\sum_{x, x'}A_{xx'}$ be the total number of edges. Then, 
	\begin{align}
		\pred_\tdata =  A_{k, (\tdata, \sdata)} \left(A_{k, (\sdata, \sdata)}+ \frac{|E|}{N^2} \cdot \eta\cdot |S|\cdot  I\right)^\dagger Y_\sdata,
	\end{align}
	where $(\cdot)^\dagger$ is the Moore–Penrose inverse, $|\sdata|$ is the number of data in the source domain.
\end{lemma}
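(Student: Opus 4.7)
The plan is to reduce the optimization in two steps: first characterize the pre-training optimum as a scaled rank-$\dfeature$ approximation of $A$, then solve the ridge-regression head in closed form and simplify using a standard push-through identity.

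\textbf{Rewriting the contrastive loss.} Stack the features into $F \in \Real^{\Ndata \times \dfeature}$ with rows $f(x)^\top$. Because positive pairs are sampled with probability $A_{xx'}/|E|$ and negative pairs uniformly with probability $1/\Ndata^2$, the population spectral contrastive loss equals
\[
-\frac{2}{|E|}\operatorname{tr}(F^\top A F) + \frac{1}{\Ndata^2}\|FF^\top\|_F^2 \;=\; \frac{1}{\Ndata^2}\Bigl\|FF^\top - \tfrac{\Ndata^2}{|E|}A\Bigr\|_F^2 + \text{const}.
\]
By the PSD Eckart--Young theorem, the minimizer over rank-$\leq \dfeature$ PSD matrices $G = FF^\top$ is $\tfrac{\Ndata^2}{|E|} A_{\dfeature,+}$, where $A_{\dfeature,+}$ is the rank-$\dfeature$ PSD truncation. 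Under the hypotheses $\rho > \max\{\alpha,\beta\}$ and $\min\{\alpha,\beta\} > \gamma$, the top $\dfeature = \nclass + \ndomain - 1$ eigenvalues of $\E[A]$ are strictly positive (verified by a direct eigenvalue computation on the highly symmetric block structure, as in the 4-node example of Section~\ref{sec:intuitions}), and matrix concentration transfers this to $A$ itself with high probability. Hence $A_{\dfeature,+} = A_\dfeature$ and any minimizer satisfies $FF^\top = \tfrac{\Ndata^2}{|E|} A_\dfeature$.

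\textbf{Solving the head and simplifying.} Pulling the regularizer out of the sum yields the matrix form $\|F_\sdata b - Y_\sdata\|_F^2 + \eta|\sdata|\,\|b\|_F^2$ with closed-form ridge solution
\[
\hat b = (F_\sdata^\top F_\sdata + \eta|\sdata| I_\dfeature)^{-1} F_\sdata^\top Y_\sdata, \qquad \pred_\tdata = F_\tdata \hat b.
\]
Applying the push-through identity $(F_\sdata^\top F_\sdata + \lambda I_\dfeature)^{-1} F_\sdata^\top = F_\sdata^\top (F_\sdata F_\sdata^\top + \lambda I_{|\sdata|})^{-1}$ with $\lambda = \eta|\sdata|$, and substituting $F_\tdata F_\sdata^\top = \tfrac{\Ndata^2}{|E|} A_{\dfeature,(\tdata,\sdata)}$ and $F_\sdata F_\sdata^\top = \tfrac{\Ndata^2}{|E|} A_{\dfeature,(\sdata,\sdata)}$, then factoring out the scalar $\tfrac{\Ndata^2}{|E|}$ from the bracketed inverse, gives
\[
\pred_\tdata \;=\; A_{\dfeature,(\tdata,\sdata)} \Bigl(A_{\dfeature,(\sdata,\sdata)} + \tfrac{|E|}{\Ndata^2}\eta|\sdata| I\Bigr)^{-1} Y_\sdata.
\]
Since the bracketed matrix is a PSD Gram-type matrix plus a strictly positive multiple of $I$, it is positive definite, so its inverse coincides with its pseudoinverse, matching the stated expression.

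\textbf{Main obstacle.} The delicate step is the reduction to $FF^\top = \tfrac{\Ndata^2}{|E|} A_\dfeature$: the PSD Eckart--Young theorem only guarantees the PSD truncation $A_{\dfeature,+}$, which equals $A_\dfeature$ only when the top $\dfeature$ eigenvalues are nonnegative. Verifying this under the connectivity assumptions is short but essential, and it is where the specific choice $\dfeature = \nclass + \ndomain - 1$ comes in. A secondary subtlety is that $F$ is only determined up to right-multiplication by an orthogonal matrix, but $\pred_\tdata$ depends on $F$ only through the cross-products $F_\tdata F_\sdata^\top$ and $F_\sdata F_\sdata^\top$, so this ambiguity is absorbed automatically.
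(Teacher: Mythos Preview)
Your proposal is correct and follows essentially the same route as the paper: rewrite the spectral contrastive loss as $\|FF^\top - \tfrac{\Ndata^2}{|E|}A\|_F^2$, invoke Eckart--Young to get $FF^\top = \tfrac{\Ndata^2}{|E|}A_\dfeature$, solve the ridge head in closed form, and simplify via the blocks $F_\tdata F_\sdata^\top$, $F_\sdata F_\sdata^\top$. The only substantive difference is that you make explicit the PSD caveat (that Eckart--Young over $FF^\top$ yields $A_{\dfeature,+}$, which equals $A_\dfeature$ only when the top $\dfeature$ eigenvalues are nonnegative) and the push-through identity, both of which the paper uses implicitly; your extra care here is an improvement, not a deviation.
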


\begin{proof}[Proof of Lemma~\ref{lemma:closed_form_expression}]

By the definition of spectral contrastive loss, we can rewrite the loss function as 
\begin{align}
  &-2\sum_{x, x'} \frac{A_{xx'}}{|E|} f(x)^\top f(x') + \sum_{x, x'} \frac{1}{N^2} \left(f(x)^\top f(x')\right)^2\\
  =& \norm{\frac{\Ndata}{|E|}\cdot {A} - \left(\frac{1}{\sqrt{N}}\cdot F\right)\left(\frac{1}{\sqrt{N}}\cdot F\right)^\top}_F^2 + \textup{const},
\end{align}
where $F\in\Real^{\Ndata\times k}$ is the matrix which the $x$-th row contains $f(x)^\top$. According to the Eckart–Young–Mirsky theorem, the minimizer of this loss function is $F = \frac{N}{\sqrt{|E|}} S_k$ where $S_k\in\Real^{\Ndata\times k}$ is a matrix such that $A_k = S_k^\top S_k$. 

Let $S_{k, \sdata}  \in \Real^{|\sdata|\times k}$ be the matrix gotten by restricting $S_k$ to the rows corresponding to the source data, and $S_{k, \tdata}$ be the matrix gotten by restricting $S_k$ to the rows corresponding to the target data. 
The head learned on the source domain can be expressed as 
\begin{align}
	\shead &= \argmin_{\head \in \Real^{(\dfeature \times \nclass)}} \sum_{x\in\sdata} \left(\norm{\head^\top f(x) - \vec{y}_{{x}}}_2^2 + \eta \norm{\head}_F^2\right)\\
	&= \frac{\sqrt{|E|}}{\Ndata} \cdot S_{k, \sdata}^\top \left(S_{k, \sdata} S_{k, \sdata}^\top  + \frac{|E|}{N^2} \cdot \eta\cdot |S|\cdot  I \right)^\dagger Y_\sdata.
\end{align}

Therefore, the prediction on the target domain $\pred_\tdata$ is
\begin{align}
	\pred_\tdata =  F_\tdata \shead =  S_{k, \tdata} S_{k, \sdata}^\top \left(S_{k, \sdata} S_{k, \sdata}^\top+ \frac{|E|}{N^2} \cdot \eta\cdot |S|\cdot  I\right)^\dagger Y_\sdata = A_{k, (\tdata, \sdata)} \left(A_{k, (\sdata, \sdata)}+ \frac{|E|}{N^2} \cdot \eta\cdot |S|\cdot  I\right)^\dagger Y_\sdata.
\end{align}

\end{proof}

The following lemma shows that the prediction in the expectation graph is accuracte.
\begin{lemma}\label{lemma:ideal_prediction}
	Let  $\tilde{A} \triangleq \Exp[A]$ be the expectation of the adjacency matrix. Then, for any $\xi>0$, we have
	\begin{align}
		\tilde{A}_{k, (\tdata, \sdata)} \left(\tilde{A}_{k, (\sdata, \sdata)} +\xi I\right)^\dagger Y_\sdata = \frac{\lambda_c}{\lambda_c + \ndomain\xi} \cdot Y_\tdata,
	\end{align}
where $k=\nclass+\ndomain-1$ and $\lambda_c \triangleq \ndata\rho - \ndata\beta + \ndata(\ndomain-1)\alpha - \ndata(\ndomain-1)\gamma$. Furthermore, if we use $\tilde{\lambda}_i$ to denote the $i$-th largest eigenvalue of $\tilde{A}$, we have $\tilde{\lambda}_{k}-\tilde{\lambda}_{k+1} = \ndata\min\left\{\nclass (\beta-\gamma), \ndomain (\alpha-\gamma)\right\}$ and $\tilde{\lambda}_1 \le \ndata\nclass\ndomain\rho$.
\end{lemma}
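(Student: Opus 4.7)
My strategy exploits the heavy tensor/Kronecker structure of $\tilde{A} = \E[A]$. Indexing nodes by triples $(d,y,i)$ of (domain, class, within-block index), $\tilde{A}$ is constant on every $\ndata\times\ndata$ block $\{(d,y)\}\times\{(d',y')\}$, with value determined by whether $d=d'$ and $y=y'$ via $\rho,\alpha,\beta,\gamma$. Up to reordering, $\tilde{A} = M \otimes J_{\ndata}$, where $M \in \Real^{\nclass\ndomain \times \nclass\ndomain}$ aggregates the four block values and $J_\ndata$ is the all-ones matrix; the nonzero eigenpairs of $\tilde{A}$ are thus $\ndata$ times those of $M$, tensored along the within-block coordinate with $\mathbf{1}_\ndata/\sqrt{\ndata}$. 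Writing $M$ as a linear combination of Kronecker products of $I_\ndomain, J_\ndomain, I_\nclass, J_\nclass$, its eigenvectors are tensor products $u\otimes v$ with $u \in \{\mathbf{1}_\ndomain/\sqrt{\ndomain}\} \cup \mathbf{1}_\ndomain^\perp$ and $v$ chosen analogously in $\Real^\nclass$. This yields four eigenvalue types of multiplicities $1,\,\nclass-1,\,\ndomain-1,\,(\ndomain-1)(\nclass-1)$, corresponding to the ``all-ones,'' ``class-direction'' ($\lambda_c$), ``domain-direction,'' and ``across-both'' ($\lambda_d$) modes.

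\textbf{Eigenvalue gap and the structure of $Y$.} Under $\rho > \max\{\alpha,\beta\}$ and $\gamma < \min\{\alpha,\beta\}$, a short arithmetic check shows that $\lambda_d$ is strictly smallest, so the top $k = \nclass+\ndomain-1$ modes are exactly the other three types. The gaps $\lambda_c - \lambda_d = \ndata\ndomain(\alpha-\gamma)$ and (domain mode) $-\lambda_d = \ndata\nclass(\beta-\gamma)$ deliver $\tilde{\lambda}_k - \tilde{\lambda}_{k+1} = \ndata\min\{\nclass(\beta-\gamma),\ndomain(\alpha-\gamma)\}$, and $\tilde{\lambda}_1 = \lambda_a \le \ndata\nclass\ndomain\rho$ follows from $\rho \ge \alpha,\beta,\gamma$. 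For the main identity, I first observe that the $y$-th column of the centered label matrix $Y$ equals $\mathbf{1}_\ndomain \otimes (e_y - \tfrac{1}{\nclass}\mathbf{1}_\nclass) \otimes \mathbf{1}_\ndata$, a scalar multiple of a class-direction eigenvector of $\tilde{A}$; hence every column of $Y$ lies in the class eigenspace.

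\textbf{Scalar action and the main obstacle.} The core step is to show that $\tilde{A}_{k,(\sdata,\sdata)}$ acts on the restriction to $\sdata$ of the class eigenspace as scalar multiplication by $\lambda_c/\ndomain$. The trick is to zero-pad a restricted class eigenvector back to $\Real^\Ndata$ and split via $\mathbf{e}_1^\ndomain = \tfrac{1}{\ndomain}\mathbf{1}_\ndomain + (\mathbf{e}_1^\ndomain - \tfrac{1}{\ndomain}\mathbf{1}_\ndomain)$: the first summand reconstitutes $\tfrac{1}{\ndomain}$ of the original class eigenvector, while the second is a pure across-both eigenvector which the top-$k$ truncation annihilates. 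Applying $\tilde{A}_k$ and restricting back to $\sdata$ reproduces the original restricted vector scaled by $\lambda_c/\ndomain$, so $(\tilde{A}_{k,(\sdata,\sdata)} + \xi I)^\dagger$ acts as $\tfrac{\ndomain}{\lambda_c + \ndomain\xi}$ on $Y_\sdata$. The same zero-pad-and-decompose argument applied to $\tilde{A}_{k,(\tdata,\sdata)} Y_\sdata$ yields $\tfrac{\lambda_c}{\ndomain} Y_\tdata$; multiplying produces $\tfrac{\lambda_c}{\lambda_c + \ndomain\xi} Y_\tdata$ as claimed. The main obstacle is precisely verifying that this zero-pad decomposition only generates components in the across-both eigenspace (and not in, say, the domain-direction eigenspace), so that the truncation to the top $k$ modes removes them cleanly; this is where $\gamma < \min\{\alpha,\beta\}$ is essential, as it places the across-both eigenspace strictly below the top-$k$ cutoff.
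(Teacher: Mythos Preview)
Your proposal is correct and follows essentially the same approach as the paper. Both exploit the Kronecker/tensor structure of $\tilde{A}$ to identify the four eigenvalue types $\lambda_a,\lambda_b,\lambda_c,\lambda_d$ with their multiplicities, verify that the top $k=\nclass+\ndomain-1$ modes exclude exactly the $\lambda_d$ (across-both) eigenspace, and then use the fact that each column of $Y$ lies in the $\lambda_c$ eigenspace. The only stylistic difference is that the paper writes out $\tilde{A}_{k,(\sdata,\sdata)}$ and $\tilde{A}_{k,(\data,\sdata)}$ explicitly as sums of Kronecker products and reads off the action on $Y_\sdata$ directly, whereas you arrive at the same scalar action $\lambda_c/\ndomain$ via the zero-pad-and-split argument $e_1^\ndomain = \tfrac{1}{\ndomain}\mathbf{1}_\ndomain + (e_1^\ndomain - \tfrac{1}{\ndomain}\mathbf{1}_\ndomain)$; these are two presentations of the same computation.
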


\begin{proof}[Proof of Lemma~\ref{lemma:ideal_prediction}]

By the definition of the graph, every entry $\tilde{A}_{xx'}$ is in the set of $\{\rho, \beta, \alpha, \gamma\}$, depending on whether $x$ and $x'$ belong to the same domain/class. We can index every node $x$ as $(\dom{x}, \class{x}, \id{x})$, where $\id{x}\in[n]$ is the index of $x$ within domain $\dom{x}$ and class $\class{x}$. For any integer $i\ge1$, we use $\mathbf{1}_i$ to denote the $i$-dimensional all-one vector, and $\bar{\mathbf{1}}_i = \mathbf{1}_i / \norm{\mathbf{1}_i}$ be its normalized unit vector. We use $\Bbb{S}^i$ to denote the $i$-dimensional unit-norm sphere.

It can be verified that $\tilde{A}$ can be decomposed into the sum of several matrix Kronecker products:

\begin{align}
	\tilde{A} =\ \ \  &(\beta-\gamma) \cdot I_\ndomain \otimes (\mathbf{1}_\nclass \mathbf{1}_\nclass^\top) \otimes (\mathbf{1}_\ndata \mathbf{1}_\ndata^\top) \\
	+ &(\alpha-\gamma) \cdot  (\mathbf{1}_\ndomain \mathbf{1}_\ndomain^\top) \otimes I_\nclass \otimes (\mathbf{1}_\ndata \mathbf{1}_\ndata^\top)\\
	+ & (\rho-\beta-\alpha+\gamma) \cdot I_\ndomain \otimes I_\nclass \otimes  (\mathbf{1}_\ndata \mathbf{1}_\ndata^\top) \\
	+ & \gamma \cdot (\mathbf{1}_\ndomain \mathbf{1}_\ndomain^\top) \otimes (\mathbf{1}_\nclass \mathbf{1}_\nclass^\top) \otimes  (\mathbf{1}_\ndata \mathbf{1}_\ndata^\top).
\end{align}

As a result, $\tilde{A}$ has the following four sets of eigenvectors with non-zero eigenvalues:
\begin{itemize}
	\item{$\bar{\mathbf{1}}_\ndomain\otimes \bar{\mathbf{1}}_\nclass \otimes \bar{\mathbf{1}}_\ndata$. The corresponding eigenvalue is $\lambda_a \triangleq \ndata\rho + \ndata(\nclass-1)\beta + \ndata(\ndomain-1)\alpha + \ndata(\ndomain-1)(\nclass-1)\gamma$.}
	\item{$u\otimes \bar{\mathbf{1}}_\nclass \otimes \bar{\mathbf{1}}_\ndata$, where $u\in\Bbb{S}^\ndomain$ and $u^\top \mathbf{1}_\ndomain=0$. The corresponding eigenvalue is $\lambda_b \triangleq\ndata\rho + \ndata(\nclass-1)\beta - \ndata\alpha - \ndata(\nclass-1)\gamma$.}
	\item{$\bar{\mathbf{1}}_\ndomain \otimes v \otimes \bar{\mathbf{1}}_\ndata$, where $v\in\Bbb{S}^\nclass$ and $v^\top \mathbf{1}_\nclass=0$. The corresponding eigenvalue is $\lambda_c \triangleq\ndata\rho - \ndata\beta + \ndata(\ndomain-1)\alpha - \ndata(\ndomain-1)\gamma$.}
	\item{$u \otimes v \otimes \bar{\mathbf{1}}_\ndata$, where $u\in\Bbb{S}^\ndomain$, $v\in\Bbb{S}^\nclass$ and $u^\top \mathbf{1}_\ndomain=0$, $v^\top \mathbf{1}_\nclass=0$. The corresponding eigenvalue is $\lambda_d \triangleq\ndata\rho - \ndata\beta - \ndata\alpha + \ndata\gamma$.}
\end{itemize}

Since $\rho>\max\{\beta, \alpha\}$ and $ \min\{\beta, \alpha\} > \gamma$, we know that all of these eigenvalues are positive. When $k=\nclass+\ndomain-1$, $\tilde{A}_k$ will contain exactly the first three sets of eigenvectors since they correspond to the top-$k$ eigenvalues. This suggests that we can write $\tilde{A}_k$ as follows 
\begin{align}
	\tilde{A}_k = \lambda_a \cdot \bar{\mathbf{1}}_\ndomain\bar{\mathbf{1}}_\ndomain^\top \otimes \bar{\mathbf{1}}_\nclass \bar{\mathbf{1}}_\nclass^\top \otimes \bar{\mathbf{1}}_\ndata \bar{\mathbf{1}}_\ndata^\top
	+ \lambda_b \cdot (I_\ndomain - \bar{\mathbf{1}}_\ndomain\bar{\mathbf{1}}_\ndomain^\top)\otimes \bar{\mathbf{1}}_\nclass\bar{\mathbf{1}}_\nclass^\top \otimes \bar{\mathbf{1}}_\ndata\bar{\mathbf{1}}_\ndata^\top
	+ \lambda_c \cdot \bar{\mathbf{1}}_\ndomain\bar{\mathbf{1}}_\ndomain^\top \otimes (I_\nclass -  \bar{\mathbf{1}}_\nclass \bar{\mathbf{1}}_\nclass^\top) \otimes \bar{\mathbf{1}}_\ndata \bar{\mathbf{1}}_\ndata^\top.
\end{align}

Restricting to the source domain, we have
\begin{align}
	\tilde{A}_{k, (\sdata, \sdata)} = \frac{\lambda_a + (\ndomain-1)\lambda_b}{\ndomain} \cdot \bar{\mathbf{1}}_\nclass \bar{\mathbf{1}}_\nclass^\top \otimes \bar{\mathbf{1}}_\ndata \bar{\mathbf{1}}_\ndata^\top + \frac{\lambda_c}{\ndomain} \cdot (I_\nclass -  \bar{\mathbf{1}}_\nclass\bar{\mathbf{1}}_\nclass^\top ) \otimes\bar{\mathbf{1}}_\ndata \bar{\mathbf{1}}_\ndata^\top.
\end{align}

By the definition of pseudoinverse, we have 
\begin{align}
\left(\tilde{A}_{k, (\sdata, \sdata)}+\xi I\right)^\dagger =\left( \frac{\lambda_a + (\ndomain-1)\lambda_b}{\ndomain} + \xi\right)^{-1} \cdot \bar{\mathbf{1}}_\nclass \bar{\mathbf{1}}_\nclass^\top \otimes \bar{\mathbf{1}}_\ndata \bar{\mathbf{1}}_\ndata^\top + \left(\frac{\lambda_c}{\ndomain} + \xi\right)^{-1} \cdot (I_\nclass -  \bar{\mathbf{1}}_\nclass\bar{\mathbf{1}}_\nclass^\top ) \otimes\bar{\mathbf{1}}_\ndata \bar{\mathbf{1}}_\ndata^\top.
\end{align}
Notice that $Y_\sdata$ satisfies $\left(\bar{\mathbf{1}}_\nclass \bar{\mathbf{1}}_\nclass^\top \otimes \bar{\mathbf{1}}_\ndata \bar{\mathbf{1}}_\ndata^\top \right)Y_\sdata=0$ and $\left((I_\nclass -  \bar{\mathbf{1}}_\nclass\bar{\mathbf{1}}_\nclass^\top ) \otimes\bar{\mathbf{1}}_\ndata \bar{\mathbf{1}}_\ndata^\top\right)Y_\sdata = Y_\sdata$, we have 
\begin{align}
\left(\tilde{A}_{k, (\sdata, \sdata)}+\xi I\right)^\dagger Y_\sdata = \left(\frac{\lambda_c}{\ndomain} + \xi\right)^{-1}  Y_\sdata.
\end{align}

We can also write $\tilde{A}_{k, (\data, \sdata)}$ in the form of kronecker products as follows:
\begin{align}
\tilde{A}_{k, (\data, \sdata)} = \frac{\lambda_a}{\ndomain} \cdot \mathbf{1}_\ndomain \otimes \bar{\mathbf{1}}_\nclass \bar{\mathbf{1}}_\nclass^\top \otimes \bar{\mathbf{1}}_\ndata \bar{\mathbf{1}}_\ndata^\top  + \lambda_b\cdot (e_1 - \frac{1}{m}\mathbf{1}_\ndomain) \otimes \bar{\mathbf{1}}_\nclass \bar{\mathbf{1}}_\nclass^\top \otimes \bar{\mathbf{1}}_\ndata \bar{\mathbf{1}}_\ndata^\top + \frac{\lambda_c}{\ndomain} \cdot \mathbf{1}_\ndomain \otimes (I_\nclass - \bar{\mathbf{1}}_\nclass \bar{\mathbf{1}}_\nclass^\top)\otimes \bar{\mathbf{1}}_\ndata \bar{\mathbf{1}}_\ndata^\top .
\end{align}
Again, using the fact that $\left(\bar{\mathbf{1}}_\nclass \bar{\mathbf{1}}_\nclass^\top \otimes \bar{\mathbf{1}}_\ndata \bar{\mathbf{1}}_\ndata^\top \right)Y_\sdata=0$ and $\left((I_\nclass -  \bar{\mathbf{1}}_\nclass\bar{\mathbf{1}}_\nclass^\top ) \otimes\bar{\mathbf{1}}_\ndata \bar{\mathbf{1}}_\ndata^\top\right)Y_\sdata = Y_\sdata$, we have
\begin{align}
	\tilde{A}_{k, (\data, \sdata)} \left(\tilde{A}_{k, (\sdata, \sdata)}+\xi I\right)^\dagger Y_\sdata = \frac{\lambda_c}{\lambda_c + \ndomain\xi} \mathbf{1}_\ndomain \otimes Y_\sdata.
\end{align}

Finally, noticing that $\mathbf{1}_\ndomain \otimes Y_\sdata = Y$ finishes the proof.

\end{proof}

The followng lemma shows that when a matrix $A$ is close to $\tilde{A}$, their rank-$k$ approximations $A_k$ and $\tilde{A}_k$ are also close.

\begin{lemma}\label{lemma:low_rank_perturbation}
	Let $\tilde{A} \triangleq \Exp[A]$ be the expectation of the adjacency matrix. Let $A_k$ and $\tilde{A}_k$ be the rank-$k$ approximations of $A$ and $\tilde{A}$, respectively. Let $\tilde{\lambda}_i$ be the $i$-th largest eigenvalue of $\tilde{A}$, $\norm{\cdot}$ be the operator norm of a matrix or $\ell_2$-norm of a vector. Then, when $\norm{A-\tilde{A}}< \tilde{\lambda}_k - \tilde{\lambda}_{k+1}$, we have
	\begin{align}
	\norm{A_k - \tilde{A}_k} \le \left(1+\frac{2\norm{A-\tilde{A}} + 2\norm{\tilde{A}}}{(\tilde{\lambda}_k - \tilde{\lambda}_{k+1}) - \norm{A-\tilde{A}}}\right) \cdot \norm{A-\tilde{A}}.
	\end{align}
\end{lemma}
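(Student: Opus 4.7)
The plan is to express the rank-$k$ approximations via spectral projectors, decompose their difference into a telescoping form that isolates the ``matrix perturbation'' $E := A - \tilde{A}$ from the ``subspace perturbation'' $P - \tilde{P}$, and then invoke a Davis--Kahan-type $\sin\Theta$ bound on the latter. Both $A$ and $\tilde{A}$ are symmetric (the adjacency matrix of the undirected graph and its entrywise expectation), so I may write $A_k = P A P$ and $\tilde{A}_k = \tilde{P} \tilde{A} \tilde{P}$, where $P,\tilde{P}$ are the orthogonal projectors onto the top-$k$ eigenspaces of $A$ and $\tilde{A}$ respectively.

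The first step is the algebraic identity
\begin{align*}
A_k - \tilde{A}_k \;=\; P A (P - \tilde{P}) \;+\; (P - \tilde{P}) A \tilde{P} \;+\; \tilde{P}\,E\,\tilde{P},
\end{align*}
obtained by adding and subtracting $P A \tilde{P}$ and $\tilde{P} A \tilde{P}$. Taking operator norms and using $\|P\| = \|\tilde{P}\| = 1$ yields $\|A_k - \tilde{A}_k\| \le 2\|A\|\cdot\|P - \tilde{P}\| + \|E\|$. A single application of the triangle inequality then gives $\|A\| \le \|\tilde{A}\| + \|E\|$, producing the factor $2(\|\tilde{A}\| + \|E\|)$ that appears in the target bound.

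The second step is to bound $\|P - \tilde{P}\|$. I would apply Weyl's inequality to control how the spectrum of $A$ sits relative to that of $\tilde{A}$: specifically, $\lambda_k(A) \ge \tilde{\lambda}_k - \|E\|$ and $\lambda_{k+1}(A) \le \tilde{\lambda}_{k+1} + \|E\|$, so the separation between the top-$k$ eigenvalues of $\tilde{A}$ and the bottom eigenvalues of $A$ (and symmetrically) is at least $(\tilde{\lambda}_k - \tilde{\lambda}_{k+1}) - \|E\|$, which is positive by hypothesis. The Davis--Kahan $\sin\Theta$ theorem then gives
\begin{align*}
\|P - \tilde{P}\| \;=\; \|\sin\Theta(P,\tilde{P})\| \;\le\; \frac{\|E\|}{(\tilde{\lambda}_k - \tilde{\lambda}_{k+1}) - \|E\|}.
\end{align*}
Substituting into the bound from step one yields exactly the claimed inequality.

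The main obstacle is not technical difficulty but bookkeeping: one must choose a decomposition of $A_k - \tilde{A}_k$ whose constants line up with the stated bound (other natural decompositions give slightly different coefficients), and one must invoke the correct form of Davis--Kahan for two-sided spectral projectors, together with Weyl's inequality, so that the ``effective gap'' in the denominator is the original gap minus $\|E\|$ rather than a weaker quantity. Once the decomposition above is chosen and the $\sin\Theta$ bound is invoked, the rest is a direct computation.
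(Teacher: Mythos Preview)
Your proof is correct and reaches the stated bound with the exact constants, but it takes a somewhat different route from the paper's argument. The paper works directly with the eigenvector blocks $U_1,U_2,\tilde U_1,\tilde U_2$: it bounds $\norm{A_k-\tilde A_k}$ by splitting the unit sphere into the two complementary subspaces $\operatorname{ran}\tilde U_1$ and $\operatorname{ran}\tilde U_2$, handles the first piece by adding and subtracting the full matrices $A$ and $\tilde A$ (so that the top-$k$ difference becomes a bottom-block difference), and then applies Davis--Kahan to the cross terms $\norm{U_2^\top\tilde U_1}$ and $\norm{U_1^\top\tilde U_2}$ separately; Weyl is used both to lower-bound the gap $\delta=\min\{\lambda_k-\tilde\lambda_{k+1},\tilde\lambda_k-\lambda_{k+1}\}$ and to convert $\norm{\Sigma_1}+\norm{\Sigma_2}$ back to $2\norm{\tilde A}+2\norm{A-\tilde A}$. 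Your projector-based telescoping $A_k-\tilde A_k=PA(P-\tilde P)+(P-\tilde P)A\tilde P+\tilde P E\tilde P$ collapses all of this into a single three-term identity and a single invocation of Davis--Kahan on $\norm{P-\tilde P}$, which is cleaner and makes the origin of each constant in the final bound more transparent. Both arguments hinge on the same two ingredients (Weyl for the effective gap, Davis--Kahan for the subspace perturbation), so neither is more general, but yours involves less bookkeeping.
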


\begin{proof}[Proof of Lemma~\ref{lemma:low_rank_perturbation}]
	Let the SVD decomposition of $A$ and $\tilde{A}$ be as follows:	
	\begin{align}
		A = \begin{bmatrix} 
		U_1 & U_2
		\end{bmatrix} 
	    \begin{bmatrix}
	    	\Sigma_1 & 0\\ 0 & \Sigma_2
	    \end{bmatrix}
		\begin{bmatrix}
			U_1^\top \\ U_2^\top
		\end{bmatrix},		
	\end{align}
	\begin{align}
	\tilde{A} = \begin{bmatrix} 
		\tilde{U}_1 & \tilde{U}_2
	\end{bmatrix} 
	\begin{bmatrix}
		\tilde{\Sigma}_1 & 0 \\ 0 & \tilde{\Sigma}_2
	\end{bmatrix}
	\begin{bmatrix}
		\tilde{U}_1^\top \\ \tilde{U}_2^\top
	\end{bmatrix},		
\end{align}
where $\Sigma_1$ and $\tilde{\Sigma}_1$ contain the top $k$ eigenvalues of $A$ and $\tilde{A}$, respectively. By the definition of rank-$k$ approximation, we have $A_k = U_1 \Sigma_1 U_1^\top$ and $\tilde{A}_k = \tilde{U}_1 \tilde{\Sigma}_1 \tilde{U}_1^\top$. Therefore,

\begin{align}
	&\norm{A_k-\tilde{A}_k}  = \norm{U_1 \Sigma_1 U_1^\top - \tilde{U}_1 \tilde{\Sigma}_1 \tilde{U}_1^\top} 
	= \max_{v\in\Real^\Ndata: \norm{v}=1} \norm{\left(U_1 \Sigma_1 U_1^\top - \tilde{U}_1 \tilde{\Sigma}_1 \tilde{U}_1^\top\right)v}\\
	\le& \underbrace{\max_{v\in\Real^\Ndata: \norm{v}=1, v^\top\tilde{U}_2=0} \norm{\left(U_1 \Sigma_1 U_1^\top - \tilde{U}_1 \tilde{\Sigma}_1 \tilde{U}_1^\top\right)v}}_{C_1} + \underbrace{\max_{v\in\Real^\Ndata: \norm{v}=1, v^\top\tilde{U}_1=0} \norm{\left(U_1 \Sigma_1 U_1^\top - \tilde{U}_1 \tilde{\Sigma}_1 \tilde{U}_1^\top\right)v}}_{C_2}.
\end{align}

Let $\delta \triangleq \min\{\lambda_k - \tilde{\lambda}_{k+1}, \tilde{\lambda}_k - \lambda_{k+1}\}$. According to Weyl's inequality, we have 
\begin{align}
\norm{	    \begin{bmatrix}
		\Sigma_1 & 0\\ 0 & \Sigma_2
	\end{bmatrix}
 - 	\begin{bmatrix}
 	\tilde{\Sigma}_1 & 0 \\ 0 & \tilde{\Sigma}_2
 \end{bmatrix}
} \le \norm{A-\tilde{A}}.
\end{align}
So we have $\delta \ge\left( \tilde{\lambda}_k - \tilde{\lambda}_{k+1}\right) - \norm{A-\tilde{A}} > 0$. Now we bound $C_1$ and $C_2$ separately. To bound $C_1$, we have
\begin{align}
	C_1 \le& \norm{A-\tilde{A}} + \max_{v\in\Real^\Ndata: \norm{v}=1, v^\top\tilde{U}_2=0} \norm{\left(U_2 \Sigma_2 U_2^\top - \tilde{U}_2 \tilde{\Sigma}_2 \tilde{U}_2^\top\right)v}\\
    \le &  \norm{A-\tilde{A}} + \norm{U_2} \cdot \norm{\Sigma_2} \cdot \norm{U_2^\top \tilde{U}_1}\\
    \le& \left(1+\frac{\norm{\Sigma_2}}{\delta}\right) \norm{A-\tilde{A}}
\end{align}
where the first line follows the triangle inequality, the second line uses $v^\top \tilde{U}_2=0$, and the third line uses Davis-Kahan theorem~\citep{davis1970rotation}. Similarly, to bound $C_2$, we have
\begin{align}
	C_2 &= \max_{v\in\Real^\Ndata: \norm{v}=1, v^\top \tilde{U}_1=0} \norm{U_1\Sigma_1 U_1^\top v}_2\\
	&\le \norm{U_1}\cdot \norm{\Sigma_2} \cdot \norm{U_1^\top \tilde{U}_2} \le \frac{\norm{\Sigma_1}\cdot \norm{A-\tilde{A}}}{\delta}.
\end{align}
Adding $C_1$ and $C_2$ together we have
\begin{align}
	\norm{A_k -\tilde{A}_k} \le& \left(1+\frac{\norm{\Sigma_1} + \norm{\Sigma_2}}{\delta}\right)\cdot \norm{A-\tilde{A}}\\
	\le& \left(1+\frac{2\norm{A-\tilde{A}} + 2\norm{\tilde{A}}}{(\tilde{\lambda}_k - \tilde{\lambda}_{k+1}) - \norm{A-\tilde{A}}}\right) \cdot \norm{A-\tilde{A}},
\end{align}
where the second line again uses Weyl's inequality.
\end{proof}

To bound the differene between $A$ and $\tilde{A}$, we use the following lemma which is adapted from Theorem 5.2 of~\citep{lei2015consistency}.
\begin{lemma}\label{lemma:perturbation}
	Let $A$ be the adjacency matrix of a random graph on $\Ndata$ nodes in which edges occur
	independently. Let $E[A] = \tilde{A} = (\tilde{A}_{ij})_{i,j=1,...,\Ndata}$ be the expectation adjacency matrix and assume that $\Ndata \max_{ij} \tilde{A}_{ij} \ge \log \Ndata$. Then, for any $\zeta>0$ there exists a constant $C = C(\zeta)$
	such that
	\begin{align}
		\norm{A-P}\le C\sqrt{\Ndata}
	\end{align}
	with probability at least $1-\Ndata^{-2\zeta}$.	
\end{lemma}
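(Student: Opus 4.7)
The statement is essentially the spectral concentration bound for random graphs with independent edges from Lei and Rinaldo~\citep{lei2015consistency}, so my plan is to reproduce the main steps of that argument. The starting point is the variational identity $\|A-\tilde A\| = \sup_{u,v\in S^{N-1}} u^\top (A-\tilde A)\, v$. A standard volumetric bound gives a $1/2$-net $\mathcal N$ of $S^{N-1}$ with $|\mathcal N| \le e^{O(N)}$, and a discretization argument shows $\|A-\tilde A\|$ is controlled up to a constant by $\max_{u,v\in\mathcal N} u^\top (A-\tilde A)\,v$. So it suffices to bound a single bilinear form with high probability and then union-bound over $\mathcal N \times \mathcal N$.

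For fixed $u,v\in\mathcal N$, the quantity $u^\top (A-\tilde A)\,v = \sum_{i<j} (u_i v_j + u_j v_i)(A_{ij}-\tilde A_{ij})$ is a sum of independent mean-zero bounded random variables, and one is tempted to invoke Bernstein directly. The subtlety is that an individual coefficient $u_i v_j$ can be as large as $O(1)$, producing an ``additive'' Bernstein term that defeats the $O(\sqrt N)$ rate after union bounding over a net of exponential size. My plan is to handle this by splitting index pairs into a \emph{light} set with $|u_i v_j| \le \tau$ and a \emph{heavy} set with $|u_i v_j| > \tau$, for the threshold $\tau \asymp \sqrt{\max_{ij} \tilde A_{ij}/N}$. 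On the light part, Bernstein with variance proxy $\sum_{ij} u_i^2 v_j^2\,\tilde A_{ij}(1-\tilde A_{ij}) \le \max_{ij}\tilde A_{ij}$ and increment bound $\tau$ gives a sub-Gaussian tail of order $\sqrt{N \max_{ij}\tilde A_{ij}} + \tau \log|\mathcal N|$, which is $O(\sqrt N)$ precisely because of the hypothesis $N\max_{ij}\tilde A_{ij}\ge \log N$.

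For the heavy part, I would bound the contribution combinatorially in terms of the discrepancies $e(S,T)$: the number of edges of $A$ between vertex subsets $S$ and $T$. Lei--Rinaldo's key lemma shows that $e(S,T)$ is within a constant factor of its expectation simultaneously for all pairs of small subsets, via a Chernoff tail combined with a union bound over subsets of each fixed size. Substituting the discrepancy estimate into the heavy contribution shows it is also $O(\sqrt N)$. Combining the two regimes and union-bounding over $\mathcal N \times \mathcal N$ yields $\|A-\tilde A\| \le C\sqrt N$ with probability at least $1-N^{-2\zeta}$, with the constant $C(\zeta)$ coming from inflating the Chernoff/Bernstein exponents by a $\zeta$-dependent factor.

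The main obstacle is the heavy-pair / discrepancy estimate. A naive Chernoff bound over vertex subsets overcounts the number of edges in very small subsets and leads to a rate strictly worse than $\sqrt N$. The fix, which is the technical heart of~\citep{lei2015consistency}, is to treat two sub-regimes separately: subsets of size at least $\Omega(N/(N\max_{ij}\tilde A_{ij}))$, where the Chernoff mean dominates and the tail is sub-Gaussian, and smaller subsets, where one exploits the $\log N$ factor from the hypothesis to absorb the combinatorial entropy. Tracking how the exponents scale with $\zeta$ yields the required probability and the dependence $C = C(\zeta)$.
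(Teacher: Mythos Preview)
The paper does not actually prove this lemma: it is simply stated as an adaptation of Theorem~5.2 of \citep{lei2015consistency} and used as a black box. Your proposal is a faithful sketch of exactly that Lei--Rinaldo argument (the $\epsilon$-net reduction, the light/heavy splitting of the bilinear form, and the discrepancy bound for edge counts between subsets), so you are supplying the proof that the paper defers to the reference; nothing more is needed.
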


Now we prove Theorem~\ref{theorem:sbm} using the above lemmas.

\begin{proof} [Proof of Theorem~\ref{theorem:sbm}]

By Lemma~\ref{lemma:closed_form_expression}, we have that the prediction on the target domain is
\begin{align}
	\pred_\tdata =  A_{k, (\tdata, \sdata)} \left(A_{k, (\sdata, \sdata)}+ \frac{|E|}{N^2} \cdot \eta\cdot |S|\cdot  I\right)^\dagger Y_\sdata.
\end{align}

Let $|\tilde{E}| \triangleq \mathbf{1}_\Ndata^\top \tilde{A} \mathbf{1}_\Ndata $ be the expectation of number of edges in the graph. We define the ideal prediction as 
\begin{align}
	\widetilde{\pred}_\tdata = \tilde{A}_{k, (\tdata, \sdata)} \left(\tilde{A}_{k, (\sdata, \sdata)}+ \frac{|\tilde{E}|}{N^2} \cdot \eta\cdot |S|\cdot  I\right)^\dagger Y_\sdata.
\end{align}

We will bound the difference between $\pred_\tdata$ and $\tilde{\pred}_\tdata$. For every class $c\in [\nclass]$, define the following error vector
\begin{align}
	e^c \triangleq A_{k, (\tdata, \sdata)} \left( A_{k, (\sdata, \sdata)} + \frac{|E|}{N^2} \cdot \eta\cdot |S| \cdot I\right)^\dagger Y^c_\sdata - \tilde{A}_{k, (\tdata, \sdata)} \left( \tilde{A}_{k, (\sdata, \sdata)} + \frac{|E|}{N^2} \cdot \eta \cdot |S| \cdot I\right)^\dagger Y^c_\sdata,
\end{align}
where $Y^c_\sdata$ is the $c$-th column of $Y_\sdata$.

Using the perturbation bound for the pseudoinverse matrix ~\citep{stewart1977perturbation}, we have
\begin{align}
	\norm{e^c} \le& \norm{A_{k, (\tdata, \sdata)} - \tilde{A}_{k, (\tdata, \sdata)}} \cdot \norm{\left( \tilde{A}_{k, (\sdata, \sdata)} + \frac{|E|}{N^2} \cdot \eta\cdot |S| \cdot I\right)^\dagger} \cdot \norm{Y_\sdata^c} \\
	&+  \norm{A_{k, (\tdata, \sdata)}} \cdot \norm{\left( A_{k, (\sdata, \sdata)} + \frac{|E|}{N^2} \cdot \eta\cdot |S| \cdot I\right)^\dagger - \left( \tilde{A}_{k, (\sdata, \sdata)} + \frac{|E|}{N^2} \cdot \eta\cdot |S| \cdot I\right)^\dagger } \cdot \norm{Y_\sdata^c} \\
	\le & \norm{A_k - \tilde{A}_k} \cdot \frac{N^2}{\eta|E|\cdot |S|} \cdot \norm{Y_\sdata^c} + \norm{A_k}\cdot \frac{1+\sqrt{5}}{2} \cdot  \left(\frac{N^2}{\eta|E|\cdot |S|} \right)^2\cdot \norm{A_k - \tilde{A}_k}\cdot \norm{Y_\sdata^c}. \label{equation:error_bound_one_class}
\end{align}

By lemma~\ref{lemma:perturbation}, there exists constant $C=C(\zeta)$ such that $\norm{A-\tilde{A}} \le C \sqrt{\Ndata}$ with probability at least $1-\Ndata^{-2\zeta}$ for any $N>\Omega(1/\rho)$. From now on, we assume this high probability event happens. Let $\Delta \triangleq \min\{\nclass (\beta-\gamma), \ndomain (\alpha-\gamma)\}$. According to Lemma~\ref{lemma:ideal_prediction}, we know that $\tilde{\lambda}_{k}-\tilde{\lambda}_{k+1} = \ndata\Delta$. If our choice of $\Ndata$ further satisfies $N\ge \left(\frac{2\nclass\ndomain C}{\Delta}\right)^2$, we have $\norm{A-\tilde{A}} \le \frac{1}{2} (\tilde{\lambda}_{k}-\tilde{\lambda}_{k+1})$, so from Lemma~\ref{lemma:low_rank_perturbation} we have 
\begin{align}
\norm{A_k - \tilde{A}_k} \le O\left(\frac{\tilde{\lambda}_1}{\tilde{\lambda}_{k}-\tilde{\lambda}_{k+1}}\cdot \norm{A-\tilde{A}}\right) \le O\left(\frac{\tilde{\lambda}_1}{\tilde{\lambda}_{k}-\tilde{\lambda}_{k+1}}\cdot \sqrt{\Ndata}\right).  
\end{align}

According to Lemma~\ref{lemma:ideal_prediction}, we know that $\tilde{\lambda}_1 \le \ndata\nclass\ndomain\rho$, so we have 
\begin{align}\label{equation:rank_k_error}
	\norm{A_k - \tilde{A}_k} \le O\left(\frac{\nclass\ndomain\rho\sqrt{\Ndata}}{\Delta}\right).
\end{align}

By Hoeffding's inequality, with probability at least $1-2e^{-2\Ndata^2}$ we have $||E| - |\tilde{E}||\le \Ndata$. From now on, we assume this high-probability event happens. The total failure probability so far is $\Ndata^{-2\zeta}+2e^{-2\Ndata^2} \le N^{-\zeta}$. By the definition of graph, we have $|\tilde{E}|\ge \frac{\rho\Ndata^2}{\nclass\ndomain}$. If our choice of $\Ndata$ further satisfies $N\ge \frac{2\nclass\ndomain}{\rho}$, we have $|E|\ge \frac{\rho\Ndata^2}{2\nclass\ndomain}$, hence
\begin{align}\label{equation:bound_edge}
	\frac{|E|\cdot |S|}{\Ndata^2} \ge \frac{\rho\Ndata}{2\nclass\ndomain^2} .
\end{align}

Substituting \eqref{equation:rank_k_error} and \eqref{equation:bound_edge} into \eqref{equation:error_bound_one_class} gives:
\begin{align}
	\norm{e^c}\le  O\left(\frac{1}{\Delta\eta^2\sqrt{\Ndata}}\right)\cdot\textup{poly}(\nclass, \ndomain) \cdot \norm{Y_\sdata^c}.
\end{align}
Summing over all classes $c$ and noticing that $\norm{Y_\sdata^c} \le \sqrt{\Ndata}$ leads to 
\begin{align}\label{equation:final_error_1}
	\norm{e}_F \le O\left(\frac{1}{\Delta\eta^2} \right)\cdot\textup{poly}(\nclass, \ndomain)
\end{align}

Let $\xi = \frac{|E|}{N^2} \cdot \eta\cdot |S|$ and $\tilde{\xi} = \frac{|\tilde{E}|}{N^2} \cdot \eta\cdot |S|$. We have
\begin{align}
	\left\vert\frac{\lambda_c}{\lambda_c+\ndomain\xi} - \frac{\lambda_c}{\lambda_c+\ndomain\tilde{\xi}}\right\vert \le \frac{\ndomain}{\lambda_c} \cdot |\xi-\tilde{\xi}| \le \frac{\eta}{\Ndata} \cdot\textup{poly}(\nclass, \ndomain).
\end{align}

From Lemma~\ref{lemma:ideal_prediction} we have 
\begin{align}
	&\norm{\tilde{A}_{k, (\tdata, \sdata)} \left(\tilde{A}_{k, (\sdata, \sdata)}+ \frac{|{E}|}{N^2} \cdot \eta\cdot |S|\cdot  I\right)^\dagger Y_\sdata  - \tilde{A}_{k, (\tdata, \sdata)} \left(\tilde{A}_{k, (\sdata, \sdata)}+ \frac{|\tilde{E}|}{N^2} \cdot \eta\cdot |S|\cdot  I\right)^\dagger Y_\sdata}_F \\
	=&\left\vert\frac{\lambda_c}{\lambda_c+\ndomain\xi} - \frac{\lambda_c}{\lambda_c+\ndomain\tilde{\xi}}\right\vert \cdot \norm{Y_\tdata}_F \le \frac{1}{\sqrt{\Ndata}} \cdot \textup{poly}(\nclass, \ndomain).\label{equation:final_error_2}
\end{align}

Combining \eqref{equation:final_error_1} and \eqref{equation:final_error_2} we have
\begin{align}
	\norm{\pred_\tdata - \widetilde{\pred}_\tdata}_F \le O\left(\frac{1}{\eta^2 \cdot \min\{\alpha-\gamma, \beta-\gamma\}}\right)\cdot \textup{poly}(\nclass, \ndomain).
\end{align}

Notice that the $x$-th row of $\widetilde{\pred}_\tdata$ is $\frac{\lambda_c}{\lambda_c+\ndomain\tilde{\xi}} \cdot y_\class{x}$. Since $\lambda_c = n \left(\rho - \beta+(\ndomain-1)\alpha - (\ndomain-1)\gamma\right)\ge \frac{1}{2}\ndata\ndomain( \alpha-\gamma)$, and $\tilde{\xi} = \frac{|\tilde{E}|}{N^2} \cdot \eta\cdot |S| = \frac{\ndata^2\nclass\ndomain \rho +\ndata^2 \ndomain(\nclass^2-\nclass)\beta+\ndata^2\nclass(\ndomain^2-\ndomain)\alpha+\ndata^2(\nclass^2-\nclass)(\ndomain^2-\ndomain)\gamma}{n^2\ndomain^2\nclass^2} \cdot \eta\cdot \nclass\ndata \le \eta \nclass\ndata\rho$, we have 
\begin{align}
	\frac{\lambda_c}{\lambda_c+\ndomain\tilde{\xi}} \ge \frac{1}{1+\frac{2\nclass\rho\eta}{\ndomain(\alpha-\gamma)}}\ge 1-\epsilon,
\end{align}
where the second inequality follows our assumption on $\eta$.

Since $\pred_\tdata$ is incorrect on the $x$-th row only if its difference from the $x$-th row of $\widetilde{\pred}_\tdata$ has larger norm than $\Omega(1-\epsilon)$, we know the final total error on the target domains is bounded by $O\left(\frac{1}{\eta^4\Ndata \cdot \min\{\alpha-\gamma, \beta-\gamma\}^2}\right)\cdot \textup{poly}(\nclass, \ndomain)$.

Collecting all the requirements of $\Ndata$, this bound holds so long as $N\ge \Omega\left( \left(\frac{\nclass\ndomain}{\min\{\alpha-\gamma, \beta-\gamma\}}\right)^2\right)$, which is equivalent to $\ndata\ge\Omega\left(\frac{\nclass\ndomain}{\min\{\alpha-\gamma, \beta-\gamma\}^2}\right)$.

\end{proof}

\begin{proof}[Proof of Proposition~\ref{proposition:domain_separation}]
	Notice that the roles of domain and class are the same in stochastic block model. Let $\zeta>0$ and $\epsilon\in\left(0, \frac{1}{2}\right)$ be arbitrary constants. If we train a domain classifier head on class 1 with regularization strength $\eta\in\left( 0, \frac{(\beta-\gamma)\epsilon}{2r\rho} \right]$, then following the same proof of Theorem~\ref{theorem:sbm}, we know that when  $\ndata\ge\Omega\left(\frac{\nclass\ndomain}{\min\{\alpha-\gamma, \beta-\gamma\}^2}\right)$, with probability at least $1-n^{-\zeta}$, we have that the global error of this domain predictor is bounded by $O\left(\frac{1}{\eta^4 \cdot \min\{\alpha-\gamma, \beta-\gamma\}^2 \cdot \ndata}\right)\cdot \textup{poly}(\nclass, \ndomain)$. Setting $\epsilon=\frac{1}{4}$, $\zeta=1$ and $\eta = \frac{\beta-\gamma}{8r\rho}$ finishes the proof.
\end{proof}

%
%
%
%

\subsection{Proof of Proposition~\ref{prop:separation}}
\label{app:separation-proof}

\newcommand{\reachableset}{\sR}
\newcommand{\zpos}{z_{\posclass}}
\newcommand{\zneg}{z_{\negclass}}
\newcommand{\aaa}{a}
\newcommand{\bbb}{b}
\newcommand{\uone}{u_1}
\newcommand{\utwo}{u_2}

\begin{figure}[th]
    \centering
	\includegraphics[width=0.4\linewidth]{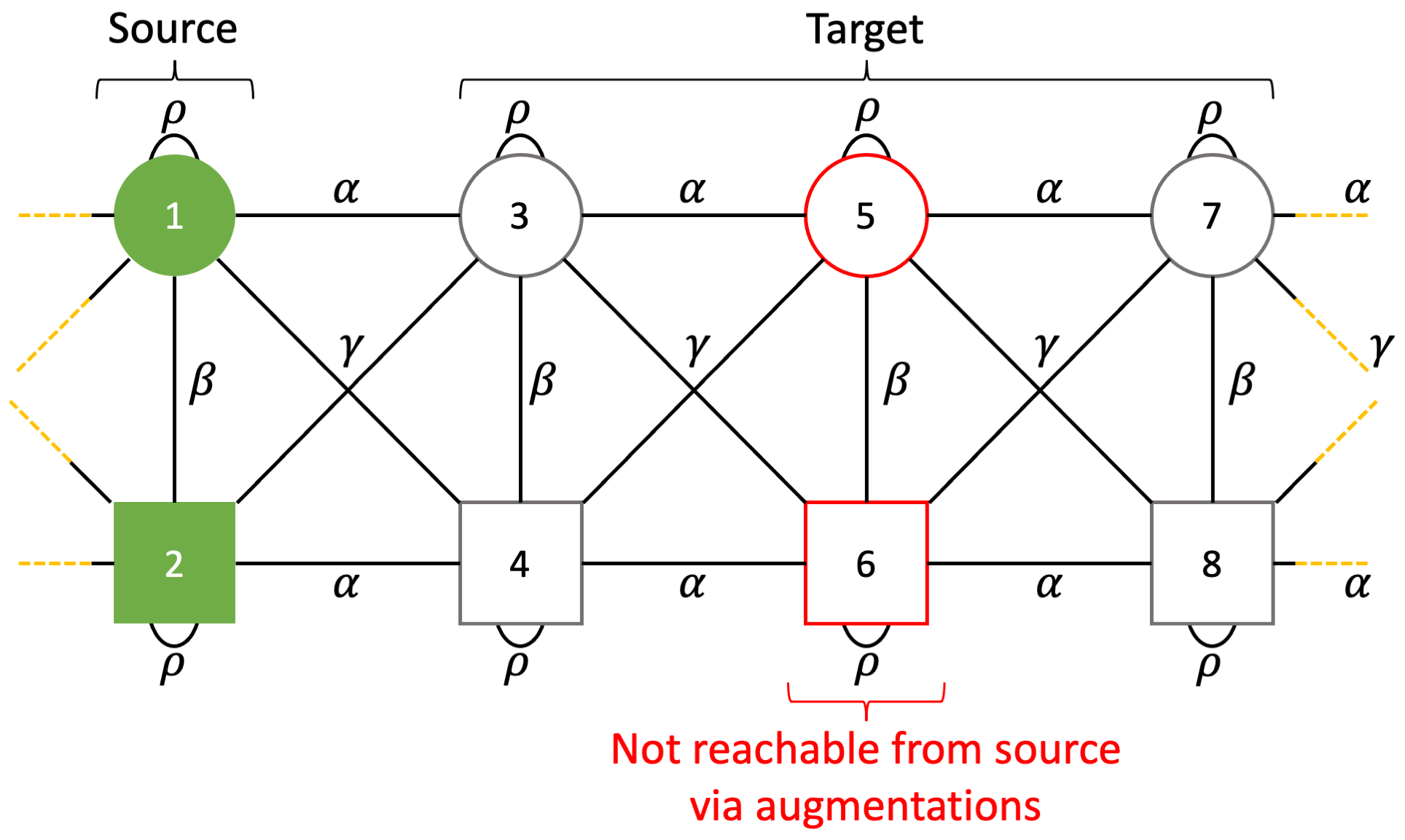}
	\captionof{figure}{%
    Example distribution of data and augmentations in which ERM and DANN generalize more poorly than contrastive learning.
    ERM and DANN minimizers are only guaranteed to correctly label inputs that are either in the source domain or reachable from source inputs via data augmentation, while connectivity throughout the target domain allows contrastive pre-training to generalize even to target inputs that are multiple ``hops'' away from the source.
The orange dotted lines on the far left and far right connect to each other but are illustrated as straight dotted lines for cleanliness.}
    \label{fig:separation}
\end{figure}

\paragraph{Data distribution (Figure~\ref{fig:separation}).}
In the setting of binary classification with 2 domains, let $\sS = \{ \inputx \in \inputspace: \domainx = 1 \}$ and $\sT = \{ \inputx \in \sT: \domainx = 2\}$ and assume that $\sourcedist$ and $\targetdist$ are uniform over $\sS$ and $\sT$, respectively.
We refer to each input by its ID, so that $\inputspace = \{1, \dots, 8\}$.
The source domain $\sS=\{1,2\}$ contains 2 points while the target domain $\sT=\{3,4,5,6,7,8\}$ contains 6 points.
The label space is $\labelspace = \{\negclass, \posclass\}$. The label for $\inputx\in \{1,3,5,7\}$ is $\labelx=\posclass$ and the label for $\inputx\in \{2,4,6,8\}$ is $\labelx=\negclass$.
The marginal distribution over unlabeled examples $\unlabeldist$ (used by contrastive pre-training in Equation~\ref{eq:scl-setup}) is the uniform distribution over $\inputspace$.
Only the inputs in the source domain $\sS$ have labels for supervised learning.

\paragraph{Augmentation distribution.}
The augmentation distribution $\aug(\cdot \mid \inputx)$ is
\begin{align}
        \aug(\inputxp \mid \inputx) =
    \begin{cases}
        \augprobr & \inputx = \inputxp\\
        \augproba & \labelxp = \labelx, \inputx \neq \inputxp \\
        \augprobb & \{\inputxp, \inputx\} \in \{ \{1, 2\}, \{3, 4\}, \{5, 6\}, \{7, 8\} \} \\ 
        \augprobg & \{\inputxp, \inputx\} \in \{ \{1, 4\}, \{2, 3\}, \{3, 6\}, \{4, 5\}, \{5, 8\}, \{6, 7\}, \{1, 8\}, \{2, 7\} \} \\  
    \end{cases}. 
\end{align}
By the edge structure of the graph, we have that $\augprobr + 2\augproba + \augprobb + 2\augprobg = 1$.
We assume that $\augprobr, \augproba, \augprobb$, and $\augprobg$ are all distinct and obey the following inequalities: $\augprobr > \max\{\augproba, \augprobb\}$ and $\min\{\augproba, \augprobb\} > \augprobg$.

\paragraph{Distribution over positive pairs.}
The augmentation distribution induces a distribution $\pospairdist$ over positive pairs, which is defined similarly:
\begin{align}
    \label{eq:pos-pair-dist}
    \pospairdist(\inputxp, \inputx) =
    \begin{cases}
        \pairprobr/\pairnorm & \inputx = \inputxp\\
        \pairproba/\pairnorm & \labelxp=\labelx, \inputx \neq \inputxp\\
        \pairprobb/\pairnorm & \{\inputxp, \inputx\} \in \{ \{1, 2\}, \{3, 4\}, \{5, 6\}, \{7, 8\} \} \\
        \pairprobg/\pairnorm & \{\inputxp, \inputx\} \in \{ \{1, 4\}, \{2, 3\}, \{3, 6\}, \{4, 5\}, \{5, 8\}, \{6, 7\}, \{1, 8\}, \{2, 7\} \}
    \end{cases}
\end{align}
where the normalization is $\pairnorm=8\pairprobr + 16 \pairproba + 8 \pairprobb + 16 \pairprobg$. The normalization ensures that the sum of $\pospairdist(\inputxp, \inputx)$ over all pairs $\inputxp,\inputx$ is 1.
At the end of this section (paragraph~\ref{app:aug-conversion}), we show how to derive $\pairprobr, \pairproba, \pairprobb, \pairprobg$ as functions of $\augprobr, \augproba, \augprobb, \augprobg$, respectively.
We also show that the assumptions that $\augprobr > \max\{\augproba, \augprobb\}$ and $\min\{\augproba, \augprobb\} > \augprobg$ imply that $\pairprobr > \max\{\pairproba, \pairprobb\}$ and $\min\{\pairproba, \pairprobb\} > \pairprobg$.

We recall Proposition~\ref{prop:separation}:

\separationprop*

We will consider $\ell$ to be the squared loss for all methods in this proof.

\paragraph{ERM.}
We claim that a classifier $\empclferm$ that makes the following predictions is a minimizer of the ERM objective with data augmentation (Equation~\ref{eq:erm-object-setup}):
\begin{align}
    \argmax_{i} (\empclferm(\inputx))_i = \begin{cases}
        \posclass &\text{if}\quad \inputx \in \{1, 3, 6, 7\} \\
        \negclass &\text{if}\quad \inputx \in \{2, 4, 5, 8\}.
    \end{cases}
\end{align}
Since this classifier outputs the incorrect prediction on 2/6 of the target points ($\{5,6\}$), it has a target 0-1 error of 1/3. It remains to show that $\empclferm$ is a minimizer of the ERM objective.

To see why, we first define the reachable set $\reachableset=\{1,2,3,4,7,8\}$ as the inputs that are sampled with probability greater than 0 as augmentations of the labeled source data, and rewrite Equation~\ref{eq:erm-object-setup} as
\begin{align}
\Lerm(\clf) &= \E_{\inputx \sim \sourcedist, \inputxp \sim \aug(\cdot \mid \inputx)}[ \|\clf(\inputxp) - e_{\labelx} \|^2 ]\\
            &= \sum_{\inputx \in \{1,2\}} \sourcedist(\inputx) \sum_{\inputxp \in  \inputspace} \aug(\inputxp \mid \inputx)[ ( \clf(\inputxp) - e_{\labelx} )^2 ]\\
            &= \sum_{\inputx \in \{1,2\}} \sourcedist(\inputx) \sum_{\inputxp \in  \reachableset} \aug(\inputxp \mid \inputx)[ ( \clf(\inputxp) - e_{\labelx} )^2 ]
\end{align}
where $e_{\labelx} \in \R^\numcls$ is a one-hot vector for the label.
In the last step, we replace the sum over the entire input space $\inputspace$ with the reachable set $\reachableset$ since $\aug(\inputxp \mid \inputx)=0$ when $\inputxp$ is not in the reachable set. Since $\inputx \in \{5,6\}$ are not included in this sum, the prediction of $\clf$ on $\{5,6\}$ can be arbitrary (and wrong). This shows that there exists a minimizer of the ERM objective with 0-1 error at least 1/3, since the minimizer can classify 2/6 target inputs incorrectly.

However, all minimizers of ERM output the correct prediction for the other inputs. From above, we have
\begin{align}
    \Lerm(\clf) &= \sourcedist(\inputx = 1) \left(\sum_{\inputx' \in \reachableset} \aug(\inputx' \mid \inputx = 1) \|\clf(\inputx') - e_{\posclass}\|^2 \right) + \sourcedist(\inputx = 2) \left(\sum_{\inputx' \in \reachableset} \aug(\inputx' \mid \inputx = 2) \|\clf(\inputx') - e_{\negclass}\|^2 \right) \\
                 &= \frac{1}{2} \sum_{\inputx' \in \reachableset} \aug(\inputx' \mid \inputx = 1) \cdot \|\clf(\inputx') - e_{\posclass}\|^2 + \aug(\inputx' \mid \inputx = 2) \cdot \|\clf(\inputx') - e_{\negclass}\|^2
\end{align}
since $\sourcedist(\inputx = 1)= \sourcedist(\inputx = 2) = 1/2$.
For each $\inputxp \in \reachableset$, the minimizer of this objective outputs
\begin{align}
    \frac{\aug(\inputx'\mid \inputx = 1) e_{\posclass} + \aug(\inputx' \mid \inputx = 2) e_{\negclass}}{\aug(\inputx'\mid \inputx = 1) + \aug(\inputx' \mid \inputx = 2)}.
\end{align}
The argmax is $\posclass$ if $\aug(\inputxp \mid \inputx = 1) > \aug(\inputxp \mid \inputx=2)$.
For $\inputxp = 1$, we have that $\aug(\inputxp \mid \inputx = 1) = \augprobr$, which by assumption is larger than $\aug(\inputxp \mid \inputx=2)=\augprobb$.
For $\inputxp \in \{3,7\}$, we have that $\aug(\inputxp \mid \inputx = 1) = \augproba$, which by assumption is larger than $\aug(\inputxp \mid \inputx=2)=\augprobg$.
Therefore any minimizer of the ERM objective will output a vector with argmax $\posclass$ for $\inputx \in \{1,3,7\}$. By symmetry, any minimizer of the ERM objective will output a vector with argmax $\negclass$ for $\inputx \in \{2,4,8\}$. Thus any minimizer of the ERM objective will correctly classify these 6 inputs (including 4/6 of the target inputs). This shows that $\empclferm$ is a minimizer of the ERM objective and has target 0-1 error 1/3.

\paragraph{DANN.}
Let $\zpos, \zneg \in \R^{\embeddim}$ be distinct points in the representation space such that $\zpos \neq \zneg$.
We claim that the following encoder $\empencoderdann$ and classification head  $\empheaddann$ (with the following constraint) minimize the DANN objective (Equation~\ref{eq:dann-object-setup}):
\begin{align}
    \empencoderdann(\inputx) = \begin{cases}
        \zpos &\text{if}\quad \inputx \in \{ 1, 3, 6, 7 \} \\
        \zneg &\text{if}\quad \inputx \in \{ 2, 4, 5, 8 \}
    \end{cases}
    \qquad \text{and} \qquad
    \argmax_i {\empheaddann(z)_i} = \begin{cases}
        \posclass &\text{if}\quad z = \zpos \\
        \negclass &\text{if}\quad z = \zneg
    \end{cases}.
\end{align}
The first term in the DANN objective is equivalent to the ERM objective, and the classifier $\empclfdann: \empheaddann \circ \empencoderdann$ outputs the same predictions as the ERM minimizer $\empclferm$.
Therefore, $\empclfdann$ minimizes the first term of Equation~\ref{eq:dann-object-setup} by the same argument from the ERM section, and also gets a target 0-1 error of 1/3.

It remains to show that $\empencoderdann$ is an optimal solution for the second term of Equation~\ref{eq:dann-object-setup},
\begin{align}
    \lambda \max_{\encoder} \min_{\dhead} \E_{\inputx\sim{\unlabeldist}, \inputxp \sim \aug(\cdot \mid \inputx)} [\|\dhead(\encoder(\inputxp)) - e_{\domainx}\|^2].
\end{align}
We write this as a maximization for simplicity, while the DANN objective is a minimization over the negation.

First, we show that this max-min loss is upper bounded by $\frac{3\lambda}{8}$.
For \textit{any} encoder $\encoder$, we could choose $\dhead$ such that it always outputs the vector $\frac{1}{4}e_1 + \frac{3}{4}e_2$ for all input representations.
Since $1/4$ of the inputs sampled from $\unlabeldist$ are from the source domain, on these inputs the domain classifier would have squared loss
\begin{align}
    \frac{\lambda}{4} \| \frac{3}{4}(e_2 - e_1)\|^2 = \frac{9\lambda}{32}.
\end{align}
For inputs originally from the target domain, the domain classifier incurs squared loss
\begin{align}
    \frac{3\lambda}{4}\|\frac{1}{4}(e_1=e_2)\|^2 = \frac{3\lambda}{32}.
\end{align}
In total, the squared loss is $\frac{3\lambda}{8}$. This is an upper bound since any encoder will incur at most this loss.

Second, we show that the proposed $\empencoderdann$ attains the upper bound. Fixing the encoder to be $\empencoderdann$, we compute the objective and maximize over the domain classification head $\dhead$:
\begin{align}
    \lambda \min_{\dhead} \E_{\inputx\sim{\unlabeldist}, \inputxp \sim \aug(\cdot \mid \inputx)}& [\|\dhead(\empencoderdann(\inputxp)) - e_{\domainx}\|^2] \\
                                                                                                &=\lambda\min_{\dhead}\sum_{\inputx \in \inputspace} \unlabeldist(\inputx) \sum_{\inputxp \in \inputspace} \aug(\inputxp\mid \inputx) \|\dhead(\empencoderdann(\inputxp)) - e_{\domainx}\|^2\\
                                                                                                &=\min_{\dhead} \frac{\lambda}{8}\sum_{\inputx \in \inputspace} \sum_{\inputxp \in \inputspace} \aug(\inputxp\mid \inputx) \|\dhead(\empencoderdann(\inputxp)) - e_{\domainx}\|^2 ~~~~~ (\text{since }\unlabeldist(\inputx)=\frac{1}{8}).
\end{align}
We can further break this down into a sum of 4 terms, which simplify the loss terms:
\begin{equation}
\begin{split}
    \min_{\dhead} &\frac{\lambda}{8}\sum_{\inputx \in \sourcedata} \sum_{\inputx \in \{1,3,6,7\}} \aug(\inputxp \mid \inputx) \|\dhead(\zpos) - e_1\|^2 \\
    +&\frac{\lambda}{8}\sum_{\inputx \in \sourcedata} \sum_{\inputx \in \{2,4,5,8\}} \aug(\inputxp \mid \inputx) \|\dhead(\zneg) - e_1\|^2 \\
    +&\frac{\lambda}{8}\sum_{\inputx \in \targetdata} \sum_{\inputx \in \{1,3,6,7\}} \aug(\inputxp \mid \inputx) \|\dhead(\zpos) - e_2\|^2 \\
    +&\frac{\lambda}{8}\sum_{\inputx \in \targetdata} \sum_{\inputx \in \{2,4,5,8\}} \aug(\inputxp \mid \inputx) \|\dhead(\zneg) - e_2\|^2.
\end{split}
\end{equation}
Using the augmentation graph and the fact that $\augprobr + 2\augproba + \augprobb + 2\augprobg = 1$, we can calculate the sum of the augmentation probabilities for each term, resulting in the objective
\begin{equation}
\begin{split}
    \min_{\dhead}&\frac{\lambda}{8} \|\dhead(\zpos) - e_1\|^2 \\
    +&\frac{\lambda}{8} \|\dhead(\zneg) - e_1\|^2 \\
    +&\frac{3\lambda}{8} \|\dhead(\zpos) - e_2\|^2 \\
    +&\frac{3\lambda}{8} \|\dhead(\zneg) - e_2\|^2.
\end{split}
\end{equation}
The minimizer of this objective (taking the gradient and setting to 0) always outputs
\begin{align}
    \dhead(\zpos) = \dhead(\zneg) = \frac{1}{4}e_1 + \frac{3}{4}e_2
\end{align}
for any input representation.
This attains the error $\frac{3\lambda}{8}$, showing that $\empencoderdann$ is optimal for the second term of the DANN loss.
Therefore, the proposed $\empencoderdann$ and $\empheaddann$ minimize the DANN objective and get a target 0-1 error of 1/3.

\paragraph{Contrastive pre-training.}
\newcommand{\normalizer}{D}
We show that contrastive pre-training achieves 0 target error on this example as long as $\pairproba > \pairprobg + \pairprobb$.
Examining Figure~\ref{fig:separation}, $\pairproba$ intuitively corresponds to the probability that an image $x$ augments to a different image $x'$ of the same class, while $\pairprobg$ and $\pairprobb$ correspond to the probability that $x$ augments to $x'$ of a different class.
Roughly speaking, the condition $\pairproba > \pairprobg + \pairprobb$ means that augmentations should preserve the class more often than not.

We prove that contrastive pre-training achieves 0 target error by first deriving the representations learned by contrastive learning on unlabeled data $\unlabeldist$ (Equation~\ref{eq:scl}), and then deriving the linear probe learned by optimizing the squared loss on source examples $\sS$ (Equation~\ref{eq:sq-loss}) and examining its accuracy on target examples $\sT$.

\textbf{Step 1 (Pretraining)}:
During the pre-training phase, we use inputs sampled from $\unlabeldist$ to learn an encoder $\empencoder$ which minimizes the spectral contrastive loss (Equation~\ref{eq:scl}).
We define the \textit{embedding} matrix $\embedding \in \R^{8 \times k}$ as follows: the $i$-th row of $\embedding$ is the feature vector for example $i$---that is, $\embedding_i = \empencoder(i)$ where $\embedding_i \in \R^k$ denotes the $i$-th row of $\embedding$.

In this step of the proof we will compute $\embedding$ analytically.
$\embedding$ is given by the top $k$ eigenvectors and eigenvalues of the adjacency matrix $A$, following the analysis in~\citet{haochen2021spectral}.
We begin by computing these eigenvectors and eigenvalues.

Let $A$ be the adjacency matrix, where $A_{ij} = \pospairdist(i, j)$ is the probability of sampling a positive pair $(i, j)$. Writing $A$ out explicitly,

\begin{equation}
    A = \frac{1}{\pairnorm}
    \begin{bmatrix*}[r]
        \pairprobr & \pairprobb & \pairproba & \pairprobg & 0 & 0 & \pairproba & \pairprobg \\
        \pairprobb & \pairprobr & \pairprobg & \pairproba & 0 & 0 & \pairprobg & \pairproba \\
        \pairproba & \pairprobg & \pairprobr & \pairprobb & \pairproba & \pairprobg & 0 & 0 \\
        \pairprobg & \pairproba & \pairprobb & \pairprobr & \pairprobg & \pairproba & 0 & 0 \\
        0 & 0 & \pairproba & \pairprobg & \pairprobr & \pairprobb & \pairproba & \pairprobg \\
        0 & 0 & \pairprobg & \pairproba & \pairprobb & \pairprobr & \pairprobg & \pairproba \\
        \pairproba & \pairprobg & 0 & 0 & \pairproba & \pairprobg & \pairprobr & \pairprobb \\
        \pairprobg & \pairproba & 0 & 0 & \pairprobg & \pairproba & \pairprobb & \pairprobr \\
    \end{bmatrix*}
\end{equation}
for normalization constant $\pairnorm=8\pairprobr + 16 \pairproba + 8 \pairprobb + 16 \pairprobg$.

Let $\lambda' \in \R^8$ be the eigenvalues of $A$, and $U \in \R^{8 \times 8}$ be a matrix where the columns are corresponding unit-norm eigenvectors of $A$.
We can explicitly write out these eigenvectors and eigenvalues as follows:
\begin{equation}
\label{eqn:eigens_u_sep}
    U =
    \begin{bmatrix*}[r]
        1 & 1 & 0 & -1 & 0 & 1 & -1 & -1 \\
        1 & -1 & 0 & -1 & 0 & -1 & -1 & 1 \\
        1 & 0 & 1 & 0 & -1 & -1 & 1 & -1 \\
        1 & 0 & -1 & 0 & -1 & 1 & 1 & 1 \\
        1 & -1 & 0 & 1 & 0 & 1 & -1 & -1 \\
        1 & 1 & 0 & 1 & 0 & -1 & -1 & 1 \\
        1 & 0 & -1 & 0 & 1 & -1 & 1 & -1 \\
        1 & 0 & 1 & 0 & 1 & 1 & 1 & 1
    \end{bmatrix*} \cdot
    \normalizer,
\end{equation}
where each column of $U$ is an eigenvector of $A$, and $\normalizer$ is a diagonal matrix that normalizes the eigenvectors to unit norm, given by:
\begin{equation}
\normalizer = \mbox{diag}\left(\left[\frac{1}{\sqrt{8}}, \frac{1}{2}, \frac{1}{2}, \frac{1}{2}, \frac{1}{2}, \frac{1}{\sqrt{8}}, \frac{1}{\sqrt{8}}, \frac{1}{\sqrt{8}}\right]\right).
\end{equation}
The corresponding eigenvalues are given by $\lambda' \in \R^8$, where $\lambda'_i$ is the eigenvalue corresponding to the $i$-th column of $U$.
For convenience, let $\lambda = C \lambda'$ to avoid re-writing the $C$ term, and we have:
\begin{align*}
    \lambda_1 &= \pairprobr + 2\pairproba + \pairprobb + 2\pairprobg \\
    \lambda_2 &= \lambda_3 = -\pairprobb + \pairprobr \\
    \lambda_4 &= \lambda_5 = \pairprobb + \pairprobr \\
    \lambda_6 &= -2\pairproba - \pairprobb + 2\pairprobg + \pairprobr \\
    \lambda_7 &= -2\pairproba + \pairprobb - 2\pairprobg + \pairprobr \\
    \lambda_8 &= 2\pairproba - \pairprobb - 2\pairprobg + \pairprobr.
\end{align*}

We choose $k = 3$ for the representation dimension of contrastive learning, following Theorem~\ref{thm:sbm}.
We now show that the top $3$ eigenvalues includes $\lambda_1$ and $\lambda_8$ and \textit{does not} include $\lambda_2$ and $\lambda_6$.

Since $\pairprobr, \pairproba, \pairprobb, \pairprobg > 0$, from basic algebra:
\begin{equation}
  \lambda_1 > \lambda_4 = \lambda_5 > \lambda_2 = \lambda_3.
\end{equation}
Since $\pairproba > \pairprobg + \pairprobb$ and $\pairprobb > 0$, we also have $\pairproba > \pairprobg$, which means:
\begin{equation}
    \lambda_3 > \lambda_6.
\end{equation}
Using these facts, we also get:
\begin{equation}
    \lambda_1 > \lambda_8 > \lambda_2 = \lambda_3.
\end{equation}
We assumed that $\pairproba > \pairprobg + \pairprobb$, which gives us:
\begin{equation}
    \lambda_8 > \lambda_4 = \lambda_5.
\end{equation}
Finally, we note that $\lambda_1 > \lambda_8$ and $\lambda_1 > \lambda_7$ because $\pairprobr, \pairproba, \pairprobb, \pairprobg > 0$.

Collating all these inequalities, we find that $\lambda _1$ is the largest eigenvalue, and that $\lambda_8$ is larger than $\lambda_2, \lambda_3, \lambda_4, \lambda_5, \lambda_6$ (all eigenvalues except possibly $\lambda_7$, which can be larger or smaller than $\lambda_8$). This shows that the top $3$ eigenvalues \textit{includes} $\lambda_1$ and $\lambda_8$.
Since $\lambda_2$ is smaller than $\lambda_1, \lambda_8, \lambda_4, \lambda_5$ and $\lambda_6$ is even smaller than $\lambda_2$, we also get that $\lambda_2, \lambda_6$ (and $\lambda_3$) are \textit{not} among the top 3 eigenvalues.

We now write out the embedding matrix $\embedding$ in terms of the top 3 eigenvectors and eigenvalues.
From above, we know that the top 3 eigenvalues are $\lambda_a, \lambda_1, \lambda_8$, where $a \neq 2$ and $a \neq 6$.
Let $u_a, u_1, u_8$ be the corresponding eigenvectors: the $a$-th, $1$-st, and $8$-th column of $U$ respectively.
From~\citet{haochen2021spectral} (specifically, Lemma 3.2 and then using Eckart–Young–Mirsky theorem), for some orthonormal (rotation) matrix $R \in \R^{k \times k}$, we have:
\begin{equation}
    \embedding =
    \big[ \sqrt{\lambda'_8} u_8; \sqrt{\lambda'_1} u_1; \sqrt{\lambda'_a} u_a ]
    R
    = \frac{1}{\sqrt{C}} \big[ \sqrt{\lambda_8} u_8; \sqrt{\lambda_1} u_1; \sqrt{\lambda_a} u_a ]
    R.
\end{equation}
Let $\tau_8 = \sqrt{\lambda'_8} \cdot \normalizer_{88} = \sqrt{\lambda'_8} / \sqrt{8}$, $\tau_1 = \sqrt{\lambda'_1} \cdot \normalizer_{11} = \sqrt{\lambda'_1} / \sqrt{8}$, and $\tau_a = \sqrt{\lambda'_a} \cdot \normalizer_{aa}$.
We can then write $\embedding$ as follows, where $\tau_8, \tau_1, \tau_a > 0$. Focus on the values in the first column since this will be especially important in step 2 of the proof:
\begin{equation}
    \embedding =
    \begin{bmatrix*}[r]
        -\tau_8 & \tau_1 & \tau_a U_{1a} \\
        \tau_8 & \tau_1 & \tau_a U_{2a} \\
        -\tau_8 & \tau_1 & \tau_a U_{3a} \\
        \tau_8 & \tau_1 & \tau_a U_{4a} \\
        -\tau_8 & \tau_1 & \tau_a U_{5a} \\
        \tau_8 & \tau_1 & \tau_a U_{6a} \\
        -\tau_8 & \tau_1 & \tau_a U_{7a} \\
        \tau_8 & \tau_1 & \tau_a U_{8a} \\
    \end{bmatrix*} R.
\end{equation}
Next, we show that $U_{1a} = U_{2a}$.
To see this, recall that $a \not\in \{1, 2, 6, 8\}$.
Inspecting $U$ above in Equation~\ref{eqn:eigens_u_sep}, we see that for all other possible choices of $a$ (so $a \in \{3, 4, 5, 7\}$), $U_{1a} = U_{2a}$.

Therefore, we can write $\embedding$ as (we have only changed the third column on the second row, notice now that the first two rows only differ on the first coordinate):
\begin{equation}
    \label{eqn:final_embedding_separation}
    \embedding =
    \begin{bmatrix*}[r]
        -\tau_8 & \tau_1 & \tau_a U_{1a} \\
        \tau_8 & \tau_1 & \tau_a U_{1a} \\
        -\tau_8 & \tau_1 & \tau_a U_{3a} \\
        \tau_8 & \tau_1 & \tau_a U_{4a} \\
        -\tau_8 & \tau_1 & \tau_a U_{5a} \\
        \tau_8 & \tau_1 & \tau_a U_{6a} \\
        -\tau_8 & \tau_1 & \tau_a U_{7a} \\
        \tau_8 & \tau_1 & \tau_a U_{8a} \\
    \end{bmatrix*} R.
\end{equation}

\textbf{Step 2 (Linear probing)}:
We consider the target error of a classification head $\emphead$ composed with the encoder $\empencoder$ learned by contrastive pre-training.
The classification head minimizes the following objective (originally defined in Equation~\ref{eq:sq-loss}):
\begin{align}
    \label{eqn:fine-tune-optimization-separation}
    \lfinetune(\linmat) = \E_{\inputx \sim \sourcedist}\left[ \| \empencoder(\inputx)^\top \linmat - \labelx \|_2^2 \right] + \eta \|\linmat\|_2^2
\end{align}
where $\linmat \in \R^{k}$, the regularization is nonzero ($\eta > 0$), and predictions are made by $\text{sign}(\empencoder(\inputx)^\top \linmat)$.
The pre-trained encoder $\empencoder$ is fixed and obtained from step 1 above, and so the minimization is over $\linmat$.
In our case we only have two source labeled points, so the objective is minimized over these two labeled source points $\inputx \in \{1,2\}$.
Since the objective is rotationally symmetric, without loss of generality we can omit the rotation matrix $R$ in the pretrained representation above in Equation~\ref{eqn:final_embedding_separation}.
We consider the minimizer $\emplinmat$ of the fine-tuning objective $\lfinetune$.
Since the regularization strength $\eta$ is strictly greater than 0, the $j$-th coordinate of the minimizer $\emplinmat_j$ is 0 when the features for the labeled data $\inputx \in \{1, 2\}$ are identical on the $j$-th coordinate: $\empencoder(1)_j = \empencoder(2)_j$.

Note that $y_1 = 1$ and $y_2 = -1$, and $\empencoder(1)$ and $\empencoder(2)$ are given above in Equation~\ref{eqn:final_embedding_separation}.
Solving the optimization problem in Equation~\ref{eqn:fine-tune-optimization-separation} analytically (e.g., by taking derivatives setting to $0$), we get for some $\omega > 0$:
\begin{equation}
    \emplinmat = [-\omega, 0, 0].
\end{equation}
Finally we show that this classifier (composed with the features learned in the previous step) predicts every target example correctly.
Inspecting the feature matrix in Equation~\ref{eqn:final_embedding_separation}, we have for $\inputx \in \{1, 3, 5, 7\}$:
\begin{equation}
    \emplinmat^\top \empencoder(\inputx) = \tau \omega > 0 \mbox{ and } y_{\inputx} = 1,
\end{equation}
and for $\inputx \in \{2, 4, 6, 8\}$,
\begin{equation}
    \emplinmat^\top \empencoder(\inputx) = -\tau \omega < 0 \mbox{ and } y_{\inputx} = -1.
\end{equation}
In other words, $\text{sign}(\empencoder(\inputx)^\top \emplinmat) = y_{\inputx}$ for all $\inputx$, and so the target error is 0, as desired.

\paragraph{Conversion from augmentation to positive pair distribution.}
\label{app:aug-conversion}
Above, we worked out the proof for contrastive pre-training in terms of the positive pair distribution $\pospairdist(\inputxp, \inputx)$.
We showed that if $\pairproba > \pairprobg + \pairprobb$ then contrastive pre-training achieves $0$ target error.
We still need to show that such a setting exists---we started out by defining the augmentation probabilities, and we show that there is indeed some setting of $\augprobr, \augproba, \augprobg, \augprobb$ such that the condition $\pairproba > \pairprobg + \pairprobb$ holds.

Recall that the positive pair distribution is defined as
\begin{align}
    \pospairdist(\inputxp, \inputx) = \E_{\bar{\inputx} \sim \unlabeldist} [ \aug(\inputx \mid \bar{\inputx}) \aug(\inputxp \mid \bar{\inputx}) ].
\end{align}
The distribution $\pospairdist$ induced by $\aug$ on $\inputspace$ is
\begin{align}
    \pospairdist(\inputxp, \inputx) = \begin{cases}
        \pairprobr = \frac{1}{8C} [ \augprobr^2 + 2\augproba^2 + \augprobb^2 + 2\augprobg^2 ] & \inputx = \inputxp  \\
        \pairproba = \frac{1}{4C} [ \augprobr \augproba + \augprobb \augprobg ] &\labelx = \labelxp, \inputx \neq \inputxp \\
        \pairprobb = \frac{1}{4C} [ \augprobr \augprobb + 2 \augproba \augprobg ] &\set{\inputxp, \inputx} \in \set{\set{1, 2}, \set{3, 4}, \set{5, 6}, \set{7, 8}} \\
        \pairprobg = \frac{1}{4C} [ \augprobr \augprobg + \augproba \augprobb ] &\set{\inputxp, \inputx} \in \set{\set{1, 4}, \set{2, 3}, \set{3, 6}, \set{4, 5}, \set{5, 8}, \set{6, 7}, \set{1, 8}, \set{2, 7}}
    \end{cases}
\end{align}
where the constant $C$ ensures that the probabilities sum to 1.

By setting $\augprobg = 0, \augprobb = 0, \augproba > 0, \augprobr > 0$ such that $2\augproba + \augprobr = 1$, we get the desired condition $\pairproba > \pairprobg + \pairprobb$.
This shows that there exists one such scenario, where the condition holds, as desired.
However, we note that there are many situations where the condition $\pairproba > \pairprobg + \pairprobb$ holds.

\section{Additional Details for Section~\ref{sec:main-results}}
\label{app-experiment-details}

\subsection{Additional Experiments}
\label{app-additional-exp}

\paragraph{Table~\ref{table:app-empirical-domainnet}.}
Table~\ref{table:app-empirical-domainnet} contains target test accuracies on all individual pairs of DomainNet domains (Table~\ref{table:main-empirical} contains only the average) with all baselines and SwAV, in addition to MoCo-V2 and MoCo-V3 (additional contrastive pre-training methods).
The MoCo-V2 and MoCo-V3 hyperparameters were tuned on ImageNet in their original papers, and we did not tune either method further; therefore, for MoCo-V2 the target test accuracies are lower than several of the baselines on DomainNet (which is very different from ImageNet).
However, MoCo-V3 outperforms all methods except for SwAV and DANN+\strongaugs{} (41.59\% vs. the next best, DANN: 38.38\%).

\paragraph{Tables~\ref{table:app-empirical-domainnet-early-stop} and~\ref{table:app-empirical-breeds-early-stop}.}
Tables~\ref{table:app-empirical-domainnet-early-stop} and~\ref{table:app-empirical-breeds-early-stop} contain target test accuracies on DomainNet and BREEDS of baselines with early stopping.
The top halves of both tables contain results of early stopping with source test accuracy (i.e., selecting the epoch to use based on the highest source test accuracy achieved over all epochs),
and the bottom halves contain results of early stopping with target test accuracy (i.e., the highest target test accuracy achieved over all epochs; this should be interpreted as an ``oracle'' method).
As expected, early stopping improves over simply using the final iterate (as in Tables~\ref{table:main-empirical} and~\ref{table:app-empirical-domainnet}).
However, even with early stopping only DANN+\strongaugs{} outperforms SwAV (which uses no early stopping)---by 1.3\% and 3.2\% for early stopping with source and target accuracy, respectively.

\paragraph{Tables~\ref{table:app-empirical-domainnet-fair} and~\ref{table:app-empirical-breeds-fair}.}
Tables~\ref{table:app-empirical-domainnet-fair} and~\ref{table:app-empirical-breeds-fair} contain target test accuracy when we standardize the compute requirements as much as possible between the baselines and pre-training methods, since our primary experiments allow the baselines to run for nearly twice as many epochs as the pre-training methods.
Specifically, here we enforce that all methods are run for 1) the same the number of epochs with unlabeled data and 2) the same the number of epochs with unlabeled data.
In particular, the baselines are initialized from an ERM baseline trained for 100 to 150 epochs, rather than 500 to 550.
As a result, the resulting target accuracies are significantly lower for SENTRY and SENTRY+\strongaugs{} (SENTRY was developed using ImageNet-pre-trained models and requires a decently accurate initialization); however, the accuracy for DANN and DANN+\strongaugs{} is less substantial.

\paragraph{Table~\ref{table:app-breeds}.}
To verify that other contrastive learning methods are also competitive as UDA methods, Table~\ref{table:app-breeds} contains target test accuracies on BREEDS of SwAV (with source-only and target-only splits of the pre-training data), MoCo-V3 (with source and target pre-training), and Dino and Barlow Twins (both with ImageNet pre-trained weights downloaded from official Github repositories).
Target accuracy increases from SwAV (S) to SwAV (T) to SwAV (S+T), and the accuracies of SwAV (S+T) and MoCo (S+T) are comparable (within 1\% of each other for both Living-17 and Entity-30).
Dino+ and Barlow Twins+ are both higher (up to 3\%) than SwAV+ on Living-17 and slightly lower (1.4\%) on Entity-30.

\subsection{Datasets}

\paragraph{BREEDS.}
BREEDS is a subpopulation shift benchmark derived from ImageNet by constructing a hierarchical tree structure of classes from WordNet. Nodes at a specified depth of the tree become the labels for the classification task, and descendant nodes are treated as subpopulations that can be randomly partitioned into source and target domains.

We use the dataset creation functions defined in the \verb|robustness| Python library to generate the Living-17 and Entity-30 tasks from the original ImageNet dataset~\citep{madrylab2019robustnesslib, russakovsky2015imagenet}.
The Living-17 dataset is an animal classification task which consists of nodes in the subtree rooted at the ``living thing'' node in the WordNet hierarchy.
An example of a label is ``ape'' with subpopulations of gibbons, orangutans, gorillas, and chimpanzees.
The Entity-30 dataset is a much more general task, incorporating nodes in the ``entity'' subtree.
Labels include ``building'' and ``home appliance''.
The trailing number in the dataset name is the total number of classes in that dataset.

\paragraph{DomainNet.}
DomainNet is a large unsupervised domain adaptation task, consisting of approximately 600,000 images and 345 classes in 6 domains. Each image belongs to one of 6 domains, including sketches, clipart, and photographs. For our experiments we utilize the same filtered version of DomainNet from \citet{prabhu2021sentry}, which uses 40 of the 345 classes and the sketch, painting, photograph, and clipart domains. This refinement is done to eliminate much of the noise present in the original DomainNet dataset~\citep{tan2020coal}.

For our baseline experiments with the SENTRY algorithm~\citep{prabhu2021sentry}, we use the authors' official repository (\url{https://github.com/virajprabhu/SENTRY}), which filters the original DomainNet dataset automatically as described above.

\paragraph{STL$\to$CIFAR.}
CIFAR10 consists of $32\times 32$ images from the former TinyImages dataset, and STL10 is derived from ImageNet and contains images with resolution $96\times96$. Both are classical image recognition datasets and are often paired together for domain shift tasks~\citep{shu2018dirtt, french2018selfensembling}.
We resize the STL-10 images to $32\times32$ to match the resolution of CIFAR10, as done in~\citet{french2018selfensembling}.
The two non-overlapping classes (``monkey'' in CIFAR-10 and ``frog'' in STL10) are removed from both datasets before training, making the task a 9-class classification problem.

\subsection{Algorithms and Hyperparameter Tuning}
\label{app-hyp-tuning}

For joint-training baselines, we use the final iterate and we select hyperparameters based on \textbf{target test} accuracy (i.e., OOD accuracy).

\paragraph{ERM.}
\begin{itemize}
    \item
        \textbf{DomainNet.}
        The SENTRY algorithm runs ERM with class balancing (starting with ImageNet-pre-trained initialization) prior to beginning entropy minimization, and therefore the SENTRY repository contains an ERM implementation and hyperparameters for DomainNet.
        Therefore, using that ERM implementation we run ERM for 550 epochs and multiply the initial learning rate by 10x, keeping all other hyperparameters from~\citep{prabhu2021sentry} constant.
    \item
        \textbf{BREEDS.}
        We use almost the same hyperparameters as~\citet{santurkar2020breeds}.
        However, on Entity30, we train for 500 epochs and divide the learning rate by 10 every 500/3 = 167 epochs.
        On Living17, we train for 500 epochs and divide the learning rate by 10 every 167 epochs.
        For both datasets we use a learning rate of 0.1, a weight decay of $10^{-4}$, and a batch size of 128.
    \item
        \textbf{STL $\to$ CIFAR.}
        The STL10 training set is much smaller than those of DomainNet and BREEDS, which gives us more freedom to sweep over hyperparameters. In turn, we vary the number of training epochs amongst $\{100, 150, 200, 250, 300\}$, the learning rate amongst $\{0.0001, 0.001, 0.01, 0.1\}$, the softmax temperature amongst $\{0.5, 0.75, 1.0\}$, and the weight decay parameter amongst $\{0.0005, 0.005, 0.05, 0.5\}$.
        We train the ERM model for 1000 epochs.

\end{itemize}

\paragraph{SENTRY.}
We use the official implementation (\url{https://github.com/virajprabhu/SENTRY}).
\begin{itemize}
    \item
        \textbf{DomainNet.}
        For each pair of domains, we conduct a hyperparameter search through $\lambda_{\text{src}} \in \{ 0.5, 1.0, 1.5 \}$ (the weight on the supervised classification loss) and learning rates $\in \{ 0.01, 0.001 \}$.
        We run all hyperparameter settings for 100 epochs from a 150-epoch ERM checkpoint and select the best one (based on target test accuracy).
        We then initialize the SENTRY model with the ERM model described earlier (i.e., trained for 550 epochs) and run SENTRY for 400 epochs (for a total of 950 epochs).
    \item
        \textbf{BREEDS.}
        We keep the default hyperparameters from the SENTRY repo but search over 3 learning rates $\{ 0.001, 0.01, 0.1 \}$ for 100 epochs and then run the best hyperparameter setting for 300 additional epochs for 400 total epochs.
        We initialize the model with the ERM model described earlier (i.e., trained for 500 epochs).
    \item
        \textbf{STL $\to$ CIFAR.}
        We initialize SENTRY with an ERM model trained for 100 epochs and then train SENTRY for 1000 epochs, varying the learning rate between $\{0.001, 0.01, 0.1\}$, the weight decay between $\{0.0005, 0.005, 0.05, 0.5 \}$, and the unsupervised loss weight between $\{ 0.01, 0.1, 1.0\}$.
\end{itemize}

For the ``strong augmentation'' version of SENTRY, we replace the RandAug algorithm used for consistency regularization with the augmentations used for SwAV (in DomainNet and BREEDS) or SimCLR (for STL / CIFAR).

\paragraph{DANN.}
We use the implementation provided in the SENTRY repository (\url{https://github.com/virajprabhu/SENTRY}).
\begin{itemize}
    \item
        \textbf{DomainNet.}
        For each pair of domains, we conduct a hyperparameter search through learning rates $\in \{ 0.01, 0.001 \}$ and temperature $\in \{ 0.9, 1.0 \}$.
        As with SENTRY, we run all hyperparameter setting for 100 epochs from a 150-epoch ERM checkpoint, select the best one (based on target test accuracy).
        We initialize the DANN model with the ERM model described earlier (i.e., trained for 550 epochs) and run DANN for 400 epochs (for a total of 950 epochs).
    \item
        \textbf{BREEDS.}
        We sweep over two learning rates of 0.001 and 0.0005 for 100 epochs and choose the learning rate that achieves the highest OOD accuracy.
        We then run DANN for 300 additional epochs for 400 total epochs.
        We initialize the SENTRY model with the ERM model described earlier.
    \item
        \textbf{STL $\to$ CIFAR.}
        We follow an identical procedure to that of SENTRY, initializing DANN with an ERM model trained for 100 epochs and sweeping over the same hyperparameter set. DANN was run for 1000 epochs.
\end{itemize}

For the ``strong augmentation'' version of DANN we simply replace the input augmentations with the augmentations used by SwAV (for DomainNet and BREEDS) or SimCLR (for STL / CIFAR) and initialize DANN from an ERM checkpoint trained with SwAV augmentations.

\paragraph{DIRT-T.}
DIRT-T~\citep{shu2018dirtt} is a domain adaptation method that addresses two flaws of domain adversarial neural networks~\citep{ganin2016domain}: 1) distribution matching is a weak constraint, and 2) in some domain adaptation settings there does not exist a good joint classifier on both source and target.
The authors address the first shortcoming by adding a conditional entropy regularization term so that the model's decision boundaries do not overlap high-density regions of data.
This is inspired by the \textit{cluster assumption}, which states that the input space is divided into well-separated clusters, one for each class in the label space.
The lack of a good classifier on both source and target is then addressed via self-training on the unlabeled target data.
We report the STL$\to$CIFAR DIRT-T results from~\citet{shu2018dirtt}.

\subsection{SwAV}
We use the official SwAV implementation (\url{https://github.com/facebookresearch/swav}) and keep almost all of the hyperparameters provided by the original paper for 400 epoch, 256-batch-size training on ImageNet.
However, we use a batch size of 512 and tuned SwAV slightly on Living-17 using only the training curves (no labels) and following the advice from the Github README and issues answered by the original authors:
\begin{itemize}
	\item
        In order to stabilize training, we do not use a queue on DomainNet and the subsampled variants of Living-17.
        For pre-training on the full Living-17 and Entity-30 datasets, we introduce the queue at epoch 60.
	\item
        We set the number of prototypes to be 10 times the number of classes (170, 300, and 400 for Living-17, Entity-30 and DomainNet, respectively).
	\item
        We set $\epsilon = 0.03$.
	\item
        We set the base learning rate to 0.6, following a linear scaling rule based on batch size.
\end{itemize}
We note that the joint-training baselines required extensive hyperparamter tuning for \textit{every} transfer task.
In contrast, because SwAV was tuned for ImageNet training, we selected hyperparameters once (using only the pre-training loss and no labels on Living-17) and use the same hyperparameters (except for the queue) on all our datasets.
For fine-tuning, we always initialize with the final iterate of SwAV pre-training (400 epochs). Additional dataset-specific details:
\begin{itemize}
    \item
        \textbf{DomainNet.} 
        For consistency, we fine-tune SwAV pre-trained models for 150 epochs with strong data augmentations using the ERM implementation in the SENTRY repository (without running any joint-training algorithm), keeping all hyperparameters other than the number of epochs constant.
        We report the target test accuracy of the final iterate (i.e., after 150 epochs).
    \item
        \textbf{Living-17.}
        We fine-tune SwAV models for 100 epochs strong data augmentations and with a cosine learning rate schedule without restarts. We use SGD with initial learning rate 0.1 for the classifier head and 0.01 for the encoder, momentum 0.9, and weight decay 0.0001.
        We use a batch size of 96, and once again report the target test accuracy of the final model (i.e., after 100 epochs).
    \item
        \textbf{Entity-30.}
        We linear probe for 100 epochs instead of fine-tune on Entity-30, due to its large size (300k examples).
\end{itemize}

\subsection{SimCLR}
We use the official SimCLR implementation (\url{https://github.com/google-research/simclr}).
We use a batch size of 256, a learning rate of 0.2, and weight decay $10^{-4}$.
The projection head has two layers and an output dimension of 64.
We pre-train the model for 1000 epochs with square-root learning rate scaling and we train the linear probe for 100 epochs on batches of size 512 and a learning rate of 0.1.

\subsection{MoCo-V2 and MoCo-V3}
We use the official MoCo implementations (\url{https://github.com/facebookresearch/moco} and \url{https://github.com/facebookresearch/moco-v3}).
We use a batch size of 512 and otherwise all the default hyperparameters for MoCo-V2.
We use a batch size of 256 and otherwise all the defualt hyperparameters for MoCo-V3.
We fine-tune both MoCo versions on DomainNet and Living-17 using the same protocol as SwAV fine-tuning for each dataset.

\subsection{Dino and Barlow Twins}
We use ImageNet pre-trained weights from the official Github repositories (\url{https://github.com/facebookresearch/dino} and \url{https://github.com/facebookresearch/barlowtwins}).
We fine-tune on Living-17 and linear probe on Entity-30 using the same protocol as for SwAV.
We note that while SwAV+extra is pre-trained for 400 epochs, the Dino and Barlow Twins models were pre-trained for 800 and 1000 epochs, respectively.

\subsection{Model architectures.}
For all BREEDS experiments, we use a standard ResNet50 from the PyTorch \verb|torchvision| library.
For DomainNet, we use a ResNet50 slightly modified for few-shot learning, following~\citet{prabhu2021sentry}.
On STL$\to$CIFAR for ERM, DANN, SENTRY, and SimCLR baselines we use a standard ResNet18 from \verb|torchvision|, and we additionally report the DIRT-T performance from~\citet{shu2018dirtt}, which uses a custom 18-layer CNN.

\section{Additional Experiments Verifying the Theory}
\label{app-verification}

\subsection{Target accuracy as a function of connectivity}
Eq.~\ref{eq:connect-ratio} can be rewritten as
\begin{align}
    \label{eq:log-connect-ratio}
    \log(\text{target accuracy}) \approx \wone \cdot \log(\acrossdomain/\acrossboth) + \wtwo \cdot \log(\acrossclass/\acrossboth)
\end{align}
and for each baseline, we fit $\wone$ and $\wtwo$ using a linear regression model on data from the 12 source/target domain pairs of DomainNet.
We then compute the coefficient of determination $R^2$ between the observed and predicted target accuracies.
The fitted exponents $\wone, \wtwo$ and $R^2$ for SwAV, MoCo-V2, and MoCo-V3 are provided in Table~\ref{table:app-exponents}.
Figures~\ref{fig:trend-plots-app-one} and~\ref{fig:trend-plots-app-two} plot the observed target accuracies of SwAV, MoCo-V2, MoCo-V3, DANN, and SENTRY (y-axis) against the predicted (x-axis), along with a line of best fit through them as a visual aid.
We find that $R^2$ is high for contrastive learning (0.78, 0.79, and 0.60) but relatively low for the baselines ($R^2 \in [0.23, 0.51]$ for DANN and $R^2 \in [-0.20, 0.22]$ for SENTRY).

\subsection{Ablation of connectivity}
For this experiment on Living-17, we used a modified version of the dataset as follows:
for each of the 17 classes, we trained a ResNet-50 classifier to distinguish between augmented images of that class in the source and target.
Of the resulting classifiers, we selected the 4 that obtained domain classification accuracy $>70\%$: these were classes 6, 11, 12, and 14.
We only kept those 4 classes, and all data selection methods we considered chose subsets from this modified dataset.

\subsection{Disentanglement of class and domain information}
Table~\ref{table:dot-products-app} contains individual results on the cosine similarity between domain and class classifiers on DomainNet (abbreviated version in Table~\ref{table:dot-products}).
In the fine-tuned feature space, the class and domain classifiers remain orthogonal while the cosine similarity between source and target classifiers increases by 41\% from 0.187 to 0.264.

\section{Results and Protocol for Estimating Connectivity Parameters on Benchmark Datasets}
\label{sec:app-connectivity}

Table~\ref{table:fullconnectivity} reports the connectivity estimates for the input space and the feature spaces learned by CLIP, SwAV, and DANN+\strongaugs{} for all pairs of DomainNet domains (abbreviated version in Table~\ref{table:connectivity}).
To estimate the average connectivity between two class-domain pairs $(c, d)$ and $(c', d')$, we use the following algorithm:
\begin{enumerate}
    \item
        Label all training examples of class $c$ and domain $d$ as 0 and all training examples of class $c'$ and domain $d'$ as 1.
        Discard the remaining examples from other classes/domains.
    \item
        Train a ResNet50 for 100 epochs using strong augmentations and SGD with momentum and a cosine learning rate.
        We did not exhaustively tune this training step, but we kept the procedure constant for all classes and domains.
        If estimating connectivity in the input space, we train the entire network; if estimating in the feature space, we train only a linear classifier with a frozen encoder.
    \item
        Create the test set analogously to step 1, and evaluate the classifier on strongly augmented data.
        The test error of the classifier is interpreted as an estimate for connectivity between the two class-domain pairs.
\end{enumerate}
Each domain in DomainNet has a unique label distribution (all of which are far from uniform), and therefore in computing the average connectivity we compute the weighted mean, where each pair of (class, domain) pairs is weighted by the ratio of the less to more frequent label (0 or 1).

Note that the input space connectivity is calculated by training classifiers \textit{from scratch} on pairs of class-domain pairs, resulting in smaller training set sizes compared to SwAV and DANN.
Thus, the numbers provided in Table~\ref{table:connectivity} should be interpreted as \textit{relative} estimates of connectivity.
As an alternative estimate of connectivity, we additionally report results of fine-tuning a CLIP ViT-B/16, using the LP-FT~\citep{kumar2022finetuning} fine-tuning strategy with AdamW~\citep{loshchilov2019decoupled}.
The connectivity estimates in the CLIP feature space are much lower than in the SwAV or DANN feature spaces.
Interestingly, we find that the relative magnitudes of estimated connectivity across class ($\acrossclass$) and across domain ($\acrossdomain$) are swapped between the input space and CLIP feature space.
This discrepancy may be due to biases in the CLIP training data (e.g., captions that are more informative of the class than of the domain, resulting in features that are also more informative of the class).

In the SwAV feature space, the across-domain and across-class connectivities are approximately equal (7.54 vs. 7.03).
On other hand, the across-domain is much higher than across-class in the DANN+\strongaugs{} feature space (13.64 vs. 5.65); intuitively, the classification component of the DANN objective pushes apart the classes in feature space, while the domain discrimination component brings together domains.
The connectivities (i.e., classifier error) are all much higher in the input space because the classifiers are trained from scratch and on much smaller datasets, while for SwAV and DANN the classifiers are trained via linear probing in the feature space.
Thus, to standardize compute between the methods, we used the DANN+\strongaugs{} checkpoints from the compute-standardized experiment (discussed in Section~\ref{app-additional-exp} and Tables~\ref{table:app-empirical-domainnet-fair} and~\ref{table:app-empirical-breeds-fair}).

\FloatBarrier
\section{Appendix Figures and Tables}

The tables and figures for the appendix are listed below in order to improve readability in the previous sections.

\begin{table*}[h]
\begin{minipage}{1.0\linewidth}
\centering
\resizebox{\linewidth}{!}{
\begin{tabular} {l r r r r r r r r r r r r r}
	\toprule
	Source & \multicolumn{3}{c}{Real} & \multicolumn{3}{c}{Sketch} & \multicolumn{3}{c}{Painting} & \multicolumn{3}{c}{Clipart} & Avg. \\
	\cmidrule(lr){2-4}\cmidrule(lr){5-7}\cmidrule(lr){8-10}\cmidrule(lr){11-13}
	Target & Sketch & Painting & Clipart & Real & Painting & Clipart & Real & Sketch & Clipart & Real & Sketch & Painting \\
	\midrule
    ERM & 29.55 & 43.25 & 45.92 & 28.09 & 20.66 & 34.41 & 33.86 & 16.63 & 22.83 & 20.68 & 12.59 & 11.52 & 26.67 \\
    ERM (+DA) & 42.31 & 48.47 & 49.75 & 36.47 & 34.65 & 44.68 & 45.59 & 39.47 & 35.33 & 27.93 & 31.18 & 16.36 & 37.68 \\
    SENTRY & 42.89 & 57.03 & \textbf{65.41} & \textbf{55.81} & \textbf{42.97} & 53.03 & 42.98 & 1.92 & 2.10 & 8.84 & 4.42 & 1.58 & 31.58 \\
    SENTRY (+DA) & 40.43 & 48.92 & 54.02 & 44.00 & 29.39 & 51.18 & 3.51 & 2.58 & 23.82 & 3.60 & 2.88 & 5.50 & 25.82 \\
    DANN & 43.93 & 49.12 & 56.06 & 44.46 & 36.06 & 48.33 & 44.29 & 31.89 & 30.01 & 33.30 & 25.59 & 17.46 & 38.38 \\
    DANN (+DA) & \textbf{54.19} & 51.29 & 58.66 & 48.22 & 41.46 & \textbf{54.95} & 52.40 & \textbf{49.06} & \textbf{37.75} & 38.01 & \textbf{41.52} & 22.24 & \textbf{45.81} \\
    MoCo-V2 & 34.88 & 43.79 & 47.77 & 36.07 & 23.85 & 32.92 & 47.86 & 29.38 & 25.43 & 34.55 & 23.75 & 14.91 & 32.93 \\
    MoCo-V3 & 44.60 & \textbf{57.82} & 53.53 & 48.85 & 29.84 & 32.67 & 63.86 & 36.47 & 31.99 & 50.01 & 27.97 & 21.45 & 41.59 \\
	SwAV & 43.76 & 54.55 & 53.27 & 55.48 & 34.99 & 40.59 & \textbf{67.08} & 39.64 & 32.98 & \textbf{57.13} & 34.30 & \textbf{25.09} & 44.91 \\
	\midrule
    SwAV+ & 44.64 & 57.27 & 54.20 & 58.10 & 46.75 & 53.46 & 69.03 & 48.68 & 41.33 & 59.38 & 46.22 & 41.66 & 51.73 \\
	\bottomrule
\end{tabular}
}
\end{minipage}
\caption{%
    Test accuracy (\%) of baselines, MoCo-V2, MoCo-V3, SwAV, and SwAV+extra on all individual domain pairs of DomainNet.
    SwAV on average is within 1\% of the best baseline, DANN+\strongaugs{} (44.91 vs. 45.81).
}
\label{table:app-empirical-domainnet}
\end{table*}

\begin{table*}[h]
\begin{minipage}{1.0\linewidth}
\centering
\resizebox{\linewidth}{!}{
\begin{tabular} {l r r r r r r r r r r r r r}
	\toprule
	Source & \multicolumn{3}{c}{Real} & \multicolumn{3}{c}{Sketch} & \multicolumn{3}{c}{Painting} & \multicolumn{3}{c}{Clipart} & Avg. \\
	\cmidrule(lr){2-4}\cmidrule(lr){5-7}\cmidrule(lr){8-10}\cmidrule(lr){11-13}
	Target & Sketch & Painting & Clipart & Real & Painting & Clipart & Real & Sketch & Clipart & Real & Sketch & Painting \\
	\midrule
    \multicolumn{6}{l}{\textbf{Early stopping using source test accuracy}} \\
    SENTRY & 43.27 & 55.00 & 63.49 & 54.96 & 42.87 & 51.67 & 41.83 & 20.22 & 25.43 & 21.23 & 10.38 & 10.42 & 36.73 \\
    SENTRY (+DA) & 36.18 & 49.54 & 59.96 & 51.33 & 29.98 & 54.39 & 33.88 & 10.13 & 20.73 & 18.03 & 15.42 & 9.52 & 32.42 \\
    DANN & 40.31 & 46.17 & 53.09 & 48.71 & 38.29 & 51.42 & 46.36 & 31.72 & 28.90 & 34.06 & 27.05 & 17.19 & 38.61 \\
    DANN (+DA) & 55.86 & 52.01 & 57.49 & 46.35 & 42.35 & 56.81 & 50.64 & 50.15 & 42.20 & 37.72 & 40.39 & 23.00 & 46.25 \\
	\midrule
    \multicolumn{6}{l}{\textbf{Early stopping using target test accuracy}} \\
    SENTRY & 48.77 & 58.74 & 66.15 & 58.74 & 43.62 & 54.14 & 47.40 & 20.25 & 28.15 & 22.10 & 10.71 & 11.17 & 39.16 \\
    SENTRY (+DA) & 42.35 & 53.66 & 63.67 & 51.33 & 31.90 & 58.47 & 34.36 & 16.13 & 27.97 & 18.66 & 15.42 & 9.52 & 35.29 \\
    DANN & 45.18 & 50.01 & 56.93 & 50.03 & 40.73 & 52.84 & 47.11 & 34.76 & 32.17 & 35.82 & 28.01 & 20.07 & 41.14 \\
    DANN (+DA) & 56.69 & 53.76 & 61.50 & 49.44 & 42.38 & 58.10 & 52.39 & 50.93 & 43.37 & 39.08 & 44.85 & 24.95 & 48.12 \\
	\bottomrule
\end{tabular}
}
\end{minipage}
\caption{%
    Test accuracy (\%) of DANN and SENTRY on all individual domain pairs of DomainNet with early stopping using source test accuracy (top) and target test accuracy (bottom) as opposed to using the final iterate, which is used in Table~\ref{table:main-empirical}.
    Early stopping SENTRY consistently leads to large boosts in accuracy (5.1\% and 7.5\% for early stopping with source and target), while early stopping DANN leads to much smaller boosts (0.2\% and 2.7\% for early stopping with source and target).
    Compared to SwAV, with either method of early stopping, SENTRY and SENTRY+\strongaugs~are more than 5\% lower and DANN is more than 3\% lower.
    However, DANN+\strongaugs~is better than SwAV by 1.3\% and 3.2\% for early stopping with source and target accuracy, respectively.
}
\label{table:app-empirical-domainnet-early-stop}
\end{table*}

\begin{table*}[h]
\centering
\begin{minipage}{0.32\linewidth}
\centering
\resizebox{\linewidth}{!}{
\begin{tabular} {l r r}
	\toprule
    & Living-17 & Entity-30 \\
	\midrule
    \multicolumn{3}{l}{\textbf{Early stopping using source test accuracy}} \\
    SENTRY & 79.24 & 63.45 \\
    SENTRY (+DA) & 76.06 & 61.55 \\
    DANN & 67.41 & 53.30 \\
    DANN (+DA) & 70.12 & 53.68 \\
	\midrule
    \multicolumn{3}{l}{\textbf{Early stopping using target test accuracy}} \\
    SENTRY & 79.94 & 64.07 \\
    SENTRY (+DA) & 77.18 & 64.10 \\
    DANN & 69.59 & 59.10 \\
    DANN (+DA) & 73.18 & 57.82\\
	\bottomrule
\end{tabular}
}
\end{minipage}
\caption{%
    Test accuracy (\%) of DANN and SENTRY on Living-17 and Entity-30 with early stopping using source test accuracy (top) and target test accuracy (bottom) as opposed to using the final iterate, which is used in Table~\ref{table:main-empirical}.
    Both methods of early stopping SENTRY and SENTRY+\strongaugs{} led to boosts in accuracy over using the final iterate, but early stopping DANN and DANN+\strongaugs{} with source accuracy inconsistently increased performance over the final iterate.
}
\label{table:app-empirical-breeds-early-stop}
\end{table*}

\begin{table*}[h]
\begin{minipage}{1.0\linewidth}
\centering
\resizebox{\linewidth}{!}{
\begin{tabular} {l r r r r r r r r r r r r r}
	\toprule
	Source & \multicolumn{3}{c}{Real} & \multicolumn{3}{c}{Sketch} & \multicolumn{3}{c}{Painting} & \multicolumn{3}{c}{Clipart} & Avg. \\
	\cmidrule(lr){2-4}\cmidrule(lr){5-7}\cmidrule(lr){8-10}\cmidrule(lr){11-13}
	Target & Sketch & Painting & Clipart & Real & Painting & Clipart & Real & Sketch & Clipart & Real & Sketch & Painting \\
	\midrule
    SENTRY & 39.39 & 48.33 & 55.63 & 56.00 & 39.22 & 48.02 & 17.77 & 2.13 & 3.90 & 3.79 & 2.33 & 1.44 & 26.50 \\
    SENTRY (+DA) & 37.14 & 36.58 & 51.42 & 44.88 & 23.20 & 46.23 & 14.35 & 7.50 & 12.93 & 3.27 & 7.00 & 2.65 & 23.93 \\
    DANN & 44.02 & 43.86 & 51.18 & 44.42 & 37.47 & 52.10 & 47.96 & 31.10 & 32.43 & 32.36 & 24.26 & 15.30 & 38.04 \\
    DANN (+DA) & 50.77 & 50.05 & 56.37 & 47.90 & 40.39 & 55.14 & 50.47 & 46.23 & 40.10 & 37.38 & 41.60 & 23.55 & 45.00 \\
	\bottomrule
\end{tabular}
}
\end{minipage}
\caption{%
    Test accuracy (\%) of DANN and SENTRY on all pairs of DomainNet with standardized compute resources as described in Section~\ref{app-additional-exp}.
    On average, DANN and DANN+\strongaugs{} achieve comparable results to those in the main table (Table~\ref{table:main-empirical}), while SENTRY and SENTRY+\strongaugs{} are both lower here than in the main table.
}
\label{table:app-empirical-domainnet-fair}
\end{table*}

\begin{table*}[h]
\centering
\begin{minipage}{0.32\linewidth}
\centering
\resizebox{\linewidth}{!}{
\begin{tabular} {l r r}
	\toprule
    & Living-17 & Entity-30 \\
	\midrule
    SENTRY & 60.65 & 54.71 \\
    SENTRY (+DA) & 69.12 & 56.55 \\
    DANN & 65.65 & 55.93 \\
    DANN (+DA) & 68.06 & 56.00 \\
	\bottomrule
\end{tabular}
}
\end{minipage}
\caption{%
    Test accuracy (\%) of DANN and SENTRY on Living-17 and Entity-30 with standardized compute resources as described in Section~\ref{app-additional-exp}.
    For all 4 methods, the accuracies shown here are somewhat close to ($\leq 2.5\%$ lower than) those shown in the main table (Table~\ref{table:main-empirical}).
}
\label{table:app-empirical-breeds-fair}
\end{table*}

\begin{table*}[h]
\centering
\begin{minipage}{0.85\linewidth}
\centering
\resizebox{\linewidth}{!}{
\begin{tabular}{l r r r r r r r}
	\toprule
    & SwAV (S) & SwAV (T) & SwAV (S+T) & MoCo-V3 (S+T) & SwAV+ & Dino+ & Barlow Twins+ \\
    \midrule
    Living-17 & 62.71 & 70.41 & 75.12 & 74.88 & 82.00 & 83.05 & 85.11 \\
    Entity-30 & 52.33 & 60.33 & 62.03 & 63.00 & 65.90 & 64.50 & 64.48 \\
	\bottomrule
\end{tabular}
}
\end{minipage}
\caption{%
    Test accuracy (\%) of additional contrastive pre-training methods on Living-17 and Entity-30: SwAV (with source-only and target-only pre-training), Moco-V3 (source and target pre-training), Dino+ (ImageNet pre-training), and Barlow Twins+ (ImageNet pre-training).
    For ease of comparison, SwAV (S+T) and SwAV+ results are repeated from Table~\ref{table:main-empirical}.
    On both datasets, with source and target pre-training MoCo-V3 is comparable with SwAV and with ImageNet pre-training Dino and Barlow Twins are comparable with SwAV.
}
\label{table:app-breeds}
\end{table*}

\begin{figure*}[h]
    \centering
    \includegraphics[align=c,height=4cm]{figs/trend-swav.pdf}
    \hspace{0.02\linewidth}
    \includegraphics[align=c,height=4cm]{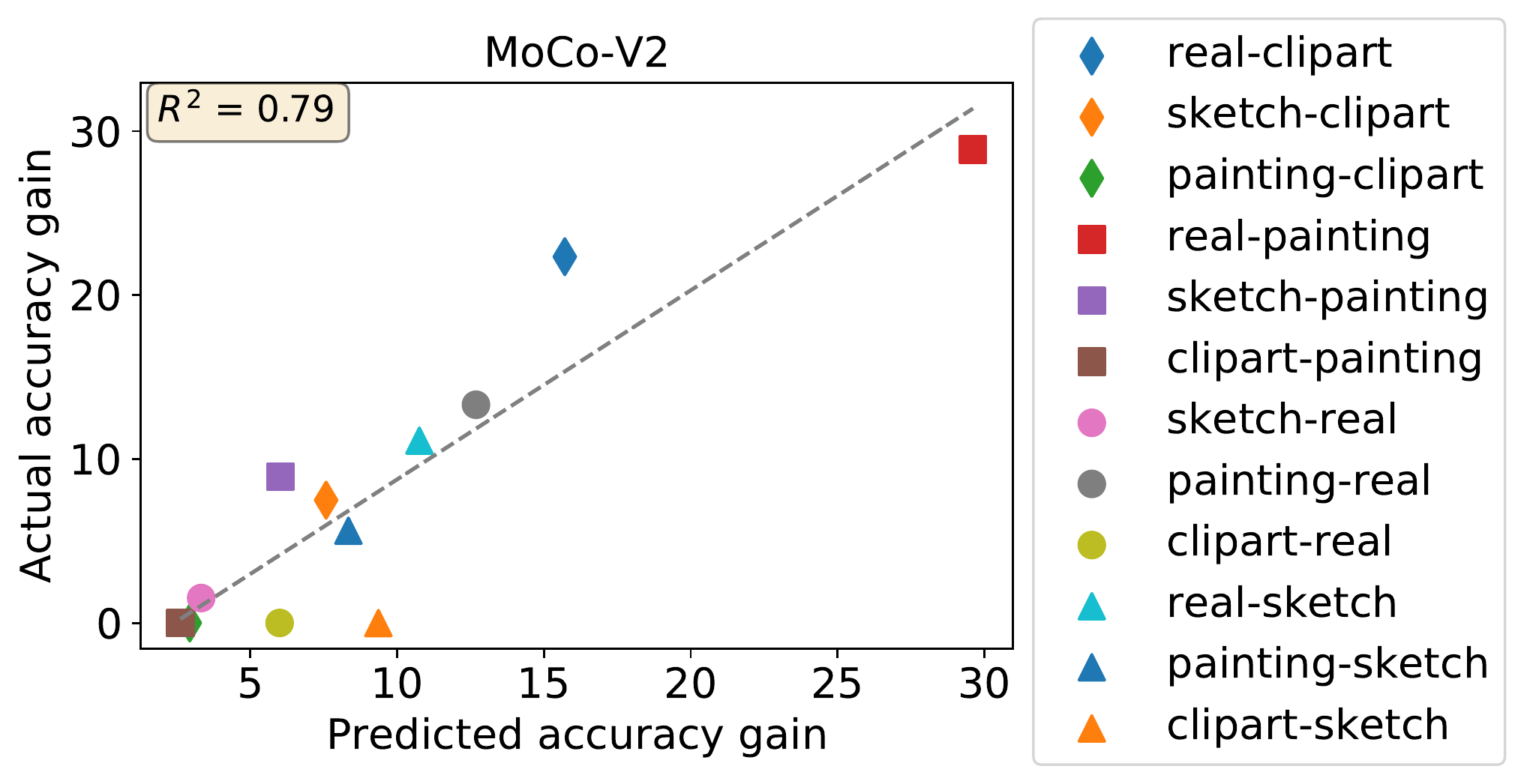} \\
    \includegraphics[align=c,height=4cm]{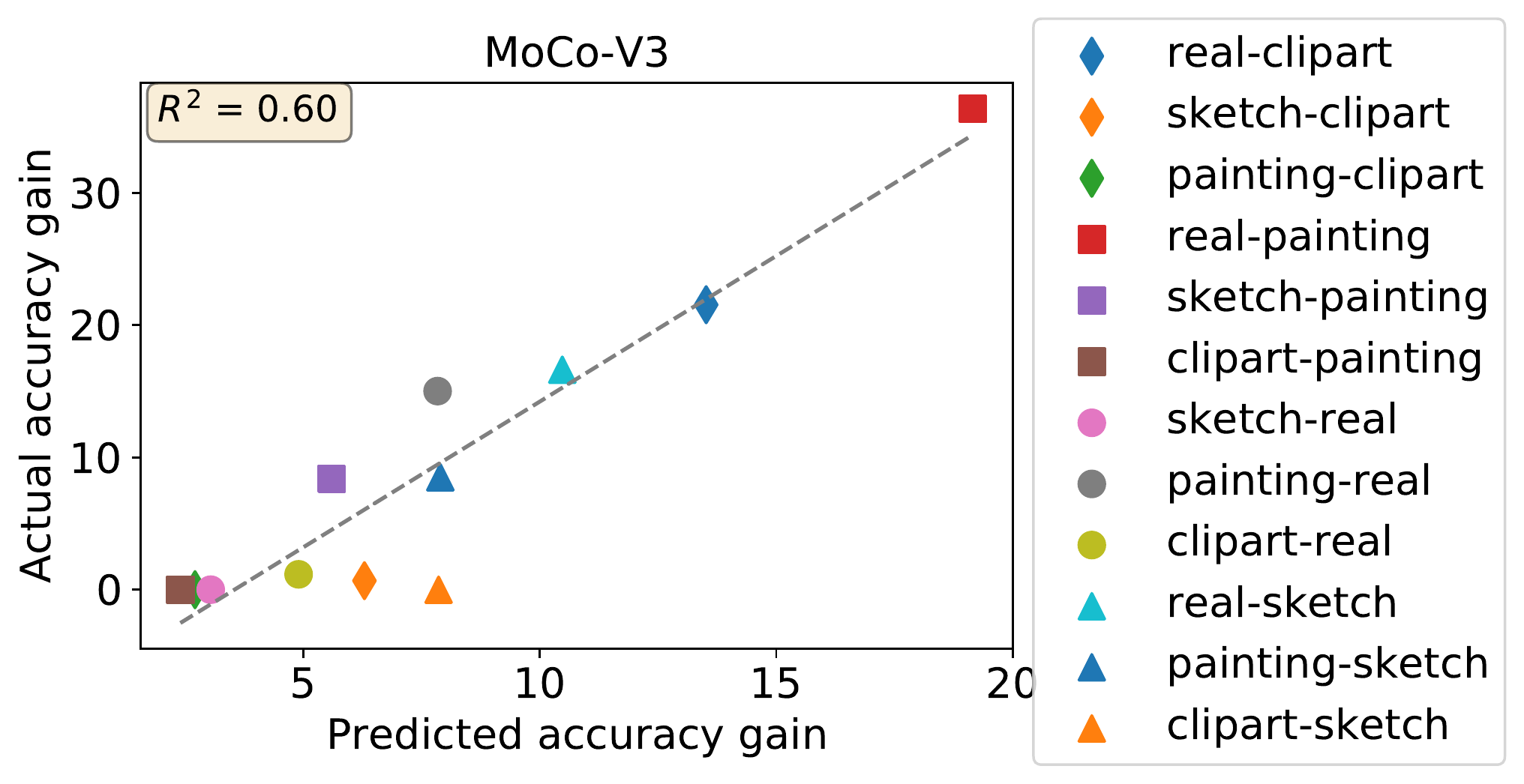}
    \caption{%
    Target accuracies observed vs. accuracies predicted using a function of the connectivity ratios.
    The coefficient of determination ($R^2$) is reported in the upper left corner of each plot.
    The top left panel reports SwAV results (repeated from the left panel of Fig.~\ref{fig:connectivity-governs} for ease of comparison), the top right panel reports MoCo-V2 results, and the bottom panel reports MoCo-V3 results.
    The line of best fit between the observed and predicted is also plotted to help visualize the relationship.}
    \label{fig:trend-plots-app-one}
\end{figure*}
\begin{figure*}[t]
    \centering
    \includegraphics[align=c,height=\arxivstyle{2.8}{2.9}cm]{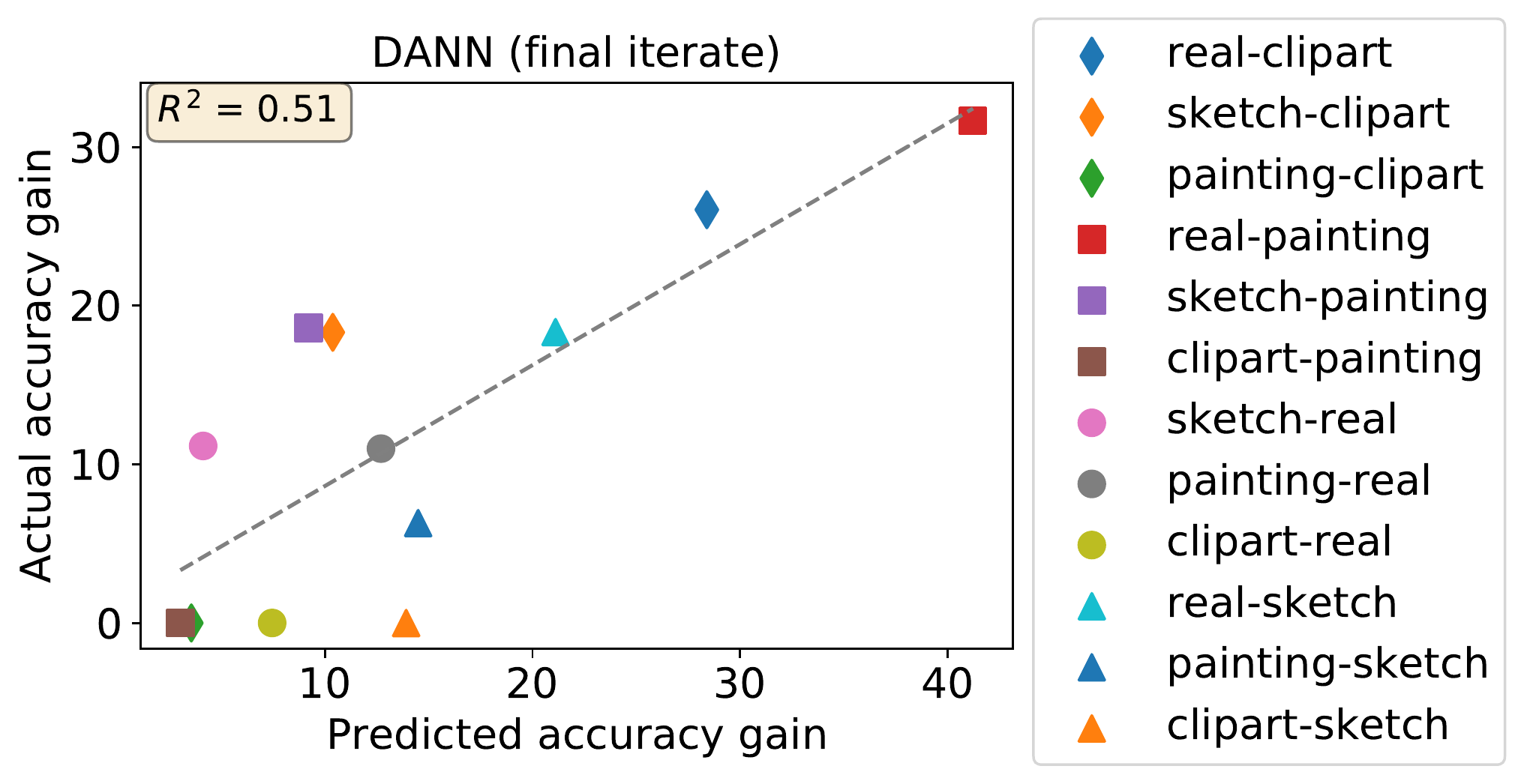}
    \includegraphics[align=c,height=\arxivstyle{2.8}{2.9}cm]{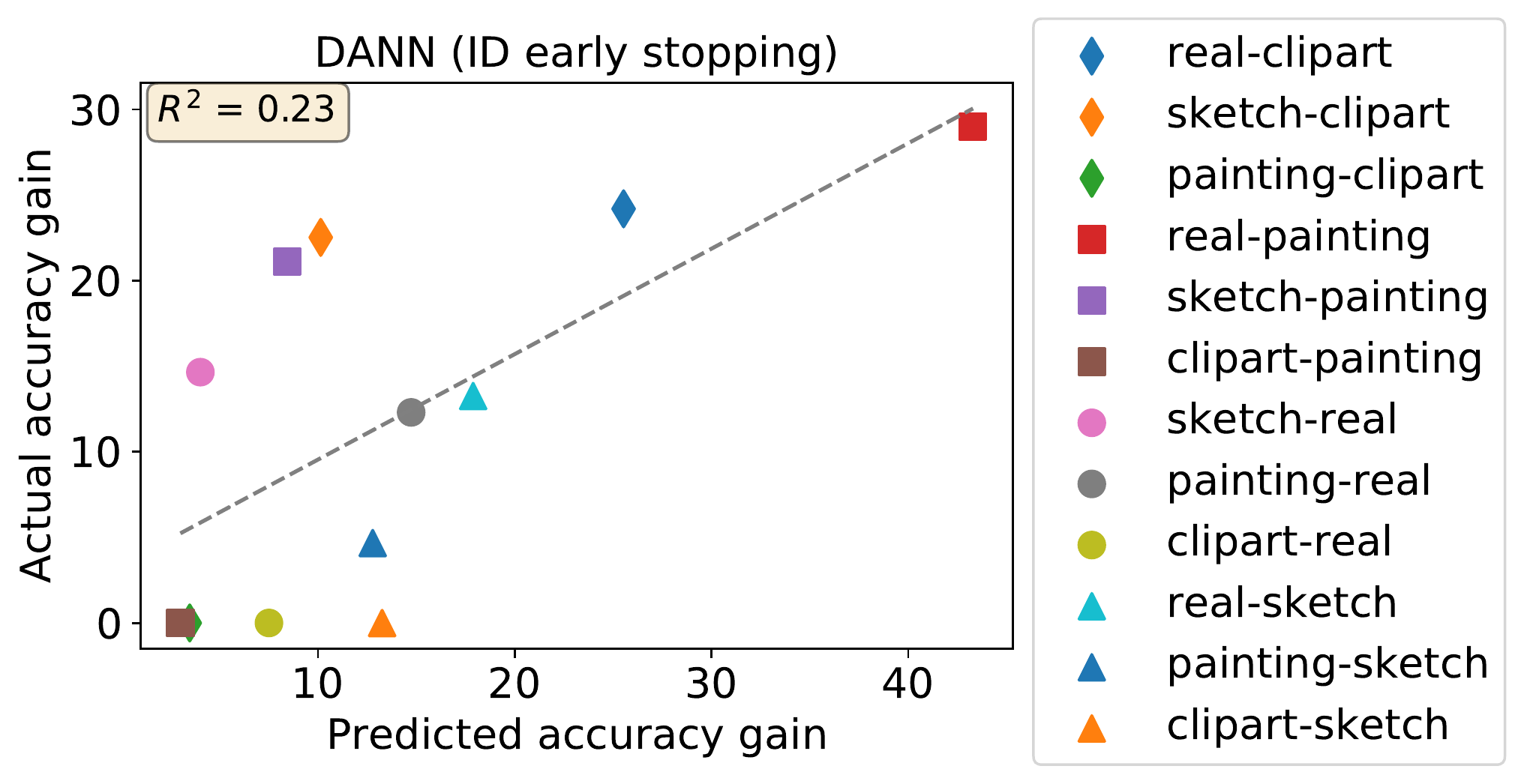}
    \includegraphics[align=c,height=\arxivstyle{2.8}{2.9}cm]{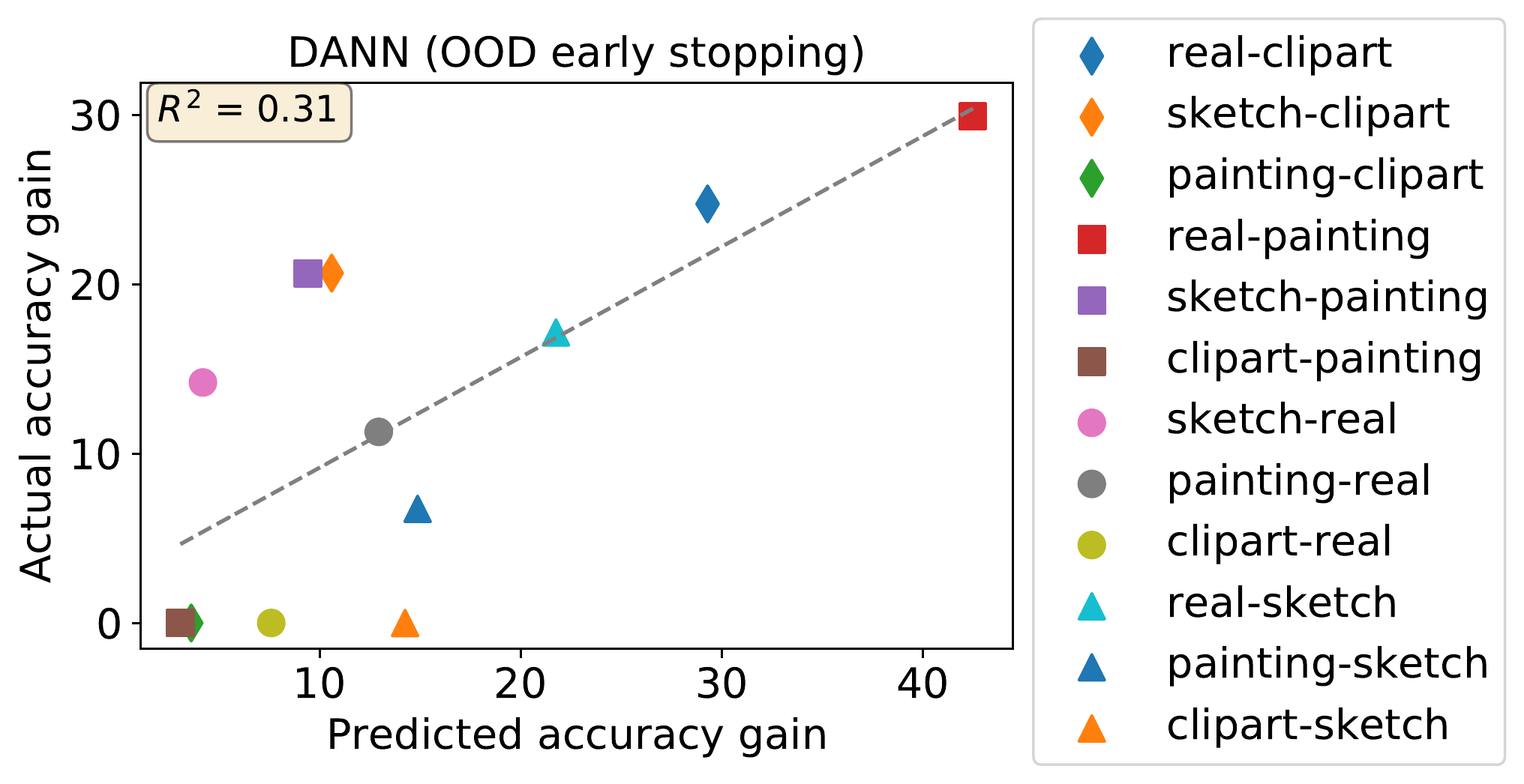} \\
    \includegraphics[align=c,height=\arxivstyle{2.8}{2.9}cm]{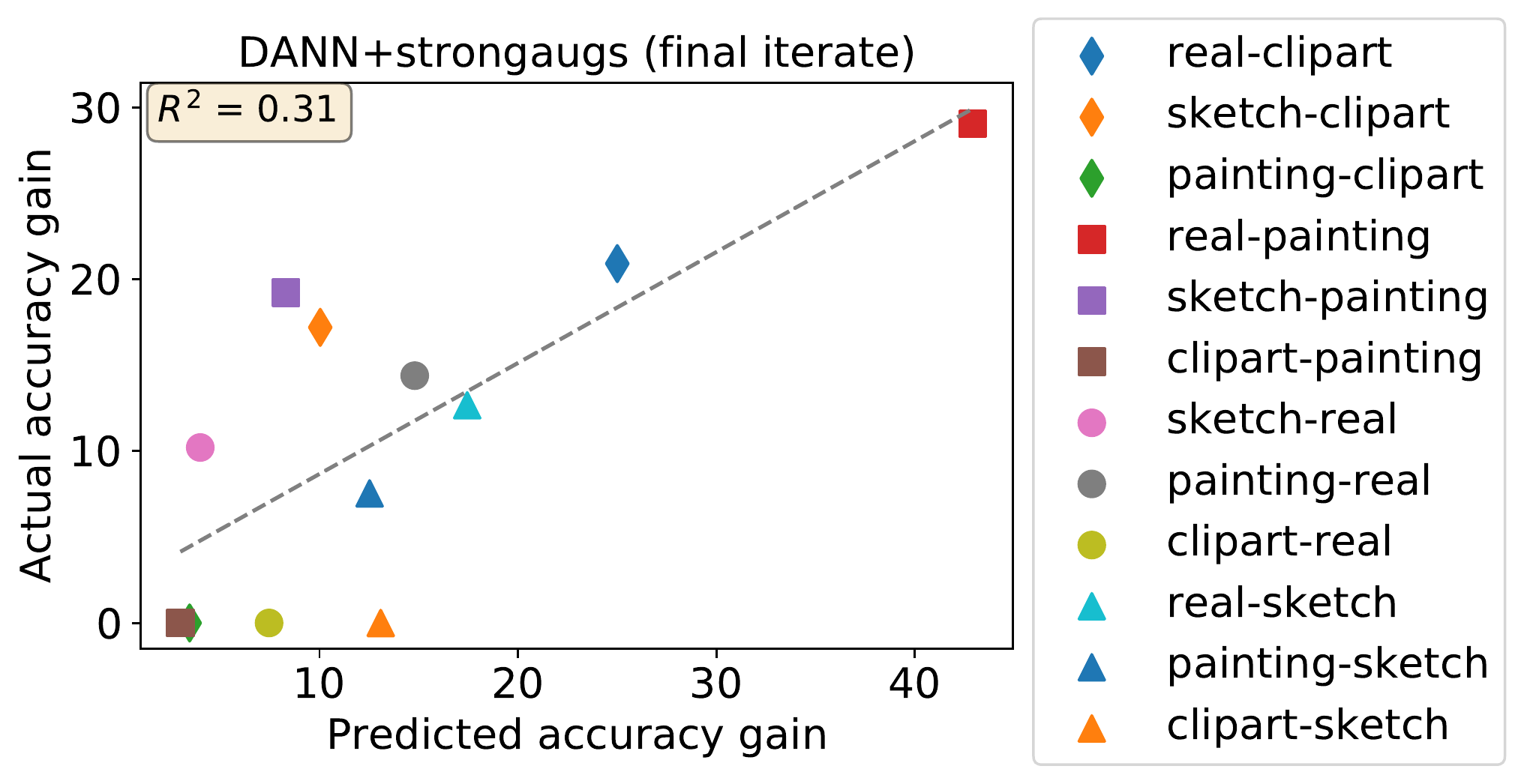}
    \includegraphics[align=c,height=\arxivstyle{2.8}{2.9}cm]{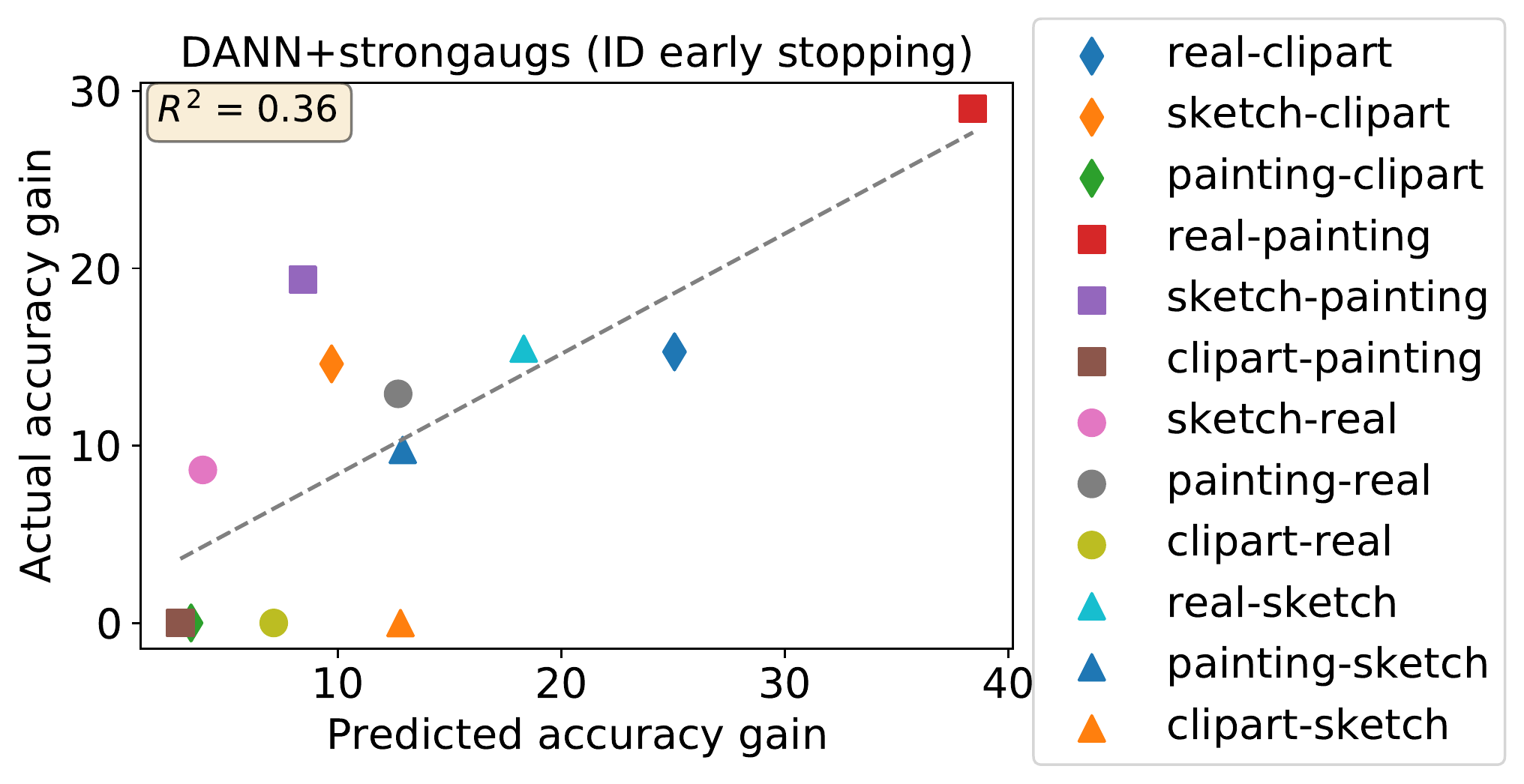}
    \includegraphics[align=c,height=\arxivstyle{2.8}{2.9}cm]{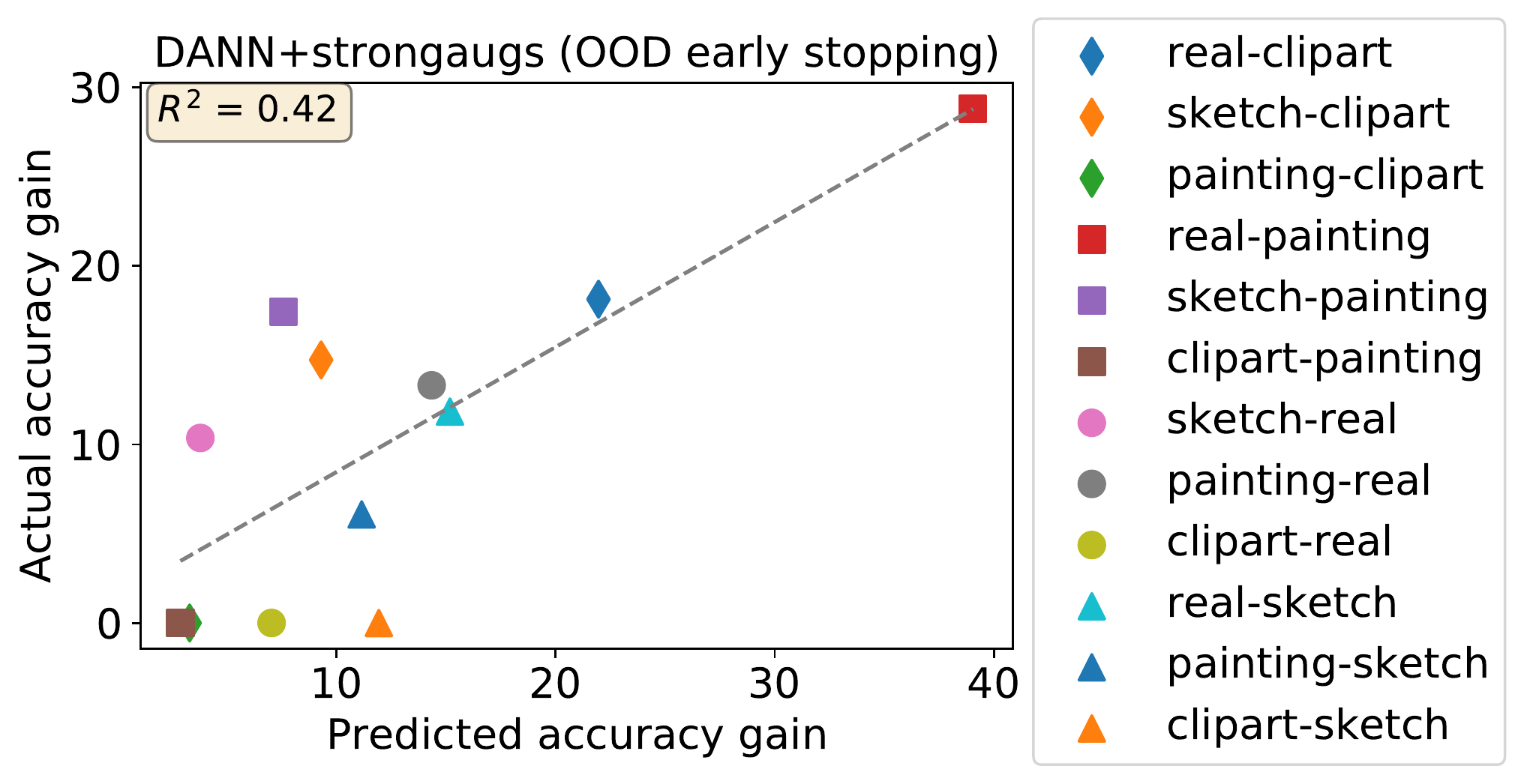} \\
    \includegraphics[align=c,height=\arxivstyle{2.8}{2.9}cm]{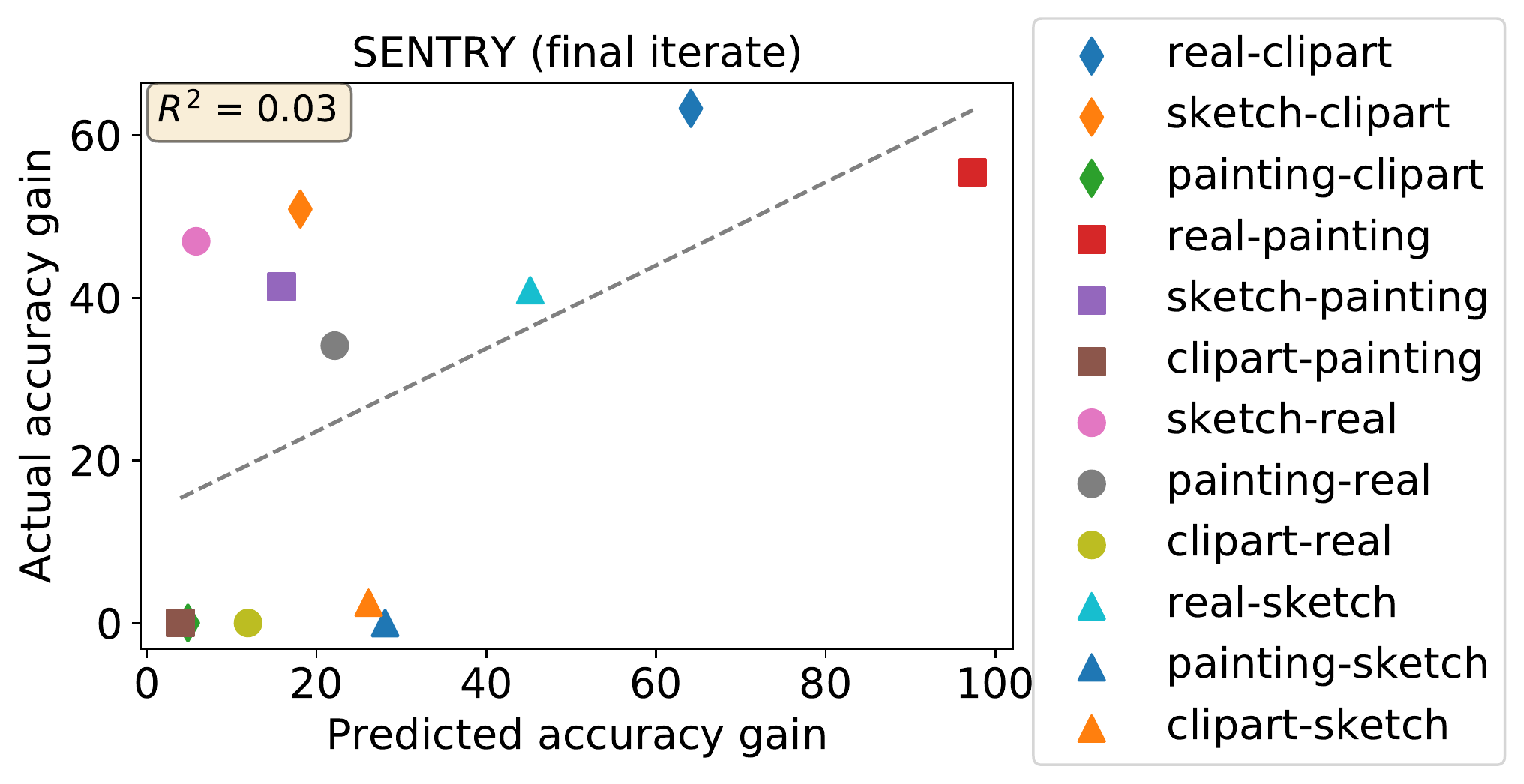}
    \includegraphics[align=c,height=\arxivstyle{2.8}{2.9}cm]{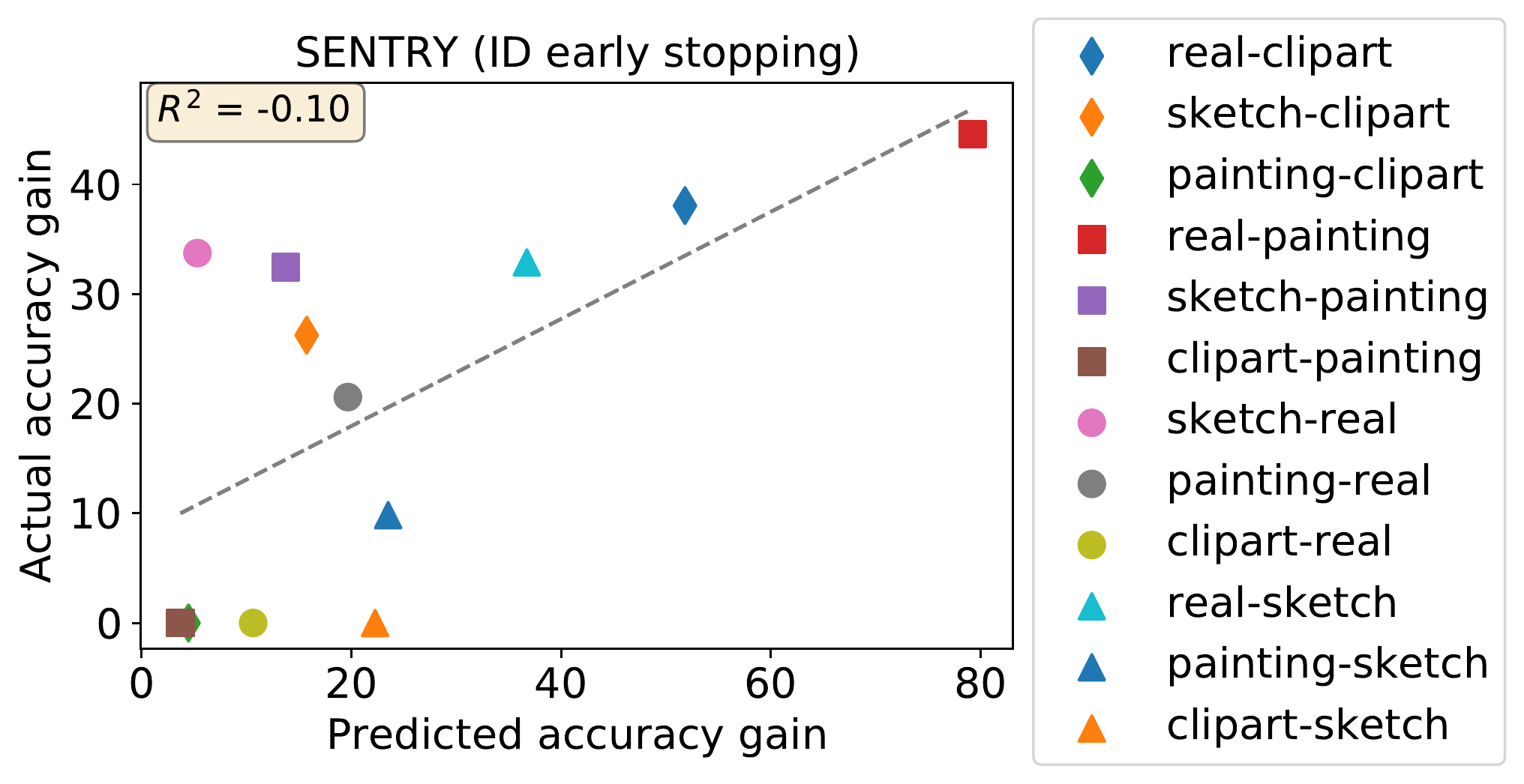}
    \includegraphics[align=c,height=\arxivstyle{2.8}{2.9}cm]{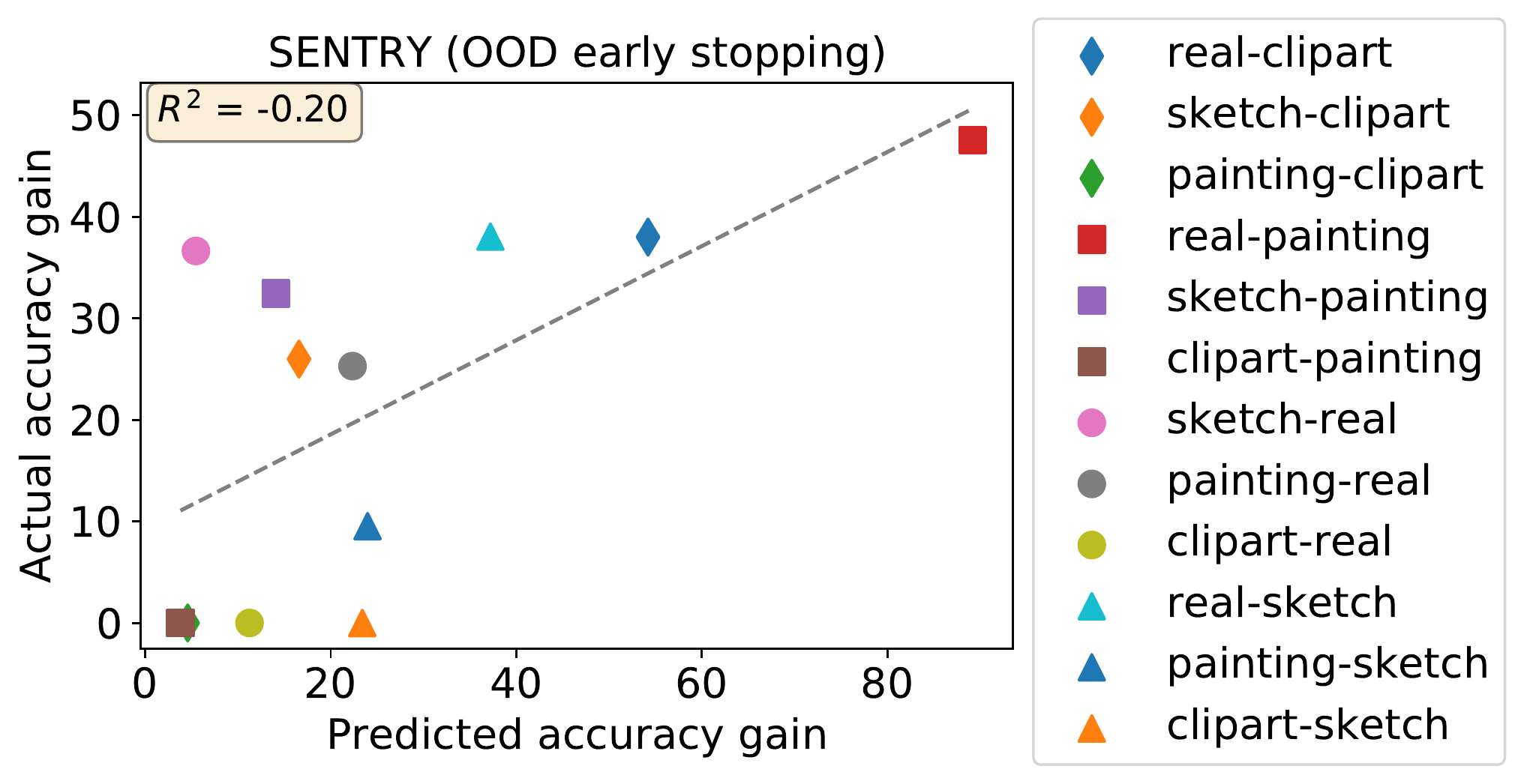} \\
    \includegraphics[align=c,height=\arxivstyle{2.8}{2.9}cm]{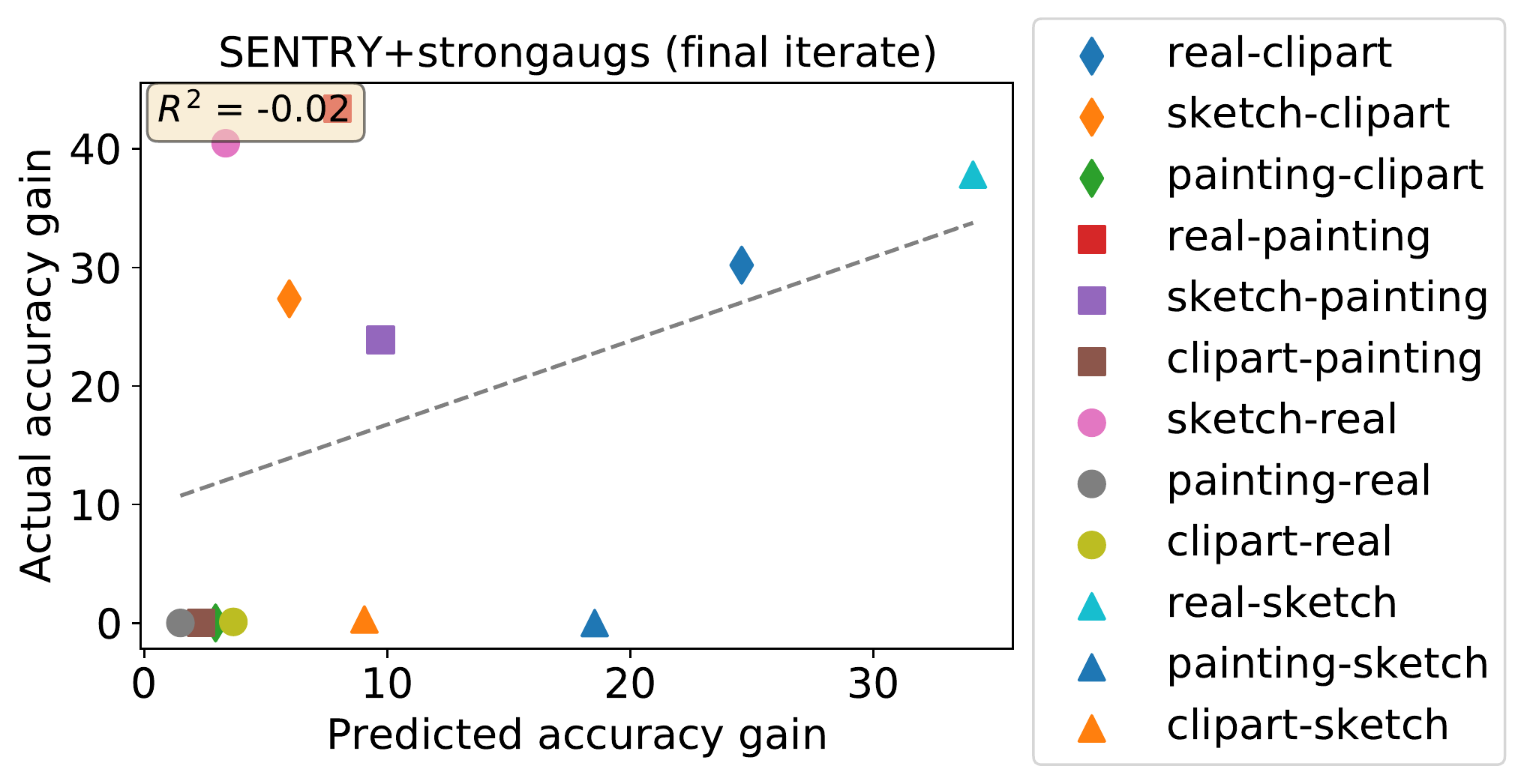}
    \includegraphics[align=c,height=\arxivstyle{2.8}{2.9}cm]{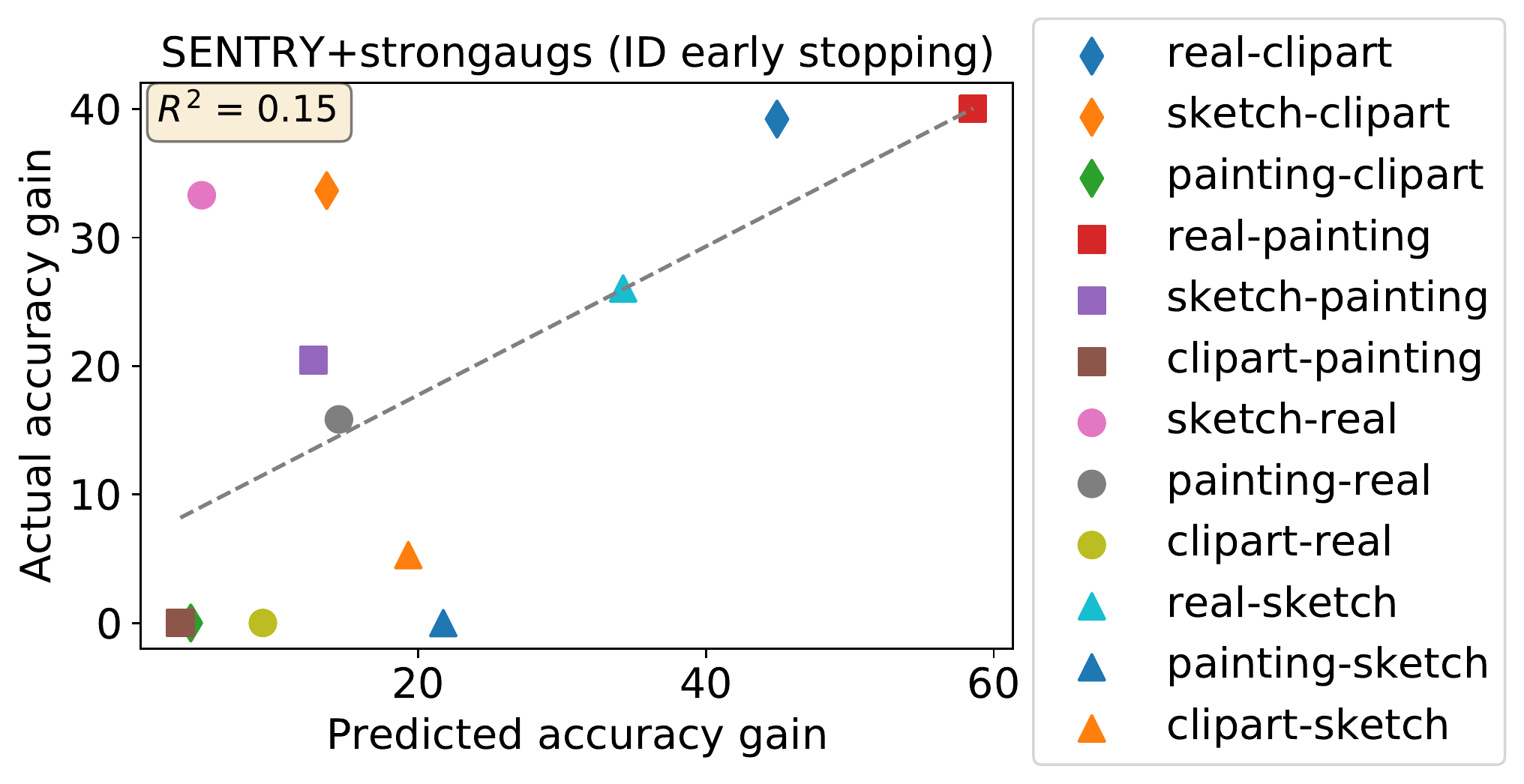}
    \includegraphics[align=c,height=\arxivstyle{2.8}{2.9}cm]{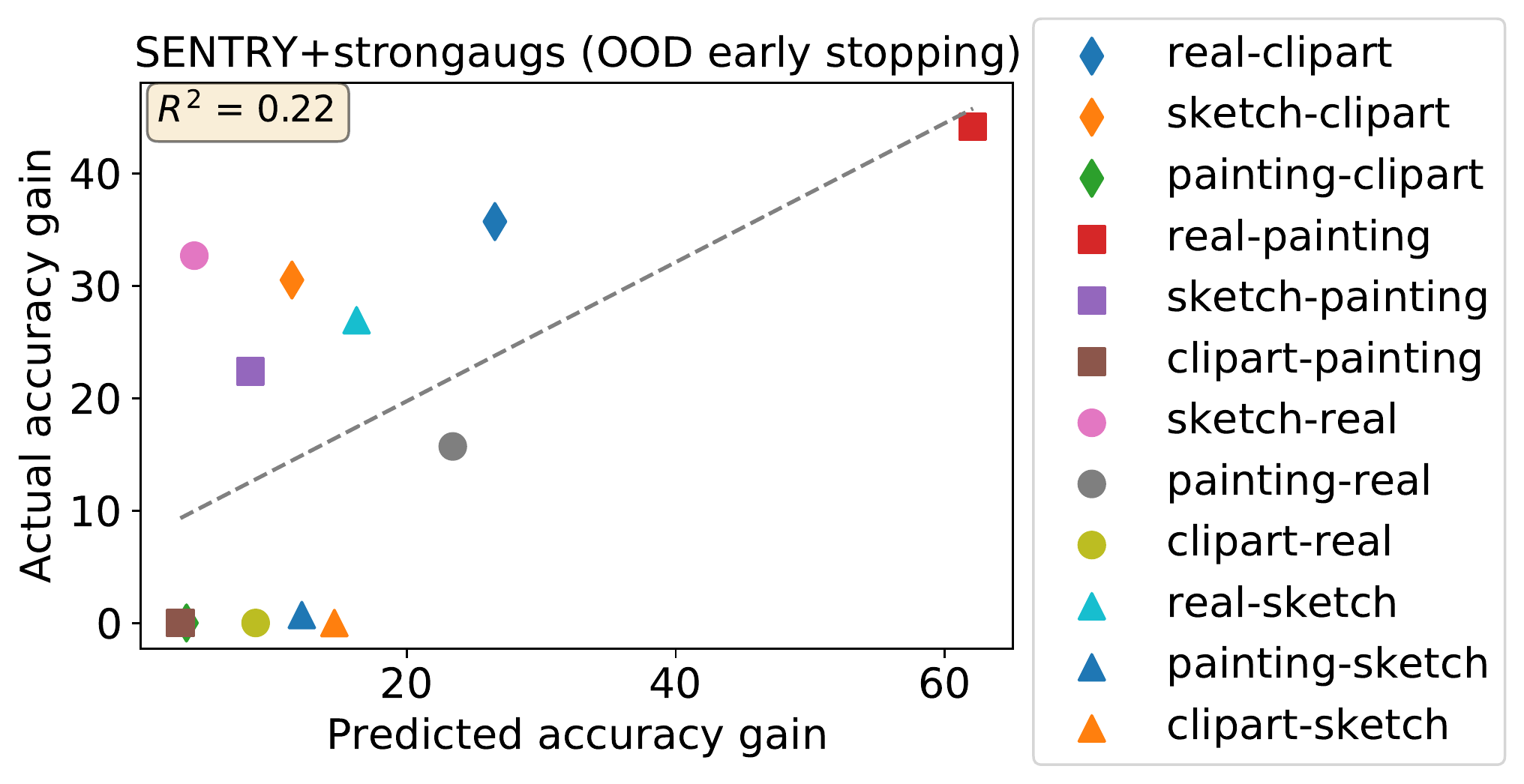}
    \caption{%
    Target accuracies observed vs. accuracies predicted using a function of the connectivity ratios.
    The coefficient of determination ($R^2$) is reported in the upper left corner of each plot.
    The 4 rows contain results of DANN, DANN+\strongaugs, SENTRY, and SENTRY+\strongaugs (in that order).
    The 3 columns contain results from different methods of early stopping: final iterate (i.e., no early stopping), source test, and target test (in that order).
    The connectivity ratio is a somewhat poor predictor of the observed accuracies for DANN and DANN+\strongaugs{} ($R^2 \in [0.23, 0.51]$) and an extremely poor predictor for SENTRY and SENTRY+\strongaugs{} ($R^2 \in [-0.20, 0.22]$).
    Note that because we do not fit an intercept, the computation of the coefficient of determination may produce a negative number.
    The line of best fit between the observed and predicted is also plotted to help visualize the relationship.}
    \label{fig:trend-plots-app-two}
\end{figure*}
\begin{table}[h]
  \centering
\scalebox{0.80}{
\begin{tabular} {l r r r}
    \toprule
    & Across-domain / across-both & Across-class / across-both & Coefficient of \\
    & coefficient ($\wone$) & coefficient ($\wtwo$) & determination ($R^2$) \\
    \midrule
    SwAV & 14.86 & 2.67 & 0.78 \\
    MoCo & 16.07 & 2.65 & 0.79 \\
    MoCo-V3 & 13.51 & 2.79 & 0.60 \\
    \bottomrule
\end{tabular}
}
\caption{%
    Values of $\wone$ and $\wtwo$ fitted to Eq.~\ref{eq:log-connect-ratio}, along with the coefficients of determination ($R^2$) between the observed and predicted target accuracies.
    Accuracies of contrastive pre-training methods (SwAV, MoCo-V2, and MoCo-V3) are well-explained using the connectivity ratios ($R^2$ of 0.78, 0.79, 0.60, respectively).
}
\label{table:app-exponents}
\end{table}

\begin{table}[h]
  \centering
\scalebox{0.80}{
\begin{tabular} {l r r r r r r r r}
\toprule
    & \multicolumn{2}{c}{Source vs. Target} & \multicolumn{2}{c}{Source vs. Domain} & \multicolumn{2}{c}{Target vs. Domain} \\
    \cmidrule(lr){2-3}\cmidrule(lr){4-5}\cmidrule(lr){6-7}
    & Pre-trained & Fine-tuned & Pre-trained & Fine-tuned & Pre-trained & Fine-tuned \\
    \midrule
    Real $\to$ Sketch & 0.200 & 0.286 & 0.016 & 0.011 & 0.019 & 0.020 \\
    Real $\to$ Painting & 0.228 & 0.305 & 0.017 & 0.014 & 0.019 & 0.021 \\
    Real $\to$ Clipart & 0.222 & 0.319 & 0.016 & 0.013 & 0.017 & 0.018 \\
    Sketch $\to$ Real & 0.200 & 0.263 & 0.019 & 0.019 & 0.016 & 0.016 \\
    Sketch $\to$ Painting & 0.176 & 0.254 & 0.015 & 0.016 & 0.021 & 0.021 \\
    Sketch $\to$ Clipart & 0.159 & 0.250 & 0.018 & 0.011 & 0.019 & 0.019 \\
    Painting $\to$ Real & 0.228 & 0.271 & 0.019 & 0.017 & 0.017 & 0.014 \\
    Painting $\to$ Sketch & 0.176 & 0.240 & 0.021 & 0.022 & 0.015 & 0.018 \\
    Painting $\to$ Clipart & 0.138 & 0.233 & 0.019 & 0.016 & 0.024 & 0.020 \\
    Clipart $\to$ Real & 0.222 & 0.277 & 0.017 & 0.018 & 0.016 & 0.010 \\
    Clipart $\to$ Sketch & 0.159 & 0.234 & 0.019 & 0.021 & 0.018 & 0.013 \\
    Clipart $\to$ Painting & 0.138 & 0.237 & 0.024 & 0.022 & 0.019 & 0.016 \\
    \midrule
    Average (DomainNet) & 0.187 & \textbf{0.264} & \textbf{0.018} & 0.017 & \textbf{0.018} & 0.017 \\
    \% change (Fine-tune vs. Pre-train) & & \textbf{+41\%} & & \textbf{-5.5\%} & & \textbf{-5.5\%} \\
    \bottomrule
\end{tabular}
}
    \caption{Cosine similarity of class and domain classifiers trained on SwAV representations (average over all classes) on DomainNet. Class classifiers  trained on the source and target individually learn similar linear weights, evidenced by the average cosine similarity 0.18. However, domain classifiers learn linear weights that are nearly orthogonal to the class classifier weights, suggesting that SwAV pre-training learns features containing domain and class information in somewhat separate directions.}
\label{table:dot-products-app}
\end{table}

\begin{table}[h]
\begin{minipage}{1.0\textwidth}
  \centering
\scalebox{0.72}{
\begin{tabular}{l r r r r}
	\toprule
    \textbf{Input space} & Different class, & Different class, & Same class, & Different class, \\
    & same domain, domain 1 & same domain, domain 2 & different domain & different domain \\
	\midrule
	Real $\leftrightarrow$ Sketch & 24.49 & 38.12 & 21.47 & 20.51 \\
	Real $\leftrightarrow$ Painting & 24.49 & 33.71 & 33.62 & 28.07 \\
	Real $\leftrightarrow$ Clipart & 24.49 & 35.20 & 23.12 & 21.18 \\
	Sketch $\leftrightarrow$ Painting & 38.12 & 33.71 & 24.34 & 23.16 \\
	Sketch $\leftrightarrow$ Clipart & 38.12 & 35.20 & 30.23 & 27.72 \\
    Painting $\leftrightarrow$ Clipart & 33.71 & 35.20 & 31.36 & 30.09 \\
	\midrule
    Avg. & \multicolumn{2}{c}{32.88} & 27.36 & 25.12 \\
	\bottomrule
\end{tabular}
}
\end{minipage}
\vskip.4\baselineskip
\begin{minipage}{1.0\textwidth}
  \centering
\scalebox{0.72}{
\begin{tabular}{l r r r r}
	\toprule
    \textbf{CLIP fine-tuned} & Different class, & Different class, & Same class, & Different class, \\
    \textbf{space} & same domain, domain 1 & same domain, domain 2 & different domain & different domain \\
	\midrule
	Real $\leftrightarrow$ Sketch & 0.72 & 1.85 & 2.71 & 0.52 \\
	Real $\leftrightarrow$ Painting & 0.72 & 1.64 & 5.05 & 0.41 \\
	Real $\leftrightarrow$ Clipart & 0.72 & 1.55 & 6.27 & 0.33 \\
	Sketch $\leftrightarrow$ Painting & 1.85 & 1.64 & 3.58 & 0.54 \\
	Sketch $\leftrightarrow$ Clipart & 1.85 & 1.55 & 5.82 & 0.91 \\
    Painting $\leftrightarrow$ Clipart & 1.64 & 1.55 & 4.05 & 0.51 \\
	\midrule
    Avg. & \multicolumn{2}{c}{1.44} & 4.58 & 0.54 \\
	\bottomrule
\end{tabular}
}
\end{minipage}
\vskip.4\baselineskip
\begin{minipage}{1.0\textwidth}
  \centering
\scalebox{0.72}{
\begin{tabular}{l r r r r}
	\toprule
    \textbf{SwAV pre-trained} & Different class, & Different class, & Same class, & Different class, \\
    \textbf{space} & same domain, domain 1 & same domain, domain 2 & different domain & different domain \\
	\midrule
	Real $\leftrightarrow$ Sketch & 3.08 & 6.92 & 4.73 & 1.74 \\
	Real $\leftrightarrow$ Painting & 3.07 & 7.43 & 11.79 & 2.23 \\
	Real $\leftrightarrow$ Clipart & 3.02 & 6.34 & 8.93 & 1.73 \\
	Sketch $\leftrightarrow$ Painting & 8.50 & 10.65 & 5.55 & 2.66 \\
	Sketch $\leftrightarrow$ Clipart & 8.54 & 8.15 & 8.10 & 3.29 \\
	Painting $\leftrightarrow$ Clipart & 10.22 & 8.40 & 6.13 & 3.12 \\
	\midrule
    Avg. & \multicolumn{2}{c}{7.03} & 7.54 & 2.46 \\
	\bottomrule
\end{tabular}
}
\end{minipage}
\vskip.4\baselineskip
\begin{minipage}{1.0\textwidth}
  \centering
\scalebox{0.75}{
\begin{tabular}{l r r r r}
	\toprule
    \textbf{DANN+\strongaugs{}} & Different class, & Different class, & Same class, & Different class, \\
    \textbf{feature space} & same domain, source & same domain, target & different domain & different domain \\
	\midrule
    Real $\rightarrow$ Sketch & 2.72 & 5.70 & 7.72 & 2.28 \\
    Real $\rightarrow$ Painting & 2.54 & 7.44 & 16.85 & 3.36 \\
    Real $\rightarrow$ Clipart & 2.91 & 5.45 & 10.45 & 1.92 \\
    Sketch $\rightarrow$ Real & 4.54 & 4.72 & 13.32 & 4.30 \\
    Sketch $\rightarrow$ Painting & 4.69 & 9.79 & 14.27 & 5.12 \\
    Sketch $\rightarrow$ Clipart & 4.85 & 5.49 & 17.44 & 4.64 \\
    Painting $\rightarrow$ Real & 7.29 & 3.76 & 20.71 & 4.87 \\
    Painting $\rightarrow$ Sketch & 7.43 & 6.54 & 12.45 & 4.57 \\
    Painting $\rightarrow$ Clipart & 7.90 & 5.90 & 10.76 & 3.63 \\
    Clipart $\rightarrow$ Real & 4.42 & 5.27 & 14.38 & 3.13 \\
    Clipart $\rightarrow$ Sketch & 6.21 & 3.93 & 13.84 & 4.50 \\
    Clipart $\rightarrow$ Painting & 4.42 & 11.69 & 11.52 & 4.32 \\
    \midrule
    Avg. & 4.99 & 6.31 & 13.64 & 3.89 \\
	\bottomrule
\end{tabular}
}
\end{minipage}
\caption{%
    Empirical estimates of the different parameters of connectivity in the input space (top) and feature spaces computed by CLIP (second from top), SwAV (third from top), and DANN+\strongaugs{} (bottom).
    The numbers provided are the error of classifying different types of class-domain pairs.}
\label{table:fullconnectivity}
\end{table}

\end{document}